\documentclass{article}

\usepackage{silence}
\usepackage[margin=1in]{geometry}

\usepackage{amssymb,amsmath,amsthm,bbm,mathtools}
\usepackage{float}
\usepackage{algorithm}
\usepackage{algorithmicx}
\usepackage[noend]{algpseudocode}
\newtheorem{theorem}{Theorem}

\newtheorem{corollary}{Corollary}
\newtheorem{lemma}{Lemma}

\newtheorem{proposition}{Proposition}

\theoremstyle{definition}

\newtheorem{remark}{Remark}
\newtheorem{example}{Example}

\usepackage{chngcntr}
\usepackage{apptools}
\AtAppendix{\counterwithin{theorem}{section}}
\AtAppendix{\counterwithin{proposition}{section}}
\AtAppendix{\counterwithin{corollary}{section}}
\AtAppendix{\counterwithin{lemma}{section}}

\DeclareMathOperator*{\argmin}{arg\,min}
\DeclareMathOperator*{\argmax}{arg\,max}

\newcommand{\E}{\mathbb{E}}
\newcommand{\R}{\mathbb{R}}

\def\gP{{\mathcal{P}}}

\def\gX{{\mathcal{X}}}

\def\sR{{\mathbb{R}}}

\newcommand{\EE}{\mathbb{E}}

\newcommand{\RR}{\mathbb{R}}

\newcommand{\eps}{\varepsilon}

\usepackage[utf8]{inputenc}
\usepackage[T1]{fontenc}
\usepackage{booktabs}
\usepackage{amsfonts}
\usepackage{nicefrac}
\usepackage{microtype}
\usepackage[svgnames,dvipsnames]{xcolor}
\usepackage{graphicx}
\usepackage{subcaption}
\usepackage{multirow}
\usepackage[shortlabels]{enumitem}
\usepackage[normalem]{ulem}
\usepackage{comment}

\definecolor{AquaGreen}{RGB}{20, 116, 111}
\usepackage[
    unicode=true,              
    pdfencoding=auto,          
    psdextra,                  
    bookmarks=true,            
    bookmarksopen=true,        
    colorlinks=true,           
    linkcolor=RoyalBlue,       
    citecolor={blue!50!black}, 
    urlcolor=AquaGreen,
    pdfdisplaydoctitle=true    
]{hyperref}
\usepackage[capitalize,noabbrev]{cleveref}
\usepackage{crossreftools}
\usepackage[xspace]{ellipsis}
\allowdisplaybreaks

\usepackage{booktabs}
\usepackage{ragged2e}
\usepackage{tabularx}

\newcolumntype{Y}{>{\centering\arraybackslash}X}
\newcolumntype{P}{>{\raggedleft\arraybackslash}X}
\usepackage{multirow}
\newcolumntype{C}[1]{>{\centering\arraybackslash}m{#1}}
\newcolumntype{L}[1]{>{\raggedright\arraybackslash}m{#1}}
\usepackage{colortbl}

\def\sset{\mathsf{S}}

\newcommand{\OPT}{\mathrm{OPT}}
\newcommand{\op}{\mathrm{op}}
\newcommand{\OR}{\mathrm{OR}}

\newcommand{\LS}{\mathrm{LS}}
\newcommand{\MSEstrat}{\mathrm{MSE}_{\mathrm{strat}}}
\newcommand{\supp}{\operatorname{supp}}
\newcommand{\diag}{\operatorname{diag}}
\newcommand{\MSE}{\mathrm{MSE}}

\makeatletter
\newcommand{\blfootnote}[1]{\gdef\@thefnmark{}\@footnotetext{#1}}
\makeatother

\usepackage{natbib}
\setcitestyle{authoryear,open={(},close={)}}

\title{Strategic Feature Selection}

\author{Jivat Neet Kaur$^1$, Pratik Patil$^2$, Divya Shanmugam$^3$, \\Emma Pierson$^1$,
Michael I. Jordan$^{1, 7}$, Nika Haghtalab$^1$, \\Meena Jagadeesan$^{4, 5\,}$\footnote{Equal contribution}, Ahmed Alaa$^{1\,*}$, Serena Wang$^{6\,*}$}
\date{\normalsize{$^1$University of California, Berkeley, $^2$University of Texas, Austin, $^3$Cornell Tech, $^4$Stanford University,
$^5$University of Pennsylvania, $^6$Harvard University, $^7$Inria, Paris}}

\begin{document}

\maketitle

\begin{abstract}
When algorithmic predictors inform resource allocation in high-stakes domains such as healthcare, these predictors must account for strategic manipulation of input features.
The typical solution is to redesign the predictor itself to explicitly account for strategic interactions.
In practice, however, decision makers are often constrained to adjusting coarser levers within existing prediction pipelines. 
For example, healthcare organizations often select which features to exclude based on perceived manipulability, while using standard regularization procedures to shrink the coefficients of retained features.
In this work, we initiate a formal study of strategic classification through \emph{feature selection} and its interaction with \emph{ridge regularization}.
Our main finding is that excluding individual features based on their manipulability alone is generally suboptimal.
We provide a fine-grained characterization of the performance of a feature subset under optimal regularization, yielding new insights for policy design. 
Motivated by this characterization, we develop a practical algorithm for jointly choosing the feature set and the level of ridge regularization.
Through a real-world case study on a healthcare payments benchmark, we illustrate how our algorithm can guide the design of coarse policy levers in practice. 
Our results provide a principled, practical framework for mitigating the effects of strategic behavior in algorithmic decision-making systems.
\end{abstract}

\section{Introduction}
\label{sec:intro}

Algorithmic predictions are increasingly used to inform the allocation of benefits and services in high-stakes domains. 
When these systems determine eligibility or allocation amounts from individuals’ \emph{reported features}, they create incentives for strategic behavior. 
In settings such as large-scale government-run resource allocation programs, organizations serving individuals may manipulate reported features in order to increase the predicted allocation. 
For instance, consider the Medicare Advantage program, in which the government pays private insurers to provide coverage for beneficiaries based on patient risk scores~\citep{geruso2020upcoding}. 
In this setting, insurers are known to report more or higher severity diagnoses for their enrollees in order to receive higher payments from the government~\citep{doj2026kaiser}. 
This practice, known as ``upcoding,'' has become increasingly controversial and is projected to cost the government \$40 billion in additional spending in 2025, with no clear benefit to the beneficiaries~\citep{MedPAC2025}.

These types of incentives are captured by the framework of strategic classification~\citep{hardt2016strategic, podimata2025incentive}, which models the strategic interaction between a decision maker who specifies the prediction rule and an agent who can modify features in response to that rule. 
In the Medicare Advantage setting, the agent is an organization that can modify the reported features of the individuals it serves in order to obtain higher payments. Organizations incur costs to manipulate features: for example, in health insurance, insurers pay chart review contractors to identify additional diagnosis codes. 
The typical approach in this framework is to design a prediction rule that is less vulnerable to strategic manipulation.

Much of the strategic classification literature focuses on computing the \textit{strategic optimum}, i.e., prediction rules that are optimal once these strategic responses are taken into account.  
In large-scale policy settings, however, structural redesign faces significant institutional inertia and systems are rarely rebuilt from scratch.
Models are embedded in regulatory and operational workflows, updated incrementally, and substantial departures from standard pipelines tend to be difficult to justify and defend in regulatory rulemaking. 
As a result, decision makers often continue to use legacy prediction pipelines and instead leverage a small set of \emph{coarse policy levers} to address strategic responses.
This raises a different question than the one typically studied in strategic learning: not how to redesign the entire predictor, but how to address strategic behavior using only the coarse levers that institutions are constrained to use. 

In practice, it is common to combat strategic behavior using feature selection together with regularization. 
One common intervention is to \emph{exclude features} that are perceived to be especially manipulable from the model~\citep{bjorkegren2025manipulation, kronick2025are}. 
For example, to counteract the effect of upcoding, Centers for Medicare \& Medicaid Services (CMS) excludes features (i.e., diagnoses) that are at risk of inappropriate coding by health insurers~\citep{CMS2014AdvanceNotice, CMS2024AdvanceNotice}. 
In 2024, this included removing the health conditions corresponding to Protein-Calorie Malnutrition and Angina Pectoris from the payment model to limit the model's sensitivity to higher coding intensity~\citep{CMS2024AdvanceNotice}.
Another intervention is to \emph{use regularization to shrink coefficients} on the retained features, often implemented through constrained regression. 
For instance, CMS recently constrained the coefficients on related diagnoses such as uncomplicated diabetes and diabetes with complications so that they carry the same weight in the payment model~\citep{CMS2024AdvanceNotice}. 
Given that such heuristics are used in practice, this raises a central question: 
\begin{center}
    \emph{How should policymakers design feature selection together with regularization to combat strategic behavior?} 
\end{center}

\subsection{Our contributions}

In this work, we initiate a formal study of strategic learning using \emph{feature selection} and its interaction with \textit{ridge regularization}. 
Our main finding is that excluding individual features based on their manipulability alone is suboptimal: instead, features should be selected \emph{jointly} based on their manipulability and predictability relative to the other available features. 

\paragraph{Theoretical results.}
We theoretically characterize the performance of a feature set under optimally tuned regularization, comparing it to the strategic optimum had the pipeline been redesigned.
First, we establish a lower bound on this performance gap, identifying an irreducible loss that remains no matter which feature set is chosen (\Cref{thm:lower-bound}).
Next, we establish an upper bound that characterizes when a feature subset is near-optimal up to this irreducible gap (\Cref{thm:global-upper}).
Specifically, our characterization reveals that the performance of a feature set is not determined by manipulability alone, but by the joint predictability-manipulability structure of its features and by how effectively regularization can leverage that structure. 
Our results also illustrate special cases where the coarse levers exactly achieve the irreducible gap.

\paragraph{Policy implications.} 
Our theoretical results provide concrete insights for policymakers (Section \ref{sec:implications}). 
First, regularization can change the optimal feature set, so the subset that is best without regularization need not remain best under ridge regression; hence, feature selection and regularization must be tuned jointly (\Cref{fig:decomposition-tradeoff}). 
Second, a highly manipulable feature may still be kept if it is predictive and other retained features have similar manipulability (\Cref{fig:homogeneity-phase-main}). 
Finally, dropping a manipulable feature becomes desirable when a less manipulable proxy is available (\Cref{fig:supp-correlated-proxy}). 
These insights run counter to prevailing policy views that highly manipulable features should simply be excluded and that these two levers can be treated as separate design choices~\citep{cms2024riskadjustment, kronick2025are}.

\paragraph{Algorithmic and empirical results.} 
To solve the corresponding design problem, we develop a practical algorithm for jointly choosing the feature set and regularization level in the strategic setting. 
We propose a two-stage procedure that performs a continuous relaxation of the combinatorial support selection optimization problem, followed by local support refinement (\Cref{sec:algorithm}). 
We complement our theoretical analysis with a healthcare payments case study with simulated upcoding data calibrated to realistic Medicare Advantage coding patterns documented in recent health policy work~\citep{kronick2025are}  (\Cref{sec:experiments}). 
Empirical evaluation shows that our proposed method substantially improves strategic robustness while preserving much of the predictive accuracy of full-support models. 
Notably, our results reveal that high manipulability alone does not necessitate exclusion.

\paragraph{Discussion.} 
Altogether, our results provide a theoretically principled and practical framework for policymakers to design and evaluate feature selection and regularization policies in the presence of strategic behavior. 
They clarify when these coarse levers are sufficient to achieve near-optimal performance under strategic interactions, and how these levers should be designed jointly to optimize for strategic performance. 
At the same time, they highlight an important disconnect between existing heuristic practices and the true design problem, yielding nuanced implications for policy design. 
Finally, these levers offer robustness benefits under cost uncertainty (Section \ref{sec:cost-uncertainty}), illustrating their value as practical tools for strategic decision-making.

\subsection{Related work}

\paragraph{Strategic classification.} 
We build on a growing line of work in strategic classification. These works aim to learn optimal prediction rules when decision subjects can manipulate their features at a cost (e.g., \citep{hardt2016strategic, dong2018strategic,chen2020learning, shavit2020causal,levanon2021strategic, bjorkegren2025manipulation}). 
Closely related to our work, \citet{bjorkegren2025manipulation} study manipulation-robust prediction in a setting similar to ours, under linear decision rules and quadratic manipulation costs.    
These works fall within a broader literature on incentive-aware machine learning (see \citet{podimata2025incentive} for a survey), which also studies strategic behavior from an improvement and causality perspective~\citep{miller2020strategic,haghtalab2020maximizing, kleinberg2020how, shavit2020causal}, considering issues of incentive design when manipulations can lead to genuine improvement in individual outcomes and welfare. 
In contrast with all of these works, we study strategic classification through \emph{coarse policy levers} within a fixed prediction pipeline rather than by redesigning the pipeline to achieve the strategically optimal solution.

Closely related in spirit to our work,~\citet{handina2024understanding} show that in strategic environments, larger or more expressive model classes need not improve or may even worsen equilibrium performance. 
Our contribution is conceptually and mathematically distinct: rather than comparing abstract model classes, we study the concrete and widely used design levers of feature selection and regularization, and characterize the performance of feature selection under optimal regularization.

\paragraph{Policy.} 
Health policy also contends with strategic behavior, where payment models based on recorded diagnoses are used to allocate payments to insurers. 
In this setting, insurers have been shown to strategically engineer recorded diagnoses to increase payment allocations~\citep{doj2026kaiser}. 
The health policy literature has put forth several proposals to combat such behavior.
\citet{rose2016machine} shows that a payment formula with a reduced set of variables can retain much of the predictive performance of richer formulas, potentially reducing incentives for aggressive upcoding. 
More broadly, this literature has proposed feature selection and coefficient restriction in such payment models. 
These proposals suggest, for example, excluding diagnoses added to a patient's health record via chart review~\citep{meyers2021medicare}, incorporating patient survey data~\citep{bellerose2025combining, mcwilliams2025use}, or excluding diagnoses coded most differentially by insurers in order to reduce overpayments~\citep{kronick2025are}. 
Closer to the levers we study,~\citet{mcguire2021} show how constrained regression and variable selection can improve the performance of risk adjustment systems. 
In a similar vein,~\citet{arias2025selecting} proposes a LASSO-based method that balances predictive accuracy against potential adverse incentives, explicitly accounting for heterogeneity in the adverse incentives associated with each variable. 

Related concerns arise beyond health policy in problems where the goal is to allocate limited resources to eligible individuals in order to maximize welfare.
In public economics, proxy means tests are eligibility rules that use easier-to-observe variables such as household characteristics and asset ownership as proxies for income to target social programs to those in most need. 
Strategic misreporting of individuals' characteristics is an important policy concern in such settings~\citep{martinelli2009deception, camacho2011manipulation}. 
Work on proxy means tests emphasizes that the variables used to determine eligibility should be both predictive of welfare and verifiable in practice~\citep{grosh1995proxy}.
\citet{niehaus2013targeting} study how rules for targeting benefits are designed and show that adding more criteria can create a tradeoff between statistical gains and enforceability.

Our work extends this agenda by providing a formal strategic framework for characterizing the performance of feature selection and coefficient restriction decisions, and in doing so contributes new perspectives on longstanding policy issues.

\paragraph{Adversarial robustness.} 
Our work is also related to adversarially robust feature selection and poisoning-robust feature selection~\citep{zhang2016adversarial,xiao2015is}, which show that adversary-aware feature selection can improve robustness to evasion and poisoning attacks. 
Our setting differs, however, in the threat and perturbation models: these papers study exogenous attacks on the data, whereas we study endogenous manipulation induced by the deployed predictor itself.

\section{Model and Preliminaries}
\label{sec:model}

We study the strategic interaction between a decision maker who specifies a predictor and organizations that can modify the reported features of the individuals they serve. 
Building on the common linear specification in strategic learning~\citep{shavit2020causal,kleinberg2020how, harris2022strategic}, the decision maker publishes a predictor $f_{\theta,b}(x)=\theta^\top x + b$ parameterized by $\theta \in \sR^d$ and $b \in \sR$, which maps an individual's feature vector $x \in \gX \subseteq \sR^d$ to a predicted outcome.

We model the population as a distribution $\gP_{X,Y}$ over $(X, Y)$, where $X \in \RR^d$ denotes the feature vector and $Y \in \RR$ denotes the outcome. 
We assume that $X$ and $Y$ have finite second moments, and define the feature covariance matrix $\Sigma := \mathbb E[(X - \E[X])(X - \E[X])^\top] \succ 0$. 
For simplicity, we assume $\E[X] = 0$ and $\E[Y] = 0$. 
This centering is without loss of generality, as nonzero means can be absorbed into the intercept. 

Because our focus is on linear predictors, we define $\theta^*$ to be the coefficient of the best linear projection of $Y$ onto $X$:
\begin{equation}
\label{eq:projection-parameter}
    \theta^* := \argmin_{\theta \in \RR^d}\; \EE[(Y - \theta^\top X)^2].
\end{equation}
Since $\Sigma \succ 0$, this minimizer is unique and is given by $\theta^* = \Sigma^{-1} \EE[X Y]$.
We can therefore write $Y = \theta^{*\top} X + \eps$ where $\eps := Y - \theta^{*\top} X$ is the projection residual.
By the normal equations, the residual is orthogonal to the features: $\EE[X \eps] = 0$.
Under our centering conventions, $\EE[\eps] = 0$ as well.
Let $\sigma^2 := \EE[\eps^2]$.

Following~\citet{hardt2016strategic, bjorkegren2025manipulation}, we assume that the decision maker has access to \emph{unmanipulated} samples from $\gP_{X,Y}$. 
Such data can be obtained from a pre-deployment period, before the predictor $f_{\theta, b}$ is used, or from an alternative policy regime under which organizations are not incentivized to manipulate their features.
Using these unmanipulated samples, $\theta^*$ can be estimated. 

\paragraph{Organization's best response model.}
The organization observes an individual’s features $x$ and chooses an action $a \in \sR^d$, which changes the reported features to $x+a$ at cost $C(a) > 0$.
Adopting the quadratic manipulation costs in earlier work~\citep{shavit2020causal, bjorkegren2025manipulation}, we model the cost function as $C(a) = \frac{1}{2}a^\top H a$, where $H \in \mathbb{R}^{d \times d} \succ 0$ is the cost matrix. 
The matrix $H$ allows manipulation costs to differ across features and to exhibit interactions across coordinates.

Given a published predictor, we assume that organizations \emph{best-respond} to $f_{\theta, b}$ by choosing an action
\begin{equation}
\label{eq:agent_br}
    a^*(x;\theta) \in \argmax_{a \in \sR^d} \; \Bigl\{\theta^\top(x+a)+b-\tfrac12 a^\top H a\Bigr\}.
\end{equation}
From here on, we suppress the dependence on $x$ and write $a^*(\theta)$.

\paragraph{Decision maker's objective.}
The decision maker seeks to predict the true outcome as accurately as possible in the presence of strategic responses, which means minimizing prediction error under the deployed policy. 
Accordingly, the relevant loss is not the usual mean squared error $\mathrm{MSE}(\theta,b) := \EE[(\theta^\top X + b - Y)^2]$, but the mean squared error after the organization best responds:
\begin{equation}
\label{eq:mse-strat}
\mathrm{MSE}_{\mathrm{strat}}(\theta, b) = \mathbb{E}[(\theta^\top (X + a^*(\theta)) + b - Y)^2 ].
\end{equation}

As a benchmark, we use the notion of the \emph{strategic optimum} \citep{hardt2016strategic}, allowing the decision maker to optimize over all coefficients in $\RR^d$ and intercept in $\RR$. The unrestricted strategic optimum is
\begin{equation}
\label{eq:opt}
\OPT:=\min_{\theta\in\R^d,\,b\in\R}\MSEstrat(\theta,b).
\end{equation}

\subsection{Policy levers: Feature selection and regularization}
\label{sec:policy-levers}

Our analysis focuses on two \emph{coarse} policy levers that are leveraged in practice to combat strategic behavior within existing prediction pipelines. 
Throughout this section, the estimators are defined with respect to the unmanipulated distribution $\gP_{X,Y}$.

\paragraph{Feature selection.}
Intuitively, feature selection provides a direct way to limit strategic manipulation by excluding features whose vulnerability to gaming outweighs their predictive value. 
The decision maker chooses a subset of features $\mathsf S \subseteq [d]$ and fits a linear predictor using only $X_{\mathsf S}$. 
Let $\mathsf R=[d]\setminus\mathsf S$ denote the omitted coordinates, and define the coordinate subspace $\Theta(\mathsf S):=\{\theta\in\R^d:\supp(\theta)\subseteq \mathsf S\}$. 
The support-restricted least squares estimator is:
\begin{equation}
\label{eq:subset-ls-compact}
(\theta^{\mathrm{LS}}(\mathsf S), b^{\mathrm{LS}}(\mathsf S))
\in
\argmin_{\theta\in\Theta(\mathsf S), b \in \mathbb{R}}
\ \mathbb E[(Y-\theta^\top X -b)^2].
\end{equation}

\paragraph{Ridge regularization.}
Regularization reduces the magnitude of model coefficients and thereby weakens incentives to manipulate the retained features. 
We study the canonical form of regularization given by ordinary ridge regression with scalar regularization parameter $\lambda \geq 0$. The ordinary ridge estimator is:
\begin{equation}
\label{eq:ordinary-ridge-compact}
(\theta^{\mathrm{OR}}(\lambda), b^{\mathrm{OR}}(\lambda))
\in
\argmin_{\theta\in\sR^d, b \in \mathbb{R}}
\ \E[(Y-\theta^\top X -b)^2]+\lambda\|\theta\|_2^2.
\end{equation}
These two levers are often deployed jointly, with some features dropped entirely and the remaining coefficients regularized.
To capture this combination, we consider the \emph{support-restricted ridge} estimator. 
For a subset $\mathsf S \subseteq [d]$ and regularization level $\lambda \ge 0$, define:
\begin{equation}
\label{eq:subset-ridge-compact}
(\theta^{\mathrm{OR}}(\mathsf S,\lambda), b^{\mathrm{OR}}(\mathsf S,\lambda))
\in
\argmin_{\theta\in\Theta(\mathsf S), b \in \mathbb{R}}
\ \E[(Y-\theta^\top X-b)^2]+\lambda\|\theta\|_2^2.
\end{equation}
This estimator nests the previous two cases:
(i) When $\lambda = 0$, it reduces to the support-restricted least squares estimator: $\theta^{\mathrm{OR}}(\mathsf S,0)=\theta^{\mathrm{LS}}(\mathsf S)$.
(ii) When $\mathsf S=[d]$, it reduces to ordinary ridge: $\theta^{\mathrm{OR}}([d],\lambda)=\theta^{\mathrm{OR}}(\lambda)$.
In our theoretical and empirical results, we will show that combining feature selection with regularization can yield larger gains than using either lever alone.

All three estimators \eqref{eq:subset-ls-compact}, \eqref{eq:ordinary-ridge-compact}, and \eqref{eq:subset-ridge-compact} admit closed-form expressions, as detailed in \Cref{lem:policy-lever-closed-forms}.

\section{Theoretical Results}
\label{sec:theory}

We compare the strategic $\mathrm{MSE}$ achieved by the support-restricted ordinary ridge estimator to the strategic optimum $\OPT$. 
Our goal is to characterize when the coarse policy levers of feature selection and ridge regularization can come close to this benchmark. 
In \Cref{sec:lower-bound}, we establish a lower bound on the optimality gap, which identifies an irreducible gap incurred by our estimator class. 
In \Cref{sec:upper-bound}, we establish an upper bound that gives interpretable conditions under which optimally tuned support-restricted ridge can nearly match $\OPT$ up to this irreducible loss. 

\subsection{Lower bound on the excess strategic MSE}
\label{sec:lower-bound}

We start by characterizing the fundamental limits of these coarse policy levers through a lower bound. 
As a tool for interpretation, we introduce the notation $\widetilde{\OPT}:= \min_{\theta\in\mathbb R^d} \mathrm{MSE}_{\mathrm{strat}}(\theta, 0)$ and $\widetilde{\OPT}(\mathsf S):= \min_{\theta\in\Theta(\mathsf S)} \mathrm{MSE}_{\mathrm{strat}}(\theta, 0)$.

\begin{theorem}[Lower bound on excess strategic MSE]
\label{thm:lower-bound}
For any subset $\mathsf S \subseteq [d]$ and $\lambda \ge 0$,
\begin{equation}
\label{eq:mse-strat-lb}
\mathrm{MSE}_{\mathrm{strat}}(\theta^{\mathrm{OR}}(\mathsf S,\lambda), b^{\mathrm{OR}}(\mathsf S,\lambda))
-
\OPT \ge \widetilde{\OPT} - \OPT.
\end{equation}
\end{theorem}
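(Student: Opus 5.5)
The inequality reduces, after cancelling $\OPT$ on both sides, to
\[
\MSEstrat(\theta^{\OR}(\mathsf S,\lambda), b^{\OR}(\mathsf S,\lambda)) \ge \widetilde{\OPT}.
\]
The plan is to show that this estimator is always a feasible point of the program defining $\widetilde{\OPT}$, up to the role of the intercept. The first step is to solve the best response in \eqref{eq:agent_br} explicitly. The objective is strictly concave in $a$ since $H \succ 0$, and the first-order condition $\theta - Ha = 0$ gives $a^*(\theta) = H^{-1}\theta$. This action does not depend on $x$. It follows that
\[
\MSEstrat(\theta,b) = \EE\bigl[(\theta^\top X + \theta^\top H^{-1}\theta + b - Y)^2\bigr].
\]

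The second step is to compute the intercept of the support-restricted ridge estimator. By \Cref{lem:policy-lever-closed-forms}, or directly from the first-order condition in $b$ applied to \eqref{eq:subset-ridge-compact} with $\EE[X]=0$ and $\EE[Y]=0$, we get $b^{\OR}(\mathsf S,\lambda) = \EE[Y] - \theta^\top\EE[X] = 0$. Hence
\[
\MSEstrat(\theta^{\OR}(\mathsf S,\lambda), b^{\OR}(\mathsf S,\lambda)) = \MSEstrat(\theta^{\OR}(\mathsf S,\lambda), 0).
\]
The estimator's intercept is zero and cannot absorb the strategic shift $\theta^\top H^{-1}\theta$.

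Finally, $\theta^{\OR}(\mathsf S,\lambda) \in \Theta(\mathsf S) \subseteq \RR^d$. It is therefore feasible for the minimization defining $\widetilde{\OPT}$, which gives $\MSEstrat(\theta^{\OR}(\mathsf S,\lambda),0) \ge \widetilde{\OPT}(\mathsf S) \ge \widetilde{\OPT}$. Subtracting $\OPT$ from both sides yields \eqref{eq:mse-strat-lb}.

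The only point needing care is the intercept claim. The ridge penalty in \eqref{eq:subset-ridge-compact} does not act on $b$, so the minimizing $b$ is the unpenalized mean-matching value. The argmin is unique because the objective is strictly convex, using $\Sigma \succ 0$ and $\lambda \ge 0$. Beyond this, the argument is a containment of feasible sets. As an interpretive aside, the gap $\widetilde{\OPT} - \OPT \ge 0$ is the loss from not re-centering the intercept. Writing the loss as a variance term plus the squared mean term $(\theta^\top H^{-1}\theta + b)^2$, the unrestricted optimum sets $b = -\theta^\top H^{-1}\theta$, whereas the fixed pipeline keeps $b=0$.
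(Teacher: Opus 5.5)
Your proposal is correct and follows the paper's proof: you use the zero intercept from \Cref{lem:policy-lever-closed-forms} to reduce to $\MSEstrat(\theta^{\OR}(\mathsf S,\lambda),0)$, then the chain $\MSEstrat(\theta^{\OR}(\mathsf S,\lambda),0)\ge\widetilde{\OPT}(\mathsf S)\ge\widetilde{\OPT}$ from $\Theta(\mathsf S)\subseteq\RR^d$, and finally subtract $\OPT$. The explicit best-response computation is harmless but not needed for the inequality itself.
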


\begin{proof}[Proof sketch.]
Although $\OPT$ is defined by optimizing over both coefficients and an intercept, every estimator considered here is fit using a centered support-restricted ridge program and therefore has intercept $0$ under our centering convention, i.e., $b^{\mathrm{OR}}(\mathsf S,\lambda) = 0$ (see \Cref{lem:policy-lever-closed-forms}).
For a fixed support $\mathsf S$, the estimator $\theta^{\mathrm{OR}}(\mathsf S,\lambda)$ belongs to the class $\Theta(\mathsf S)$, and therefore cannot outperform the support-restricted zero-intercept benchmark $\widetilde{\OPT}(\mathsf S)$.
Moreover, $\widetilde{\OPT}(\mathsf S)$ is monotonically decreasing as the support $\mathsf S$ grows.
\end{proof}

\Cref{thm:lower-bound} shows that the limits of the coarse levers can be characterized by a gap $\widetilde{\OPT} - \OPT$.
This irreducible gap is small whenever the strategic shift induced by the best linear projection is small, for example when the manipulation costs are high in the predictive directions of $\theta^*$: $0\le \widetilde{\OPT}-\OPT \le (\theta^{*\top}H^{-1}\theta^*)^2$ (stated formally in \Cref{prop:irreducible-error-bounds}).

In the remainder of the section, we therefore upper bound the additional loss from support restriction and ridge regularization relative to $\widetilde{\OPT}$; the total excess relative to $\OPT$ is obtained by adding this irreducible gap. 
For notational simplicity, throughout the remainder of the paper we write $\mathrm{MSE}_{\mathrm{strat}}(\theta^{\mathrm{OR}}(\mathsf S,\lambda))$ for $\mathrm{MSE}_{\mathrm{strat}}(\theta^{\mathrm{OR}}(\mathsf S,\lambda), b^{\mathrm{OR}}(\mathsf S,\lambda)) = \mathrm{MSE}_{\mathrm{strat}}(\theta^{\mathrm{OR}}(\mathsf S,\lambda), 0)$, using that $b^{\mathrm{OR}}(\mathsf S,\lambda) = 0$.

\subsection{Upper bound on the excess strategic MSE}
\label{sec:upper-bound}

We now turn to upper bounds on the $\mathrm{MSE}_{\mathrm{strat}}(\theta^{\mathrm{OR}}(\mathsf S,\lambda))$. 
Since $\widetilde{\OPT}$  serves as a lower bound per \Cref{thm:lower-bound}, we bound the gap between $\mathrm{MSE}_{\mathrm{strat}}(\theta^{\mathrm{OR}}(\mathsf S,\lambda))$ and $\widetilde{\OPT}$.  
This upper bound illustrates that feature selection should be based on \emph{relative} manipulability and predictability.

\begin{theorem}[Upper bound on the excess strategic MSE]
\label{thm:global-upper}
For any subset $\mathsf S \subseteq [d]$,
\begin{equation}
\label{eq:mse-strat-ub}
\min_{\lambda \ge 0}\, \mathrm{MSE}_{\mathrm{strat}}(\theta^{\mathrm{OR}}(\mathsf S,\lambda))
- \widetilde{\OPT}
\le
-\;\underbrace{\big(\Gamma([d])-\Gamma(\sset)\big)}_{\text{\footnotesize\itshape manipulability gain}}
+
\underbrace{L_{\mathrm{pred}}(\sset)}_{\text{\footnotesize\itshape predictive loss}}\;
+
\underbrace{C_H(\mathsf S)\,\delta_H(\mathsf S)^2}_{\text{\footnotesize\itshape heterogeneity gap}},
\end{equation}
where the strategic burden $\Gamma$, predictive loss $L_{\mathrm{pred}}$, and heterogeneity defect $\delta_H$ for a support $\mathsf S$ are:
\begin{enumerate}
[nosep,itemsep=2pt,leftmargin=2em]
\item
$
\Gamma(\mathsf S)
:=
\min_{\eta\in\RR^{|\mathsf S|}}
\left\{
(\eta-\theta_{\mathsf S}^{\LS}(\mathsf S))^\top
\Sigma_{\mathsf{SS}}
(\eta-\theta_{\mathsf S}^{\LS}(\mathsf S))
+
(\eta^\top H^{-1}_{\mathsf{SS}}\eta)^2
\right\},
$
\item
$L_{\mathrm{pred}}(\mathsf S)
:=
(\theta^*_{\mathsf R})^\top
\Sigma_{\mathsf R\mid\mathsf S}
\theta^*_{\mathsf R}$
and
$
\Sigma_{\mathsf R\mid\mathsf S}
:=
\Sigma_{\mathsf{RR}}
-
\Sigma_{\mathsf{RS}}\Sigma_{\mathsf{SS}}^{-1}\Sigma_{\mathsf{SR}}$
for $\mathsf R=[d]\setminus\mathsf S$,
\item
$
\delta_H(\mathsf S)
:=
\inf_{\gamma\ge0}
\big\|
\Sigma_{\mathsf{SS}}^{-1/2}
(H^{-1}_{\mathsf{SS}}-\gamma I)
\Sigma_{\mathsf{SS}}^{-1/2}
\big\|_{\op},
$
\end{enumerate}
and $C_H(\mathsf S):=4\,L_H(\mathsf S)\,\|\Sigma_{\mathsf{SS}}^{-1/2} H^{-1}_{\mathsf{SS}} \Sigma_{\mathsf{SS}}^{-1/2}\|_{\op}^2\,\|\Sigma_{\mathsf{SS}}^{1/2}\theta_{\mathsf S}^{\mathrm{LS}}(\mathsf S)\|_2^6$,  $L_H(\mathsf S):=1+6\|\Sigma_{\mathsf{SS}}^{-1/2} H^{-1}_{\mathsf{SS}} \Sigma_{\mathsf{SS}}^{-1/2}\|_{\op}^2\|\Sigma_{\mathsf{SS}}^{1/2}\theta_{\mathsf S}^{\mathrm{LS}}(\mathsf S)\|_2^2$.
\end{theorem}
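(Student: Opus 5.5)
The first step is to make the strategic MSE explicit. With quadratic cost, the best response in \eqref{eq:agent_br} is $a^*(\theta)=H^{-1}\theta$, independent of $x$. Every estimator has zero intercept (\Cref{lem:policy-lever-closed-forms}), and $\E[X]=0$, $\E[X\eps]=0$. Hence
\[
\MSEstrat(\theta,0)=\sigma^2+(\theta-\theta^*)^\top\Sigma(\theta-\theta^*)+(\theta^\top H^{-1}\theta)^2 .
\]
For $\theta\in\Theta(\mathsf S)$ with $\theta_{\mathsf S}=\eta$, the Pythagorean decomposition of the population least-squares problem gives
\[
(\theta-\theta^*)^\top\Sigma(\theta-\theta^*)=(\eta-\theta^{\LS}_{\mathsf S}(\mathsf S))^\top\Sigma_{\mathsf{SS}}(\eta-\theta^{\LS}_{\mathsf S}(\mathsf S))+L_{\mathrm{pred}}(\mathsf S).
\]
Define $F_{\mathsf S}(\eta):=(\eta-\theta^{\LS}_{\mathsf S})^\top\Sigma_{\mathsf{SS}}(\eta-\theta^{\LS}_{\mathsf S})+(\eta^\top A\eta)^2$ with $A:=H^{-1}_{\mathsf{SS}}$. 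Then $\widetilde{\OPT}(\mathsf S)=\sigma^2+L_{\mathrm{pred}}(\mathsf S)+\Gamma(\mathsf S)$. In particular $\widetilde{\OPT}=\sigma^2+\Gamma([d])$, since $L_{\mathrm{pred}}([d])=0$. So the manipulability-gain and predictive-loss terms in \eqref{eq:mse-strat-ub} are exactly $\widetilde{\OPT}(\mathsf S)-\widetilde{\OPT}$. What remains to show is
\[
\min_{\lambda\ge0}F_{\mathsf S}(\eta_\lambda)-\min_\eta F_{\mathsf S}(\eta)\le C_H(\mathsf S)\,\delta_H(\mathsf S)^2,
\qquad \eta_\lambda:=(\Sigma_{\mathsf{SS}}+\lambda I)^{-1}\Sigma_{\mathsf{SS}}\theta^{\LS}_{\mathsf S}.
\]
Here $\eta_\lambda$ is the support-restricted ridge solution.

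I would work in whitened coordinates $u=\Sigma_{\mathsf{SS}}^{1/2}\eta$, with $u_0=\Sigma_{\mathsf{SS}}^{1/2}\theta^{\LS}_{\mathsf S}$ and $\tilde A=\Sigma_{\mathsf{SS}}^{-1/2}A\Sigma_{\mathsf{SS}}^{-1/2}$. Then $F(u)=\|u-u_0\|^2+(u^\top\tilde Au)^2$. The map $u\mapsto(u^\top\tilde Au)^2$ is convex, being the square of a nonnegative convex function. Hence $F$ is $2$-strongly convex, so its minimizer $u^*$ is unique.

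The key structural observation is the following. Fix $\gamma\ge0$ attaining (or nearly attaining) $\delta_H(\mathsf S)$, and set $E:=\tilde A-\gamma\Sigma_{\mathsf{SS}}^{-1}$, so that $\|E\|_{\op}=\delta_H$. Consider the surrogate $F_\gamma(u)=\|u-u_0\|^2+(\gamma\, u^\top\Sigma_{\mathsf{SS}}^{-1}u)^2$, i.e.\ the strategic penalty with $A$ replaced by $\gamma I$. Its first-order condition reads $\Sigma_{\mathsf{SS}}(\eta-\theta^{\LS}_{\mathsf S})+2q\gamma\,\eta=0$ with $q=\gamma\|\eta\|^2$. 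Thus its minimizer is exactly $\eta_\lambda$ with $\lambda=2q\gamma$, so the surrogate optimum lies on the ridge path. I would choose this $\lambda$, write $\hat u$ for the corresponding point, and bound $F(\hat u)-F(u^*)$.

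The bound uses gradient vanishing plus smoothness at $u^*$, together with strong convexity.

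\emph{Localization.} Both $u^*$ and $\hat u$ minimize $\|u-u_0\|^2$ plus a nonnegative penalty vanishing at $0$, so $\|u^*\|,\|\hat u\|\le\|u_0\|$. On that ball, the Hessian of the quartic term, $8\tilde Auu^\top\tilde A+4(u^\top\tilde Au)\tilde A$, has operator norm at most $12\|\tilde A\|^2\|u_0\|^2$. Hence the Hessian of $F$ is bounded by $2L_H(\mathsf S)$, and
\[
F(\hat u)-F(u^*)\le L_H\|\hat u-u^*\|^2 .
\]

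\emph{Distance.} By strong convexity, $\|\hat u-u^*\|\le\tfrac12\|\nabla F(\hat u)\|$. Since $\nabla F_\gamma(\hat u)=0$, we have $\nabla F(\hat u)=\nabla F(\hat u)-\nabla F_\gamma(\hat u)$, which is the difference of $4(u^\top Mu)Mu$ evaluated at $M=\tilde A$ and at $M=\gamma\Sigma_{\mathsf{SS}}^{-1}$. Expanding this difference produces terms each linear in $E$, with the other factors bounded by $\|\tilde A\|$ (or $\gamma$, which is comparable) and $\|u_0\|^3$. This should give $\|\nabla F(\hat u)\|\lesssim\|\tilde A\|\,\|u_0\|^3\,\delta_H$. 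Squaring yields $C_H\delta_H^2$ with $C_H\propto L_H\|\tilde A\|^2\|u_0\|^6$, matching the stated form.

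\emph{Main obstacle.} The delicate step is the gradient-difference estimate with exactly these constants. My first worry is that the surrogate involves $\gamma\Sigma_{\mathsf{SS}}^{-1}$ rather than $\tilde A$, and $\gamma$ must be controlled by $\|\tilde A\|$ so that no stray $\|\Sigma_{\mathsf{SS}}^{-1}\|$ factor appears. If this whitened surrogate creates such factors, I would instead define the surrogate directly in $\eta$-coordinates with penalty $(\gamma\|\eta\|^2)^2$. I would then rewrite $\eta^\top A\eta-\gamma\|\eta\|^2=u^\top Eu$, so that every difference is expressed through $E$ and $\|u\|\le\|u_0\|$; this keeps the constants purely in terms of $\|\tilde A\|_{\op}$ and $\|u_0\|$. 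The case where the infimum over $\gamma$ is not attained is handled by a limiting argument.
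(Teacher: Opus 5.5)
\paragraph{Where the proposal falls short.} Your reduction (strategic MSE identity, Pythagorean split, $\widetilde{\OPT}(\mathsf S)=\sigma^2+L_{\mathrm{pred}}(\mathsf S)+\Gamma(\mathsf S)$) is the paper's argument, and so is your curvature step $F(\hat u)-F(u^*)\le L_H(\mathsf S)\|\hat u-u^*\|_2^2$. The gap is in the distance step you flagged. The isotropic-surrogate plus strong-convexity route does not give the constant $C_H(\mathsf S)$ that the theorem fixes. Write $M:=\gamma\Sigma_{\mathsf{SS}}^{-1}$ and $E=\tilde A-M$. Then
\[
\nabla F(\hat u)=\nabla F(\hat u)-\nabla F_\gamma(\hat u)=4\bigl[(\hat u^\top\tilde A\hat u)\,E\hat u+(\hat u^\top E\hat u)\,M\hat u\bigr],
\]
so you obtain $\|\hat u-u^*\|_2\le 2\|u_0\|_2^3\,\delta_H(\mathsf S)\bigl(\|\tilde A\|_{\op}+\|M\|_{\op}\bigr)$.

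The problem is that $\|M\|_{\op}$ is only controlled by $\|\tilde A\|_{\op}+\delta_H(\mathsf S)$, and it can exceed $\|\tilde A\|_{\op}$. For example, take $\Sigma_{\mathsf{SS}}=\diag(1,1/3)$ and $H^{-1}_{\mathsf{SS}}=\Sigma_{\mathsf{SS}}$. Then $\tilde A=I$, the optimal $\gamma$ is $1/2$, $\delta_H=1/2$, and $\|M\|_{\op}=3/2$. Your argument therefore delivers $C_H(\mathsf S)\delta_H(\mathsf S)^2$ multiplied by $(1+\|M\|_{\op}/\|\tilde A\|_{\op})^2$. That factor is only bounded by $9$, and it equals $6.25$ in this example. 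So you prove a weaker inequality, not the stated one. Your fallback surrogate in $\eta$-coordinates, with penalty $(\gamma\|\eta\|_2^2)^2$, is the same function $F_\gamma$, so its gradient still carries $M$.

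\paragraph{The missing idea.} Anchor the ridge level at the oracle itself rather than at a surrogate. The first-order condition defining $\Gamma(\mathsf S)$ shows the oracle is a generalized ridge:
\[
(\Sigma_{\mathsf{SS}}+\alpha_{\mathsf S}H^{-1}_{\mathsf{SS}})\eta^\star_{\mathsf S}=\Sigma_{\mathsf{SS}}\theta^{\LS}_{\mathsf S}(\mathsf S),
\qquad
\alpha_{\mathsf S}=2\eta^{\star\top}_{\mathsf S}H^{-1}_{\mathsf{SS}}\eta^\star_{\mathsf S}.
\]
Choose $\lambda=\alpha_{\mathsf S}\gamma$ and subtract the two linear normal equations in whitened coordinates. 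This gives the exact identity
\[
u_\lambda-u^*=\alpha_{\mathsf S}\,(I+\lambda\Sigma_{\mathsf{SS}}^{-1})^{-1}E\,u^* .
\]
Hence $\|u_\lambda-u^*\|_2\le\alpha_{\mathsf S}\,\delta_H(\mathsf S)\|u^*\|_2\le 2\|\tilde A\|_{\op}\|u_0\|_2^3\,\delta_H(\mathsf S)$, using $\alpha_{\mathsf S}=2u^{*\top}\tilde Au^*\le2\|\tilde A\|_{\op}\|u^*\|_2^2$. Here $E$ acts linearly on $u^*$ and is multiplied only by the scalar $\alpha_{\mathsf S}$. So no $\|M\|_{\op}$ term and no strong-convexity step arise, and your curvature bound then gives exactly $C_H(\mathsf S)\delta_H(\mathsf S)^2$.

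\paragraph{Two smaller points.} First, the infimum defining $\delta_H$ is attained, because $\gamma\mapsto\|\tilde A-\gamma\Sigma_{\mathsf{SS}}^{-1}\|_{\op}$ is continuous and coercive on $[0,\infty)$; no limiting argument is needed. Second, $\|\hat u\|_2\le\|u_0\|_2$ needs the resolvent form $\hat u=(I+\lambda\Sigma_{\mathsf{SS}}^{-1})^{-1}u_0$, or nonexpansiveness of the proximal map of a convex penalty minimized at $0$. A nonnegative penalty vanishing at $0$ alone only gives $\|\hat u-u_0\|_2\le\|u_0\|_2$.
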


\Cref{thm:global-upper} gives a fine-grained characterization of the performance of a feature set under optimal regularization.
Note here we write $\theta_{\mathsf S}^{\LS}(\mathsf S) := \Sigma_{\mathsf{SS}}^{-1}\Sigma_{\mathsf S y}$ for the retained coordinates of the support-restricted least squares estimator $\theta^{\LS}(\mathsf S)=(\theta_{\mathsf S}^{\LS}(\mathsf S),0_{\mathsf R})$, where $\Sigma_{\mathsf S y}:=\E[X_{\mathsf S}Y] = \Sigma_{\mathsf{SS}}\theta^*_{\mathsf S} + \Sigma_{\mathsf{SR}}\theta^*_{\mathsf R}$ for $\mathsf R=[d]\setminus\mathsf S$.

The first term, $-(\Gamma([d])-\Gamma(\sset))$, is the \emph{manipulability gain}. 
For a given feature set, $\Gamma(\sset)$ captures how much performance is lost because the predictor must move away from the best-fitting coefficients on $\mathsf S$ in order to account for manipulability: the first term $(\eta-\theta_{\mathsf S}^{\LS}(\mathsf S))^\top \Sigma_{\mathsf{SS}} (\eta-\theta_{\mathsf S}^{\LS}(\mathsf S))$ measures the fit lost by deviating from the least squares fit on $\mathsf S$, and the second term $(\eta^\top H^{-1}_{\mathsf{SS}}\eta)^2$ measures the loss due to manipulability.
Thus, manipulability gain is the \emph{strategic burden} relieved by restricting attention to $\mathsf S$ instead of the full feature set $[d]$. 
This reflects the existing intuition that excluding manipulable features lowers the strategic $\mathrm{MSE}$. 
As we observe, however, manipulability gain alone does not determine excess strategic $\mathrm{MSE}$. The performance of a feature set depends on the joint prediction-manipulation structure.

The second term, $L_{\mathrm{pred}}(\sset)$, is the usual non-strategic \emph{predictive loss} from omitting features. 
It is small when the omitted features are redundant given the retained support.
More precisely, it measures the predictive signal in the excluded coordinates $\mathsf R$ that cannot be linearly recovered from $\sset$. 

The third term, $C_H(\mathsf S)\,\delta_H(\mathsf S)^2$, is the \emph{heterogeneity gap} on feature set $\mathsf S$. 
The quantity $\delta_H(\sset)$ is small when the features in $\mathsf S$ have sufficiently homogeneous manipulation costs (i.e., $H^{-1}_{\mathsf{SS}}$ is well approximated by a scalar multiple of the identity). 
In this regime, a single regularization level can shrink the retained coefficients in a coordinated way, so optimally tuned ridge regularization reduces the heterogeneity gap and thereby lowers excess strategic $\mathrm{MSE}$. \Cref{fig:decomposition-tradeoff} illustrates the three terms.

\begin{figure}[!t]
    \centering
    \includegraphics[width=0.9\linewidth, trim=0cm 0.5cm 0cm 0.5cm, clip]{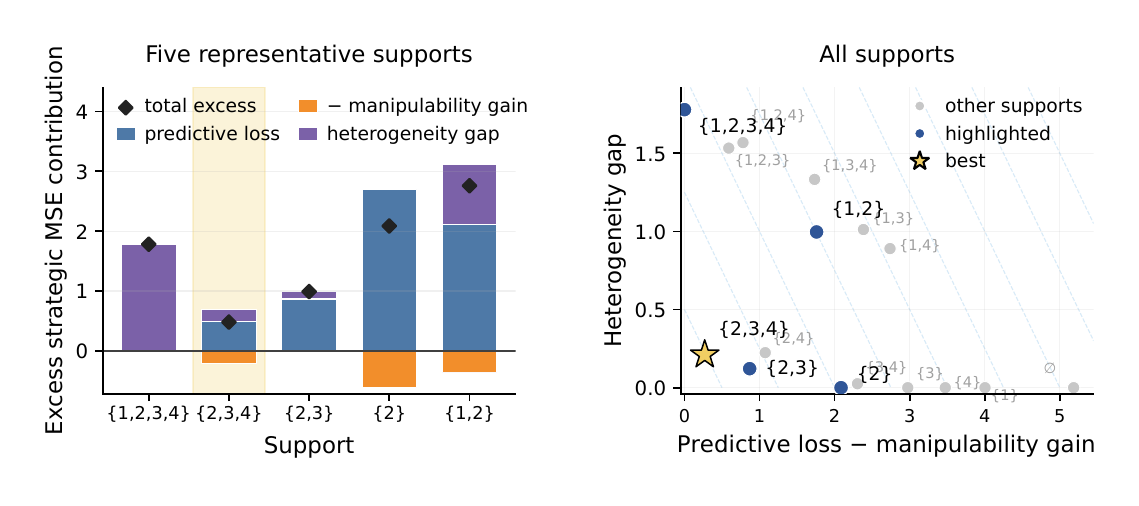}
    \caption{
    \textbf{The best subset depends on the joint predictability-manipulability structure.} 
    In a $d = 4$ example, feature $1$ is highly predictive but also highly manipulable. 
    Feature $2$ is strongly correlated with feature $1$ and therefore acts as a less manipulable proxy. 
    Features $3$, $4$ are less manipulable features with moderate predictive value.
    (Left) Excess strategic MSE decomposition for five representative supports.
    (Right) All supports plotted by predictive loss minus manipulability gain and heterogeneity gap; dashed lines indicate equal excess strategic $\MSE$.
    The optimal set $\{2,3,4\}$ is neither the full support nor the support with the largest manipulability gain, but the one that best balances predictive loss, manipulability gain, and cost heterogeneity. See Appendix~\ref{sec:fig:decomposition-tradeoff-details} for details.
    }
    \label{fig:decomposition-tradeoff}
\end{figure}

The complete proof for \Cref{thm:global-upper} is deferred to Appendix~\ref{sec:proof-upper}. 
Here, we provide the main intuition through a proof sketch.

\begin{proof}[Proof sketch.]
There are two steps.
First, for any fixed $\lambda$, we decompose the excess strategic $\MSE$ by adding and subtracting the benchmark $\widetilde{\OPT}(\mathsf S)$:
\begin{align*}
\mathrm{MSE}_{\mathrm{strat}}(\theta^{\mathrm{OR}}(\mathsf S,\lambda)) - \widetilde{\OPT} = \underbrace{\Bigl(\widetilde{\OPT}(\mathsf S)-\widetilde{\OPT}\Bigr)}_{\text{\footnotesize\itshape support restriction cost}} + \underbrace{\Bigl(\mathrm{MSE}_{\mathrm{strat}}(\theta^{\mathrm{OR}}(\mathsf S,\lambda))-\widetilde{\OPT}(\mathsf S)\Bigr)}_{\text{\footnotesize\itshape scalar-ridge expressivity cost}}.
\end{align*}
The support restriction identity in \Cref{prop:support-restriction} gives $\widetilde{\OPT}(\mathsf S)-\widetilde{\OPT} = L_{\mathrm{pred}}(\mathsf S) - (\Gamma([d])-\Gamma(\mathsf S))$.
Second, for any fixed support $\mathsf S$, we control the scalar-ridge expressivity cost.
\Cref{lem:fixed-support-oracle-gr} shows that the benchmark $\widetilde{\OPT}(\mathsf S)$ corresponds to a generalized ridge solution with penalty proportional to $H^{-1}_{\mathsf{SS}}$, whereas ordinary ridge uses a scalar penalty $\lambda I$.
The fixed-support approximation bound in \Cref{thm:fixed-support} then gives the bound $\min_{\lambda \ge 0} \mathrm{MSE}_{\mathrm{strat}}(\theta^{\mathrm{OR}}(\mathsf S,\lambda)) - \widetilde{\OPT}(\mathsf S) \le C_H(\mathsf S)\delta_H(\mathsf S)^2$.
\end{proof}

\paragraph{When are the coarse levers near-optimal?}
\Cref{thm:global-upper} gives an immediate $\epsilon$-optimality condition.
If there exists a support $\mathsf S$ such that $L_{\mathrm{pred}}(\mathsf S) - (\Gamma([d])-\Gamma(\mathsf S)) + C_H(\mathsf S)\delta_H(\mathsf S)^2 \le \epsilon$, then optimally tuned support-restricted ridge satisfies $\min_{\mathsf T\subseteq[d],\,\lambda\ge0} \mathrm{MSE}_{\mathrm{strat}}(\theta^{\mathrm{OR}}(\mathsf T,\lambda)) \le \widetilde{\OPT}+\epsilon$.
Thus, relative to the unrestricted strategic optimum, the coarse levers are near-optimal up to the irreducible gap $\widetilde{\OPT}-\OPT$ whenever there exists a support with little predictive loss, sufficient reduction in strategic burden, and sufficiently homogeneous retained manipulation costs.

The heterogeneity gap can vanish in natural cases: if the retained inverse-cost matrix is isotropic, then scalar ridge exactly attains the support-restricted zero-intercept oracle.

\begin{example}[Isotropic cost model] 
\label{ex:isotropic-cost}
Consider an isotropic cost model in which all features have the same manipulability. Formally, for any support $\mathsf{S} \subseteq [d]$, $H^{-1}_{\mathsf{SS}}=\gamma I$ for some $\gamma \ge 0$. 
Therefore, on any support contained in $[d]$, the retained manipulation costs are homogeneous, so scalar ridge is exactly expressive:
\[
\min_{\lambda\ge 0}
\mathrm{MSE}_{\mathrm{strat}}\bigl(\theta^{\OR}(\mathsf S,\lambda)\bigr)
=
\widetilde{\OPT}(\mathsf S),
\qquad
\mathsf S\subseteq [d].
\]
\end{example}

Thus a highly manipulable group need not be dropped if it is predictive and internally homogeneous; it can instead be retained and regularized. A formal statement for the isotropic cost model is given in Appendix~\ref{sec:zero-ridge-gap-isotropy}.

\begin{remark}
\Cref{ex:isotropic-cost} shows that the heterogeneity gap is exactly zero under an isotropic cost model on the retained support. 
However, isotropy is not necessary for exact optimality: a strengthened version of the upper bound (derived in Appendix~\ref{sec:necessary-sufficient-ridge-optimality}) yields a necessary-and-sufficient characterization, showing that the heterogeneity gap is zero if and only if the support-restricted oracle lies in an eigenspace of the inverse-cost matrix on the retained support. 
\end{remark}

The isotropic case in \Cref{ex:isotropic-cost} is useful but idealized.
A more realistic idealization is a two-level inverse-cost model in which one group of features is much easier to manipulate than the rest.

\begin{example}[Two-level cost model]
\label{ex:one-infty-inverse-cost}
Partition the features as $[d] = {\mathsf S}_m \cup {\mathsf S}_b$, where ${\mathsf S}_m$ denotes a highly manipulable group and ${\mathsf S}_b$ denotes a baseline homogeneous group.
For simplicity, suppose that in this example we work in whitened coordinates, so that $\Sigma = I$, and assume
\[
H^{-1}
=
\begin{pmatrix}
\kappa I_{{\mathsf S}_m} & 0\\
0 & I_{{\mathsf S}_b}
\end{pmatrix},
\qquad
\kappa>1.
\]
The limiting regime $\kappa\to\infty$ corresponds to a manipulable group with zero manipulation cost.
\end{example}

\Cref{ex:one-infty-inverse-cost} separates two effects.
On any support contained entirely in ${\mathsf S}_m$ or entirely in ${\mathsf S}_b$, the retained manipulation costs are fully homogeneous, so scalar ridge is exactly expressive:
\[
\min_{\lambda\ge 0}
\mathrm{MSE}_{\mathrm{strat}}\bigl(\theta^{\OR}(\mathsf S,\lambda)\bigr)
=
\widetilde{\OPT}(\mathsf S),
\qquad
\mathsf S\subseteq {\mathsf S}_m
\text{ or }
\mathsf S\subseteq {\mathsf S}_b.
\]
By contrast, any support that contains variables from both groups has two distinct inverse-cost levels.
In the whitened coordinates above:
\[
\delta_H(\mathsf S)
=
\begin{cases}
0, & \mathsf S\subseteq {\mathsf S}_m \text{ or } \mathsf S\subseteq {\mathsf S}_b,\\
(\kappa-1)/2, & \mathsf S\cap{\mathsf S}_m\neq\emptyset
\text{ and }
\mathsf S\cap{\mathsf S}_b\neq\emptyset.
\end{cases}
\]
Thus, as $\kappa$ grows, the gap between the inverse-cost levels on $\mathsf S_m$ and $\mathsf S_b$ grows.
On a mixed support, the oracle would shrink the highly manipulable features more aggressively than the baseline features, but scalar ridge has only one shrinkage parameter and cannot match both levels well.
Hence, the heterogeneity gap increases with $\kappa$. A formal statement for the two-level inverse-cost family is given in Appendix~\ref{app:two-level-inverse-cost}.

\begin{remark}
\Cref{ex:one-infty-inverse-cost} illustrates why feature selection can become the more important lever when heterogeneity increases.
If the highly manipulable group ${\mathsf S}_m$ is sufficiently predictive and internally homogeneous, the decision maker may retain it and regularize it aggressively.
If its predictive value is small, or if its signal can be substituted by the baseline group ${\mathsf S}_b$, the decision maker may instead exclude ${\mathsf S}_m$.
\end{remark}

The main message is that highly manipulable features should not always be dropped; rather, the optimal response depends jointly on their predictiveness, their substitutability by less manipulable features, and the homogeneity of the retained manipulation costs.

\section{Policy Implications for Feature Selection under Optimal Regularization}
\label{sec:implications}

Our characterization yields practical implications for policymakers seeking to make algorithmic decision-making systems more robust to strategic behavior using these coarser levers.

\begin{figure}[!t]
    \centering
    \includegraphics[width=0.99\linewidth]{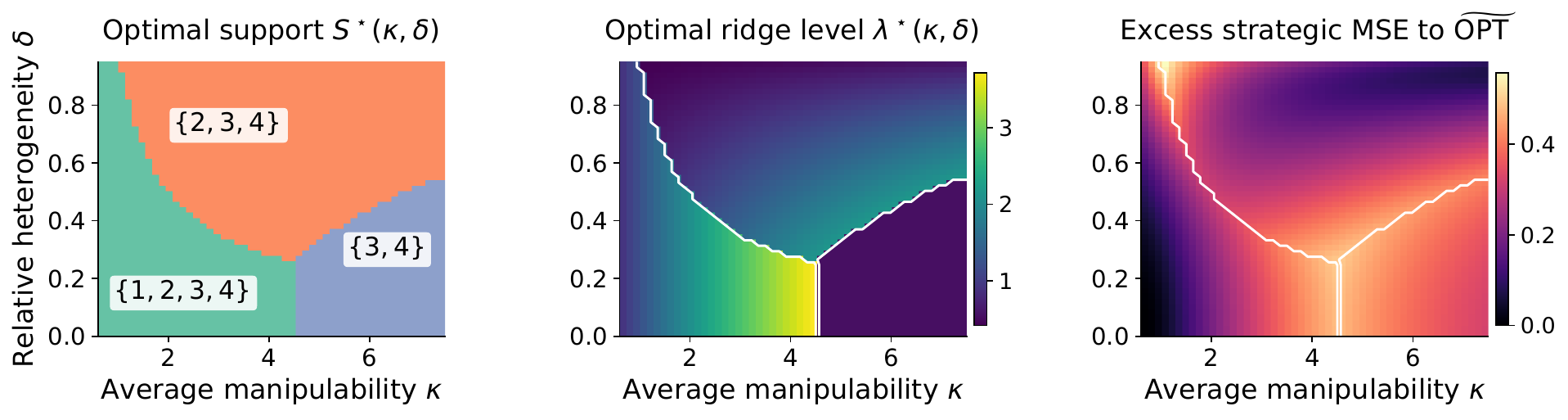}
    \caption{
    \textbf{Manipulable feature groups with homogeneous costs can be retained and regularized.} We compute the optimal support and regularization level exactly for a $d=4$ example with diagonal costs. 
    Features $\{1,2\}$ are a manipulable pair with average manipulability $\kappa$ and relative cost heterogeneity $\delta$ within the pair; features $3$ and $4$ have low manipulability. We vary $\kappa$ on the $x$-axis and $\delta$ on the $y$-axis.
    (Left) The exact optimal support $\mathsf{S}^{\star}(\kappa,\delta)$ after joint optimization over $\mathsf S$ and $\lambda$.
    (Middle) The corresponding optimal ridge level $\lambda^{\star}(\kappa,\delta)$, where white contours indicate support boundaries.
    (Right) Excess strategic MSE relative to {\small $\widetilde{\OPT}$}.
    When the manipulable pair is homogeneous ($\delta$ small), it is often retained even at high manipulability, with the optimal response being to increase regularization. The resulting support-restricted ridge estimator remains close to $\widetilde{\OPT}$ over a broad range.
    See Appendix~\ref{sec:fig:homogeneity-phase-main-details} for details.
    }
    \label{fig:homogeneity-phase-main}
\end{figure}

\paragraph{Feature selection and regularization must be tuned jointly.}
The support $\mathsf S$ appears in all three terms of \Cref{thm:global-upper}.
Changing $\mathsf S$ changes the predictive loss, the strategic burden, and the cost heterogeneity of the retained features.
Therefore, the best feature set cannot generally be determined by ranking features only by predictive value or 
manipulability.
In particular, regularization changes which subset is optimal, so feature selection and ridge tuning must be designed jointly (\Cref{fig:decomposition-tradeoff}).

\paragraph{Predictive manipulable groups can be retained when their manipulation costs are homogeneous.}
Contrary to common intuition, our characterization shows that high manipulability alone does not imply that a predictive feature group should be excluded (see \Cref{fig:homogeneity-phase-main}). 
When the subgroup is sufficiently predictive and its manipulation costs are sufficiently homogeneous, so that $\delta_H(\mathsf S)$ is small, scalar ridge can shrink the group. 
In this regime, the optimal response is often to retain the subgroup and increase regularization, rather than remove it outright. 
Feature exclusion becomes the preferred lever if the subgroup is insufficiently predictive, or its manipulation geometry is too heterogeneous for a single scalar parameter to regularize. 

This suggests a broader policy implication: when certain features cannot be excluded because of legal or institutional constraints, the appropriate response need not be all-or-nothing exclusion. 
Rather, the decision maker can use interventions such as targeted audits, verification, or tighter reporting requirements to raise or equalize manipulation costs on those features, making those features easier to control through regularization.

\paragraph{Less manipulable correlated proxies can replace manipulable features.} 
Our characterization shows that correlation can fundamentally change which support is optimal. 
When a less manipulable feature is sufficiently correlated with a manipulable one, the predictive loss $L_{\mathrm{pred}}(\sset)$ from dropping the latter can be small because much of its signal is recoverable from the proxy. 
If the resulting retained support also lowers the strategic burden or reduces cost heterogeneity, then the support that keeps the proxy and drops the manipulable feature can dominate the full support.
\Cref{fig:supp-correlated-proxy} illustrates this support switch.
The key insight is that a decision maker can preserve much of the predictive value while reducing strategic vulnerability by using correlated alternative features as proxies.

\begin{figure}[!t]
    \centering
    \includegraphics[width=0.99\linewidth]{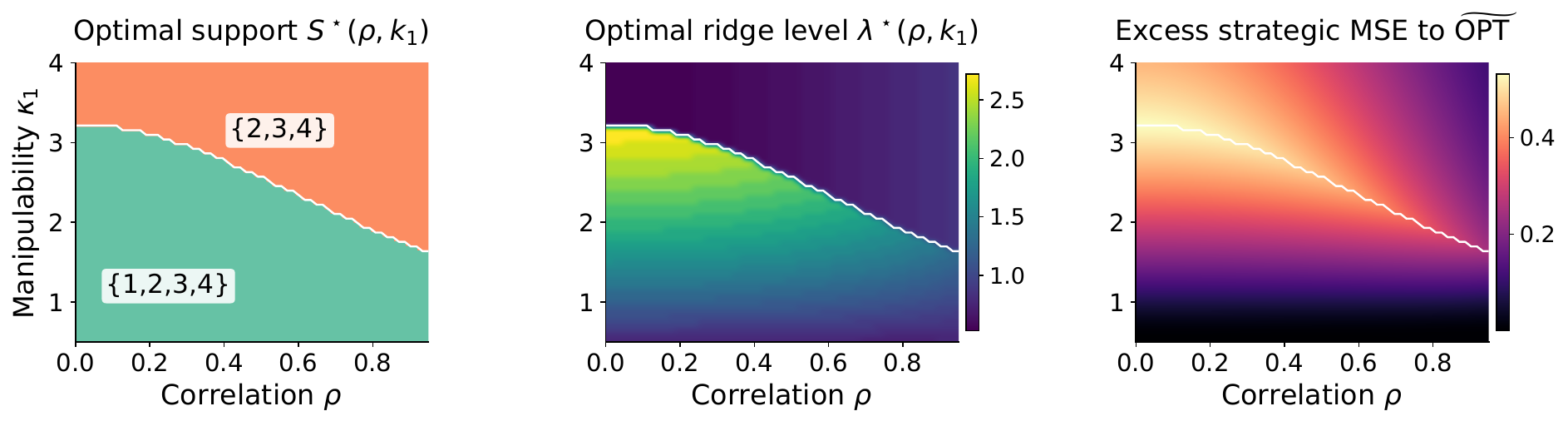}
    \caption{
    \textbf{Less manipulable correlated proxies can replace manipulable features.} We compute the optimal support and regularization level exactly for a $d=4$ example with correlated features. 
    Feature $1$ is predictive but manipulable, feature $2$ is a correlated less manipulable proxy for feature $1$ with zero direct coefficient in $\theta^*$, and features $3$ and $4$ are less manipulable baseline features.
    We vary the correlation $\rho$ between features $1$ and $2$ on the $x$-axis, and the manipulability $k_1$ of feature $1$ on the $y$-axis.
    (Left) The optimal support $\mathsf{S}^*(\rho,k_1)$ after joint optimization over $\mathsf S$ and $\lambda$.
    (Middle) The corresponding optimal ridge level $\lambda^*(\rho,k_1)$, where white contours indicate support boundaries.
    (Right) Excess strategic MSE relative to $\widetilde{\mathrm{OPT}}$.
    As correlation increases, the less manipulable feature becomes a more effective substitute for the manipulable feature.
    When $k_1$ is sufficiently large, the optimal support switches from $\{1,2,3,4\}$ to $\{2,3,4\}$, dropping the manipulable feature while retaining its proxy.
    The resulting support-restricted ridge estimator remains close to $\widetilde{\mathrm{OPT}}$ over a broad region of the parameter space.
    See Appendix~\ref{sec:supp-correlated-proxy} for details.
    }
    \label{fig:supp-correlated-proxy}
\end{figure}

Taken together, these results clarify the distinct role of the two policy levers.
Scalar ridge regularization is most effective on supports whose retained manipulation geometry is close to homogeneous, while feature selection is most effective when it removes strategically heterogeneous or easily manipulated features at limited predictive loss.
The best design therefore balances the two levers jointly rather than relying on either in isolation. 
In the next section, we turn to the corresponding algorithmic problem.
Appendix~\ref{app:ridge-tuning-properties} gives complementary local results showing that infinitesimal ridge regularization improves strategic MSE over support-restricted least squares, and that the locally optimal ridge level increases with manipulation intensity.

\section{Computing the Best Subset and Regularization Level}
\label{sec:algorithm}

The theory identifies the tradeoffs governing support choice and regularization.
We now turn to the corresponding algorithmic problem.
For a support budget $s\in\{1,\dots,d\}$, the exact design problem is
\[
(\mathsf S^\star,\lambda^\star)
\in
\arg\min_{\substack{\mathsf S\subseteq[d]\\ |\mathsf S|\le s}}
\ \min_{\lambda\ge 0}
\Phi(\mathsf S,\lambda),
\qquad
\Phi(\mathsf S,\lambda):=\MSEstrat(\theta^{\OR}(\mathsf S,\lambda),0).
\]
For a fixed support $\mathsf S$, the optimization over $\lambda$ is one-dimensional (and smooth); the computational challenge is the combinatorial search over supports.

\paragraph{A weighted relaxation.}
To avoid exhaustive search, we introduce a continuous weight vector $w\in[0,1]^d$ with support budget $\mathbf 1^\top w\le s$.
For a fixed $\lambda>0$, define the weighted ridge estimator:
\begin{equation}
\label{eq:weighted-ridge}
\theta_\lambda(w)
\in
\arg\min_{\theta\in\R^d}
\biggl\{\E[(Y-\theta^\top X)^2]+\lambda\sum_{i=1}^d \varphi(\theta_i,w_i)\biggr\},
\end{equation}
where $\varphi(t,u)=t^2/u$ for $u>0$, while $\varphi(0,0)=0$ and $\varphi(t,0)=+\infty$ for $t\neq 0$.
Thus $w_i=0$ forces $\theta_i=0$.
When all $w_i>0$, writing $W=\diag(w)$ gives $\theta_\lambda(w)=(\Sigma+\lambda W^{-1})^{-1}\Sigma\theta^*$ (see \Cref{lem:weighted-ridge-closed-form} for more details).
At fixed $\lambda$, we minimize the exact strategic objective $\MSEstrat(\theta_\lambda(w),0)$ over the weight polytope $\{w\in[0,1]^d:\mathbf 1^\top w\le s\}$.

\begin{proposition}[Binary weights recover exact support-restricted ridge]
\label{prop:binary-equivalence}
Let $w\in\{0,1\}^d$ and define $\mathsf S(w):=\{i\in[d]:w_i=1\}$. 
Then $\theta_\lambda(w)=\theta^{\OR}(\mathsf S(w),\lambda)$, and hence the weighted objective $\MSEstrat(\theta_\lambda(w),0)$ exactly matches the fixed-$\lambda$ support-restricted ridge objective $\Phi(\mathsf S(w),\lambda)$ on binary supports.
\end{proposition}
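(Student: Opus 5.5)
The plan is to show that, for binary $w$, the weighted program \eqref{eq:weighted-ridge} and the support-restricted ridge program \eqref{eq:subset-ridge-compact} have the same feasible set and the same objective. Equality of the estimators then follows from uniqueness of the minimizer. Equality of the strategic objectives is then immediate, because $\MSEstrat(\cdot,0)$ is evaluated at the same point $\theta$ and $\Phi(\mathsf S,\lambda)$ is defined with zero intercept.

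\textbf{Step 1 (effective domain).} Fix $w\in\{0,1\}^d$ and $\mathsf S=\mathsf S(w)$. For $i\notin\mathsf S$ we have $w_i=0$, so $\varphi(\theta_i,0)$ equals $0$ if $\theta_i=0$ and $+\infty$ otherwise. Hence the weighted objective is finite exactly on $\Theta(\mathsf S)$, and any minimizer lies in $\Theta(\mathsf S)$. This uses $\lambda>0$. At $\lambda=0$, with the convention $0\cdot\infty=\infty$, the same conclusion holds and the program reduces to $\theta^{\LS}(\mathsf S)$.

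\textbf{Step 2 (objective agreement).} On $\Theta(\mathsf S)$, each $i\in\mathsf S$ has $w_i=1$, so $\varphi(\theta_i,1)=\theta_i^2$, while each $i\notin\mathsf S$ contributes $0$. Thus the penalty equals $\lambda\|\theta\|_2^2$. The weighted program is therefore $\min_{\theta\in\Theta(\mathsf S)}\E[(Y-\theta^\top X)^2]+\lambda\|\theta\|_2^2$.

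\textbf{Step 3 (intercept and uniqueness).} Program \eqref{eq:subset-ridge-compact} also optimizes over $b$. Under the centering $\E[X]=0$, $\E[Y]=0$, the cross term vanishes, so the objective equals the above plus $b^2$. Its optimum is therefore $b=0$, and the $\theta$-part coincides with the program from Step 2; this is consistent with \Cref{lem:policy-lever-closed-forms}. Restricted to coordinates $\mathsf S$, the objective is the strictly convex quadratic $\eta^\top(\Sigma_{\mathsf{SS}}+\lambda I)\eta-2\eta^\top\Sigma_{\mathsf Sy}+\E[Y^2]$, using $\Sigma_{\mathsf{SS}}\succ0$. The minimizer is unique, $\eta=(\Sigma_{\mathsf{SS}}+\lambda I)^{-1}\Sigma_{\mathsf Sy}$, so $\theta_\lambda(w)=\theta^{\OR}(\mathsf S(w),\lambda)$.

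The only delicate point is the extended-value convention for $\varphi$ at $w_i=0$. The formula $(\Sigma+\lambda W^{-1})^{-1}\Sigma\theta^*$ from \Cref{lem:weighted-ridge-closed-form} cannot be used directly when some $w_i=0$, which is why I argue from the variational definition rather than the closed form. Equivalently, one could take a limit $w_i\downarrow0$ in that formula, but the direct argument avoids justifying the limit.
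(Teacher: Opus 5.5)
Your proof is correct and follows the paper's argument. The convention $\varphi(t,0)=+\infty$ for $t\neq 0$ confines the weighted program to $\Theta(\mathsf S(w))$, on which the penalty is exactly $\lambda\|\theta\|_2^2$, so the two programs coincide and their strategic objectives agree. Your explicit handling of the intercept and of uniqueness (via \Cref{lem:policy-lever-closed-forms}) only fills in details the paper leaves implicit, and the $\lambda=0$ aside is unnecessary because $\theta_\lambda(w)$ is defined only for $\lambda>0$.
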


The continuous problem is therefore a relaxation of the exact fixed-$\lambda$ subset selection problem.
In particular, if 
\[
\Phi_{\lambda,s}^\star
:=
\min_{\substack{\mathsf S\subseteq[d]\\|\mathsf S|\le s}}
\Phi(\mathsf S,\lambda),
\qquad
L^{\mathrm{wt}}_{\lambda,s}
:=
\min_{w\in\mathcal W_s}
\MSEstrat(\theta_\lambda(w),0),
\]
then $L^{\mathrm{wt}}_{\lambda,s} \le \Phi_{\lambda,s}^\star$ (see \Cref{prop:weighted-relaxation-lower-bound} for a formal treatment and derivation).
We use the relaxed solution as a soft initialization, then return to the discrete support-restricted ridge by rounding, refining, and refitting.
We provide these details next.

\begin{algorithm}[!t]
\caption{Support-restricted ridge design by weighted relaxation and local refinement}
\label{alg:weighted-relaxation-local-refinement}
\begin{algorithmic}[1]
\Require Support budget $s$ and ridge grid $\Lambda\subset(0,\infty)$.
\For{each $\lambda\in\Lambda$}
    \State Solve or approximately solve the weighted relaxation:
    $
    \widehat w_\lambda
    \approx
    \arg\min_{w\in\mathcal W_s}
    \MSEstrat(\theta_\lambda(w),0)
    $.
    \State Initialize the support by rounding: $\mathsf S \leftarrow \operatorname{Top}_s(\widehat w_\lambda)$.
    \While{there exists $\mathsf T\in\mathcal N_s(\mathsf S)$ with
    $\Phi(\mathsf T,\lambda)<\Phi(\mathsf S,\lambda)$}
        \State Update to the best improving neighbor: $\mathsf S \leftarrow \arg\min_{\mathsf T\in\mathcal N_s(\mathsf S)\cup\{\mathsf S\}} \Phi(\mathsf T,\lambda)$.
    \EndWhile
    \State Set $\widehat{\mathsf S}_\lambda\leftarrow \mathsf S$ and refit exact support-restricted ridge: $\widehat\theta_\lambda := \theta^{\OR}(\widehat{\mathsf S}_\lambda,\lambda)$.
    \State Evaluate
    $\widehat R_\lambda:=\MSEstrat(\widehat\theta_\lambda,0)$.
\EndFor
\State Choose $\widehat\lambda\in\arg\min_{\lambda\in\Lambda}\widehat R_\lambda$.
\Ensure
$(\widehat\lambda,\widehat{\mathsf S}_{\widehat\lambda},
\widehat\theta_{\widehat\lambda})$.
\end{algorithmic}
\end{algorithm}

\paragraph{Our procedure.}
Towards outlining the procedure, let us define some useful notation.
Given a weight vector $w$, let $\operatorname{Top}_s(w)$ denote the indices of the $s$ largest entries of $w$.
For a current support $\mathsf S$, let $\mathcal N_s(\mathsf S)$ denote the add/drop/swap neighborhood of $\mathsf S$, consisting of supports obtained by adding one feature, dropping one feature, or swapping one retained feature for one excluded feature, while respecting the budget $s$.
Our procedure is then summarized in \Cref{alg:weighted-relaxation-local-refinement}.

The weighted relaxation plays a purely algorithmic role: it provides a soft initialization for support search at fixed $\lambda$.
The final estimator is always a standard support-restricted ridge fit of the form $\theta^{\OR}(\widehat{\mathsf S}_{\widehat\lambda},\widehat\lambda)$, so the procedure remains faithful to the estimator class studied in \Cref{sec:theory}.
Each accepted local refinement move strictly decreases the exact fixed-$\lambda$ objective $\Phi(\mathsf S,\lambda)$, so the refinement step terminates after finitely many moves and never worsens the rounded support (see \Cref{prop:local-refinement-termination}).
We provide implementation details and a convex reformulation for the diagonal-covariance case in Appendix~\ref{sec:app-algorithm} and synthetic benchmarks in Appendix~\ref{sec:app-algorithm-illustrations}.

\section{Case Study: Strategic Feature Selection in Healthcare Payments}
\label{sec:experiments}

We study healthcare payments in the Medicare Advantage (MA) program, as introduced earlier in the paper. 
In MA, the government determines payments to health insurers for the patients they cover using a least squares regression based on the diagnoses reported for their enrollees.
Here, diagnosis codes are the model features, which creates incentives to increase payment by assigning additional or more severe diagnosis codes to a patient, a practice known as upcoding. 

The payment model includes 115 variables corresponding to health conditions, termed as hierarchical condition categories (HCCs). 
We base our study on recent policy work that recommends excluding the \emph{top ten diagnosis groups} that contributed the most to higher coding intensity in Medicare Advantage~\citep{kronick2025are}. 
We show that this proposal is suboptimal in strategic $\MSE$, and that our method can preserve both predictive and strategic performance while retaining some of these diagnoses. 
Code to reproduce our experiments is available at \url{https://github.com/jivatneet/strategic-feature-selection}.

\subsection{Experimental setup}

To simulate realistic upcoding, we build on prior work to first generate diagnosis codes in the absence of coding incentives, and then modify them to match the shifts in diagnosis prevalence associated with more intense coding.
We use the \texttt{upcoding} package\footnote{\url{https://github.com/StanfordHPDS/upcoding}}, which simulates realistic co-occurring HCCs for the Medicare population using self-reported health conditions from a large national survey that are unaffected by coding incentives~\citep{enache2026time}. 
Starting from this baseline data, we upcode selected HCCs among the top-ten diagnosis groups identified by \citet{kronick2025are} to match their reported prevalence shifts.
The baseline HCC data provide a realistic empirical covariance structure for diagnoses, while the induced prevalence shifts provide policy-grounded information about coding vulnerability that we use to construct the manipulation-ease matrix $H^{-1}$.
The resulting baseline dataset contains $n=5000$ beneficiaries and $115$ V28 HCC indicators. 
Of these, $30$ HCCs have nonzero baseline prevalence.\footnote{This is partly due to export restrictions in the underlying All of Us data, which exclude co-occurring HCC sets reported by only a small number of survey respondents.} 
Support sizes in the results that follow are therefore reported over the 30 HCCs with nonzero baseline prevalence.

We use the published CMS-HCC V28 coefficients\footnote{\url{https://www.cms.gov/files/document/2024-announcement-pdf.pdf}} as our signal $\theta^*$ and generate outcomes from the baseline HCC covariates.
All methods are fit using only baseline, pre-manipulation data, following the MA setting.
Evaluation follows the strategic response model studied in this paper: for a fitted coefficient vector $\widehat\theta$, features at deployment
respond as $X \mapsto X^{(0)} + \alpha H^{-1}\widehat\theta$, where $\alpha\ge0$ controls manipulation intensity.
We report the resulting post-manipulation $\mathrm{MSE}$.

We compare our support-restricted ridge method to the following baselines:
full ridge, which tunes ordinary ridge on all HCCs with nonzero baseline prevalence;
top-ten exclusion, which removes HCCs with nonzero baseline prevalence that belong to the top-ten
diagnosis groups identified by \citet{kronick2025are} and then tunes ridge on the remaining features;
prediction-only, which selects features using predictive value alone and
then tunes ridge;
cost-only, which selects features using manipulation-ease scores alone
and then tunes ridge; 
and subset-only, which performs support-restricted
least squares without ridge shrinkage. 
We also include the full-support zero-intercept oracle $\widetilde{\OPT}$ (introduced in \Cref{sec:lower-bound}) as a benchmark.
For ease of visualization and comparison across methods, we report post-manipulation $\mathrm{MSE}$ normalized by the full-support ridge value at the same manipulation intensity, so values below one indicate an improvement over full-support ridge.

We refer the reader to Appendix~\ref{sec:app-expts} for further details on the data, semi-synthetic outcome construction, cost matrix, baselines, and evaluation metrics.

\begin{figure}[!t]
    \centering
    \includegraphics[width=0.99\linewidth]{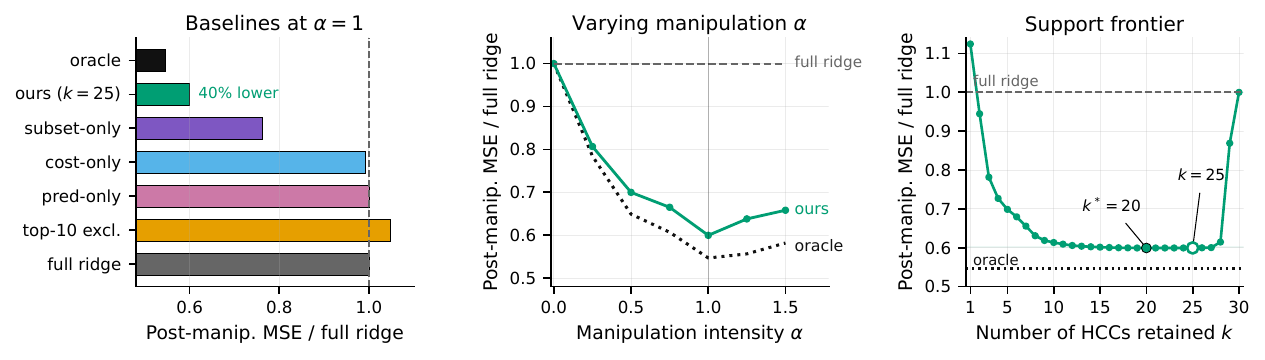}
    \caption{
    \textbf{Feature selection under optimal regularization retains many features while keeping post-manipulation error low.}
    (Left) Post-manipulation $\MSE$ at manipulation intensity $\alpha=1$ for all methods. 
    (Middle) Post-manipulation $\MSE$ under varying $\alpha$ for the oracle $\widetilde{\OPT}$ and the support-restricted ridge design with $k=25$.
    (Right) Cardinality frontier of support-restricted ridge at $\alpha=1$, showing post-manipulation $\MSE$ across support sizes $k$.
    Post-manipulation $\MSE$ is normalized by the full ridge value.
    }
    \label{fig:top10-anyfix-main}
\end{figure}

\subsection{Results}

Our empirical results match the qualitative findings of our theoretical analysis: low post-manipulation error need not require blanket exclusion, and the best policy designs come from combining feature selection with regularization.

\paragraph{Proposed method substantially reduces post-manipulation $\mathrm{MSE}$ relative to baselines and remains competitive with the oracle.} 
We compare the proposed support-restricted ridge estimator to the baselines and the oracle $\widetilde{\OPT}$. 
The left panel of \Cref{fig:top10-anyfix-main} shows that the support-restricted ridge design with $k=25$ retained features reduces post-manipulation $\mathrm{MSE}$ by about 40\% relative to full ridge and outperforms the top-ten exclusion policy, prediction-only selection, cost-only selection, and subset-only selection without ridge shrinkage.
This illustrates the importance of tuning feature selection and regularization jointly.
Further, the middle panel of \Cref{fig:top10-anyfix-main} shows that the support-restricted ridge estimator remains competitive with the oracle across a range of manipulation intensities.

\paragraph{Low strategic error does not require aggressive exclusion.}
The right panel of \Cref{fig:top10-anyfix-main} showcases the support frontier of the proposed estimator, illustrating that low post-manipulation prediction error can be achieved while retaining a larger feature set. 
Specifically, while the unconstrained optimum is attained at $k^*=20$, expanding the model to include $k=25$ HCCs yields nearly identical performance.
For this reason, we use $k = 25$ as a representative operating point in the main results.

\begin{figure}[!t]
    \centering
    \begin{minipage}[t]{0.45\linewidth}
        \centering
        \includegraphics[width=\linewidth]{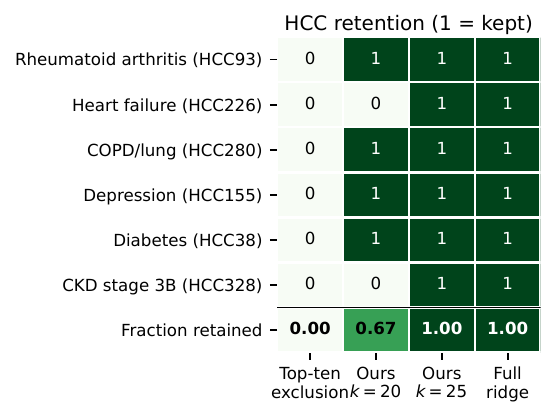}
    \end{minipage}
    \quad
    \begin{minipage}[t]{0.45\linewidth}
        \centering
        \includegraphics[width=\linewidth]{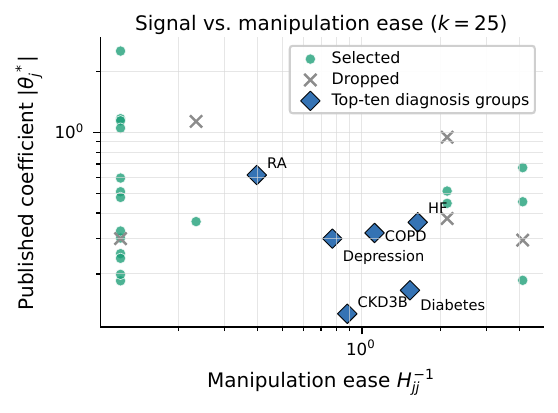}
    \end{minipage}
    \caption{
    \textbf{The selected model reflects the joint predictability-manipulability structure, rather than dropping all intensely coded features.}
    (Left) HCC groups with nonzero prevalence in the data among the top-ten diagnosis groups identified by~\citet{kronick2025are}.
    The top-ten exclusion policy drops all of these groups, whereas the unconstrained optimum at $k=20$ retains some of them and the broader $k=25$ design retains all six such groups. 
    (Right) Each HCC with nonzero prevalence in the data is plotted by its published CMS-HCC coefficient magnitude and manipulation-ease score $H^{-1}_{jj}$; marker type indicates whether the broader $k=25$ support-restricted ridge design retains that HCC.
    }
    \label{fig:anyfix-selection-mechanism}
\end{figure}

As shown in the left panel of \Cref{fig:anyfix-selection-mechanism}, the support-restricted ridge estimator with $k=25$ retains all the HCCs (with nonzero baseline prevalence) that belong to the top-ten diagnosis groups while outperforming both the full ridge estimator and the top-ten exclusion policy.
The support frontier confirms that the best design is often an interior solution rather than either the full feature set or aggressive exclusion.
Importantly, these gains are not driven by feature selection alone: the strongest performance comes from combining support restriction with regularization.

\paragraph{Selected support is not determined by coding vulnerability alone.}
The selected support is not obtained by simply ranking diagnoses by coding vulnerability: the $k=25$ support-restricted ridge design retains several HCC groups identified by \citet{kronick2025are}, while excluding other HCCs (see the right panel of \Cref{fig:anyfix-selection-mechanism}).
The plot further illustrates there is no straightforward pattern based on a feature's direct predictiveness and its manipulation ease that determines inclusion or exclusion.
This matches our theory, which predicts that feature selection should balance the predictive loss from excluding features against the strategic burden and cost geometry of the retained set. 
\Cref{fig:anyfix-selection-mechanism} also contrasts our design with the top-ten exclusion policy, which removes all such groups by construction, whereas support-restricted ridge can retain some of them and control strategic exposure through regularization.

\section{Inherent Robustness under Cost Uncertainty}
\label{sec:cost-uncertainty}

Our main results assume that the cost matrix $H$ is known.
In practice, however, manipulation costs are difficult to estimate precisely. 
We therefore briefly examine the case where cost matrix $H$ is \emph{unknown}.

The goal in this section is to study the \emph{inherent robustness} of various methods, rather than designing a new robust optimization procedure. 
We show that regularization and feature selection provide additional robustness guarantees relative to other strategies. 
For simplicity, we reparameterize and consider the inverse cost matrix $K = H^{-1}$. 

To demonstrate this, we first revisit the case of known costs and consider the benchmark predictor that attains the strategic optimum $\OPT$.
The decision maker can attain $\OPT$ if they are able to \textit{directly} tune the intercept based on the costs according to $b^{\OPT} = -\theta^{*\top}K\theta^*$, achieving the irreducible error $\sigma^2$ (see \Cref{lem:strategic-opt}).
Thus, with exact knowledge of costs and the flexibility to choose an intercept, the unrestricted intercept correction (along with the true signal model $\theta^*$) cannot be improved upon.

However, the situation changes in the practical scenario when the decision maker only has partial information about the inverse-cost matrix $K$.
To make this precise, we introduce some additional notation.
For a set $\mathcal K$ of plausible inverse-cost matrices, define the worst-case excess strategic MSE of a fixed rule $(\theta,b)$ by
\[
\sup_{K\in\mathcal K}
\left\{
\MSE_K(\theta,b)- \OPT
\right\},
\quad
\text{where}
\quad
\MSE_K(\theta,b)
=
(\theta-\theta^*)^\top\Sigma(\theta-\theta^*)
+
(b+\theta^\top K\theta)^2
+
\sigma^2 .
\]
This expression makes the robustness issue transparent. 
An intercept correction can remove the strategic shift exactly when $K$ is known: the strategic optimum uses the (coefficients, intercept) tuple $(\theta^*,-\theta^{*\top}K\theta^*)$. 
But under cost uncertainty, an intercept chosen using a nominal estimate $\widehat K$ leaves the residual shift $\theta^\top(K-\widehat K)\theta$. 
The following simple identity formalizes the worst-case error of a general fixed model $(\theta, b)$, and then specializes it to intercept correction.
\begin{proposition}[Worst-case error in general and for intercept correction]
\label{prop:fixed-rule-worst-case}
Let $\mathcal K$ be a compact set of possible inverse-cost matrices and fix $(\theta,b)$. 
Define $\underline q(\theta):=\inf_{K\in\mathcal K}\theta^\top K\theta$ and $\overline q(\theta):=\sup_{K\in\mathcal K}\theta^\top K\theta$.
Then
\begin{equation}
    \label{eq:worstcaseerror}
\sup_{K\in\mathcal K}
\left\{
\MSE_K(\theta,b)- \OPT
\right\}
=
(\theta-\theta^*)^\top\Sigma(\theta-\theta^*)
+
\max\{(b+\underline q(\theta))^2,\,(b+\overline q(\theta))^2\}.
\end{equation}
In particular, for intercept correction, where $\theta = \theta^*$ and $b=-{\theta^*}^\top\widehat K\theta^*$, we have
\[
\sup_{K\in\mathcal K}
\left\{
\MSE_K(\theta,b)- \OPT
\right\}
=
\sup_{K\in\mathcal K}
\bigl({\theta^*}^\top(K-\widehat K)\theta^*\bigr)^2 .
\]
\end{proposition}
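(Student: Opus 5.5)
The plan is to work directly from the closed-form expression for $\MSE_K(\theta,b)$ displayed just before the statement, combined with the fact that $\OPT=\sigma^2$ for every $K$ (\Cref{lem:strategic-opt}). One point needs care first: $\OPT$ depends on $K$ in principle. By \Cref{lem:strategic-opt}, however, the optimal rule $(\theta^*,-\theta^{*\top}K\theta^*)$ attains exactly $\sigma^2$ regardless of $K$. So $\OPT=\sigma^2$ uniformly over $\mathcal K$, and the subtraction inside the supremum is a constant. I would also briefly re-derive the displayed formula. The best response to a linear rule is $a^*=K\theta$, so the post-manipulation prediction is $\theta^\top X+\theta^\top K\theta+b$. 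Expanding the square with $Y=\theta^{*\top}X+\eps$ and using $\E[X\eps]=0$ and $\E[X]=\E[\eps]=0$ kills the cross terms, which gives the sum of the three pieces.

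With that in place, the excess error is
\[
\MSE_K(\theta,b)-\OPT=(\theta-\theta^*)^\top\Sigma(\theta-\theta^*)+\bigl(b+\theta^\top K\theta\bigr)^2 .
\]
The first term does not depend on $K$, so it comes out of the supremum. It remains to compute $\sup_{K\in\mathcal K}(b+q_K)^2$, where $q_K:=\theta^\top K\theta$. The map $K\mapsto q_K$ is linear, hence continuous, so on the compact set $\mathcal K$ its image is a compact subset of $[\underline q(\theta),\overline q(\theta)]$ that contains both endpoints, since the infimum and supremum are attained. The function $t\mapsto(b+t)^2$ is convex, so its supremum over any set contained in an interval is at most its maximum at the interval's endpoints. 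Both endpoints are attained in the image, so the supremum equals $\max\{(b+\underline q)^2,(b+\overline q)^2\}$, which is \eqref{eq:worstcaseerror}. Note that we do not need the image to be the whole interval (which would require $\mathcal K$ to be convex); attainment of the two endpoints plus convexity is enough.

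For the specialization, set $\theta=\theta^*$ and $b=-\theta^{*\top}\widehat K\theta^*$. The first term vanishes, and
\[
b+q_K=\theta^{*\top}(K-\widehat K)\theta^* .
\]
Substituting this directly into $\sup_K(b+q_K)^2$ gives the claimed form; equivalently, one can read it off from the max over the two endpoints.

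The only step that needs any care is justifying that $\OPT$ is $K$-independent and equal to $\sigma^2$, so that it can be treated as a constant inside the supremum. Everything else is routine.
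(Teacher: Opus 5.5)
Your proposal is correct and follows essentially the same route as the paper: use $\OPT=\sigma^2$ (which does not depend on $K$), pull the $K$-independent prediction term out of the supremum, and then use compactness together with convexity of $t\mapsto(b+t)^2$ to reduce the supremum to the two attained endpoints $\underline q(\theta),\overline q(\theta)$, before substituting for the intercept-correction case. Your remark that only attainment of the endpoints is needed, not convexity of $\mathcal K$, is a slightly more careful statement of the paper's own endpoint step.
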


Calculation of the worst-case error in general \eqref{eq:worstcaseerror} shows that a model is inherently more robust when its coefficient vector has small exposure to the uncertain manipulation directions. 
Regularization reduces this exposure by shrinking coefficients, while feature selection can remove highly uncertain or highly manipulable coordinates entirely.
We now give two examples showing how regularization and feature selection can improve inherent robustness compared to intercept correction.

\begin{example}[Regularization alone]
\label{ex:robustness-regularization-alone}
Let $d=1$, $\Sigma=1$, and $\theta^*=1$. 
Suppose the true inverse cost lies in $\mathcal K=[1/2,5/2]$, and the nominal estimate is the midpoint $\widehat K=3/2$. 
The plug-in intercept correction deploys $(\theta,b)=(1,-3/2)$, whose worst-case excess strategic MSE is
\[
\sup_{K\in[1/2,5/2]}(K-3/2)^2
=
1.
\]
By contrast, ordinary ridge with $\lambda=1$ gives $\theta^{\OR}(\lambda)=1/2$ and $b=0$. 
Its worst-case excess strategic MSE is
\[
\left(\frac12-1\right)^2
+
\sup_{K\in[1/2,5/2]}
\left(K\left(\frac12\right)^2\right)^2
=
\frac14+\left(\frac{5}{8}\right)^2
=
\frac{41}{64}
<1.
\]
Thus, shrinkage alone can be inherently more robust than a plug-in intercept correction.
\end{example}

Note that in \Cref{ex:robustness-regularization-alone} we examine a fixed intercept $b = -\widehat K = -3/2$ rather than proposing a robust intercept.
Nevertheless, \Cref{prop:fixed-rule-worst-case} implies that if one were allowed to choose the best fixed intercept for a fixed coefficient model $\theta$, the minimizer would place $-b$ at the midpoint of the interval \([\underline q(\theta),\overline q(\theta)]\), yielding residual squared shift $(\overline q(\theta)-\underline q(\theta))^2/4$.
So in this example, we are in fact already comparing against the best fixed intercept model with unknown $K$.

\begin{example}[Feature selection and regularization together]
\label{ex:robustness-feature-selection-regularization}
Now, consider a two-feature example in which the first feature is directly predictive but has uncertain manipulation cost, while the second feature is a less manipulable correlated proxy for the first feature:
\[
    \theta^*=(1,0),
    \qquad
    \Sigma=
    \begin{pmatrix}
    1 & 49/50\\
    49/50 & 1
    \end{pmatrix},
\quad
\text{and}
\quad
    K(k)=
    \begin{pmatrix}
    k & 0\\
    0 & 1/2
    \end{pmatrix},
    \qquad
    k\in[1/2,5/2].
\]
Again, take the nominal estimate to be the midpoint $\widehat K=K(3/2)$. 
The plug-in intercept correction with slope $\theta^*$ has worst-case excess strategic MSE equal to $1$, as in \Cref{ex:robustness-regularization-alone}.

Full-support ridge reduces this error by shrinking the manipulable coordinate. 
For example, full-support ridge with $\lambda=1$ has worst-case excess strategic MSE approximately
\[
    \sup_{k\in[1/2,5/2]}
    \MSE_{K(k)}(\theta^{\OR}([2],1),0)- \OPT
    \approx 0.240 .
\]
However, support-restricted ridge can do even better by dropping the uncertain coordinate and using the less manipulable proxy. 
On the support $\mathsf S=\{2\}$, ridge with $\lambda=1/4$ gives
\[
    \theta^{\OR}(\{2\},1/4)
    =
    \left(0,\frac{98}{125}\right).
\]
Since this rule places zero weight on the uncertain coordinate, its strategic shift is independent of $k$. 
Its worst-case excess strategic MSE is
\[
\begin{aligned}
1
-
2\left(\frac{49}{50}\right)\left(\frac{98}{125}\right)
+
\left(\frac{98}{125}\right)^2
+
\left[
\frac12
\left(\frac{98}{125}\right)^2
\right]^2
\approx 0.172
<
0.240 .
\end{aligned}
\]
For comparison, support-restricted least squares on $\mathsf S=\{2\}$ gives $\theta^{\mathrm{LS}}(\{2\}) = (0,49/50)$, whose worst-case excess strategic MSE is approximately $0.270$. 
Thus, the improvement comes from combining feature selection with regularization: feature selection removes the uncertain manipulation direction, and regularization controls the residual exposure of the retained proxy.
\end{example}

\Cref{ex:robustness-regularization-alone,ex:robustness-feature-selection-regularization} illustrate that, under cost uncertainty, intercept correction can be sensitive to misspecification, whereas regularization and feature selection can be inherently more robust by reducing exposure to uncertain manipulation directions.
Designing explicitly robust versions of these methods, such as choosing $b$ or $(\mathsf S,\lambda)$ to minimize worst-case strategic MSE over $ \mathcal K$, is a natural direction for future work.

\section{Discussion}
\label{sec:discussion}

We develop a framework for studying strategic classification through the coarse policy levers of feature selection and regularization. 
Our main message is that features should be selected jointly based on their relative manipulability and predictability, rather than based on the manipulability of individual features alone. 
Through theoretical analysis and a real-world case study, we show that the interaction between feature selection and regularization yields nuanced implications for policy design that broaden the prevailing narrative. 
Furthermore, the coarse levers offer greater inherent robustness under cost uncertainty compared to alternative strategies.
More broadly, given the ubiquity of prediction systems in practice, we envision a growing need for principled and practical frameworks that bridge theoretical foundations with the policy and institutional contexts in which these systems are deployed.

Several interesting questions remain. We study a one-shot interaction in this work, and it would be valuable to analyze long-term dynamics in this setting.
An interesting statistical perspective to explore further is to view strategic tuning as tuning under an endogenous deployment shift. 
Related work studies optimal regularization under exogenous covariate and regression shifts (see, e.g., \cite{patil2024optimal} and references therein), whereas here the shift is induced by the deployed coefficients themselves and is coupled to support restriction.
Another extension is to study more expressive coarse levers, such as grouped ridge or per-feature regularization, as intermediate designs between ordinary ridge and the strategically optimal solution. 
Our framework can also be extended to incorporate settings where certain variables \emph{must} be retained for policy or institutional reasons, by modifying the support-selection objective.

At the same time, our model makes some simplifying assumptions that are common in the strategic learning literature, such as known and quadratic manipulation costs, which are homogeneous across organizations. 
In this paper, we take a first step toward understanding misspecification by studying the inherent robustness of a fixed deployed model when the true inverse-cost matrix lies in a plausible uncertainty set. 
Our examples illustrate that intercept corrections can be sensitive to misspecification, while regularization and feature selection can reduce exposure to uncertain manipulation directions. 
However, these examples barely scratch the surface of the broader misspecification question.
We do not design robust versions of any method in this work; developing worst-case or distributionally robust choices of the intercept, support, and regularization level, examining richer cost structures, and studying the impact of cost heterogeneity across organizations are important directions for future work.

\section*{Acknowledgments}

We would like to thank Oana M. Enach, Keegan Harris, Richard Kronick, Lydia T. Liu, Juan C. Perdomo, Sherri Rose, Manolis Zampetakis, Anna Zink, and Tijana Zrnic for insightful conversations on our work. Ida Sim's guest lecture on healthcare payment models introduced the authors to this problem, and we are grateful for several helpful discussions. Computing support is partially provided by the Texas Advanced Computing Center (TACC). This work was supported in part by a Stanford AI Lab postdoctoral fellowship and by the European Union (ERC-2022-SYG-OCEAN-101071601). Views and opinions expressed are however those of the author(s) only and do not necessarily reflect those of the European Union or the European Research Council Executive Agency. Neither the European Union nor the granting authority can~be~held~responsible~for~them.

\bibliography{ref}
\bibliographystyle{plainnat}

\clearpage
\appendix

\clearpage

\section{Theoretical Details and Proofs for \Cref{sec:model}}
\label{sec:model-proofs}

\subsection{Strategic optimum characterization}

\begin{lemma}[Strategic optimum characterization]
\label{lem:strategic-opt}
The strategic optimum \eqref{eq:opt} is attained at $(\theta^{\mathrm{OPT}},b^{\mathrm{OPT}})=(\theta^*,-\theta^{*\top}H^{-1}\theta^*)$ and its value is $\mathrm{OPT}_{\theta^*}(\gP_{X,Y}, H) = \sigma^2$.
\end{lemma}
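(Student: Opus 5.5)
First solve the organization's problem \eqref{eq:agent_br} explicitly. For fixed $\theta$ the objective $\theta^\top(x+a)+b-\tfrac12 a^\top H a$ is strictly concave in $a$, because $H\succ 0$. Its first-order condition $\theta - Ha=0$ therefore gives the unique best response $a^*(\theta)=H^{-1}\theta$. This response does not depend on $x$ or on $b$, so the strategic prediction is
\[
\theta^\top(X+a^*(\theta))+b=\theta^\top X+\theta^\top H^{-1}\theta+b,
\]
which is the unmanipulated prediction shifted by the deterministic scalar $c(\theta,b):=b+\theta^\top H^{-1}\theta$.

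Next, expand $\MSEstrat$. Write $Y=\theta^{*\top}X+\eps$, so that
\[
\theta^\top(X+a^*(\theta))+b-Y=(\theta-\theta^*)^\top X+c(\theta,b)-\eps .
\]
Square this and take expectations. The cross terms vanish for three reasons: $\E[X]=0$ kills the cross term between $(\theta-\theta^*)^\top X$ and the constant $c$; $\E[X\eps]=0$ (the normal equations) kills the cross term between $(\theta-\theta^*)^\top X$ and $\eps$; and $\E[\eps]=0$ kills the cross term between $c$ and $\eps$. This leaves
\[
\MSEstrat(\theta,b)=(\theta-\theta^*)^\top\Sigma(\theta-\theta^*)+\bigl(b+\theta^\top H^{-1}\theta\bigr)^2+\sigma^2 .
\]
This is the same decomposition used later as $\MSE_K$ with $K=H^{-1}$.

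Every term in this decomposition is nonnegative, so $\MSEstrat(\theta,b)\ge\sigma^2$ for every $(\theta,b)$. The choice $\theta=\theta^*$, $b=-\theta^{*\top}H^{-1}\theta^*$ makes the first two terms zero simultaneously, so it attains the bound and $\OPT=\sigma^2$. Since $\Sigma\succ0$, the first term forces $\theta=\theta^*$ at any minimizer, and the second term then fixes $b$, so the optimum is also unique.

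There is no real obstacle. The only step needing care is checking that each cross term is killed by exactly one of the centering or orthogonality facts. These facts were established in \Cref{sec:model}: $\E[X]=0$ and $\E[\eps]=0$ from the centering convention, and $\E[X\eps]=0$ from the normal equations.
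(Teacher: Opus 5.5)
Your proposal is correct and follows essentially the same route as the paper. Both derive the best response $a^*=H^{-1}\theta$ and the decomposition $\MSEstrat(\theta,b)=(\theta-\theta^*)^\top\Sigma(\theta-\theta^*)+(b+\theta^\top H^{-1}\theta)^2+\sigma^2$ (which the paper packages as \Cref{lem:org-br} and \Cref{lem:general-mse}) and then read off the minimizer; the only cosmetic difference is that the paper first profiles out $b$ and then minimizes over $\theta$, whereas you note directly that both nonnegative terms vanish simultaneously at $(\theta^*,-\theta^{*\top}H^{-1}\theta^*)$.
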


\begin{proof}
By \Cref{lem:general-mse}, we can decompose the strategic MSE \eqref{eq:mse-strat} into:
\[
\mathrm{MSE}_{\mathrm{strat}}(\theta,b)
=
(\theta-\theta^*)^\top\Sigma(\theta-\theta^*)
+
\bigl(b+\theta^\top H^{-1}\theta\bigr)^2
+
\sigma^2.
\]
For any fixed $\theta$, the optimal intercept is $b(\theta)=-\theta^\top H^{-1}\theta$, which eliminates the strategic shift term.
Therefore
\[
\min_{b\in\RR}
\mathrm{MSE}_{\mathrm{strat}}(\theta,b)
=
(\theta-\theta^*)^\top\Sigma(\theta-\theta^*)+\sigma^2.
\]
Since $\Sigma \succ 0$, this expression is uniquely minimized at $\theta=\theta^*$.
Hence the strategic optimum \eqref{eq:opt} is attained at 
\[
\theta^{\mathrm{OPT}}=\theta^*,
\qquad
b^{\mathrm{OPT}}=-\theta^{*\top}H^{-1}\theta^*.
\]
The optimal value is $\mathrm{OPT}_{\theta^*}(\gP_{X,Y},H) = \sigma^2$.
\end{proof}

\subsection{Centering and projection identity}
\label{sec:centering-projection-identities}

We work under the centering convention $\EE[X]=0$ and $\EE[Y]=0$.
This convention is without loss of generality for the linear models studied in the paper: after replacing $X$ and $Y$ by $X-\EE[X]$ and $Y-\EE[Y]$, any shift in means can be absorbed into the intercept of the deployed linear model.
Moreover, since the organization's action is additive in the features, replacing $X$ by $X-\EE[X]$ does not change the action $a$ or the quadratic manipulation cost.

We next record a projection identity implied by the projection parameter $\theta^*$ as defined in \eqref{eq:projection-parameter}.
This is a standard result (see, e.g., \cite{gyorfi2002distribution}), whose derivation we collect below for completeness.

\begin{lemma}[Projection decomposition]
\label{lem:projection-decomposition}
Suppose $\EE[X]=0$, $\EE[Y]=0$, and $\Sigma:=\EE[XX^\top]\succ 0$.
Let $\theta^*$ be the coefficients of the best linear projection of $Y$ onto $X$ as in \eqref{eq:projection-parameter}.
Defining $\eps:=Y-\theta^{*\top}X$, we have $\EE[\eps]=0$ and $\EE[X\eps]=0$.
Moreover, for every $\theta\in\RR^d$, we can decompose
$
\EE[(Y-\theta^\top X)^2]
=
(\theta-\theta^*)^\top\Sigma(\theta-\theta^*)+\sigma^2
$,
where $\sigma^2 := \EE[\eps^2]$.
\end{lemma}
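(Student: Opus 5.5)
The plan is a direct computation from the definition \eqref{eq:projection-parameter}; no earlier result is needed.

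First, I establish the two orthogonality properties. Since $\EE[X]=0$ and $\EE[Y]=0$, linearity of expectation gives
\[
\EE[\eps]=\EE[Y]-\theta^{*\top}\EE[X]=0 .
\]
For the second property, the objective $\theta\mapsto\EE[(Y-\theta^\top X)^2]$ is a finite quadratic, because $X$ and $Y$ have finite second moments. It is also strictly convex, since its Hessian is $2\Sigma\succ 0$. Its gradient is $-2\EE[X(Y-\theta^\top X)]$. Setting the gradient to zero at the minimizer $\theta^*$ yields the normal equations $\EE[X\eps]=0$, and hence $\theta^*=\Sigma^{-1}\EE[XY]$. Equivalently, one can define $\theta^*:=\Sigma^{-1}\EE[XY]$ and verify $\EE[X\eps]=\EE[XY]-\Sigma\theta^*=0$ directly.

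Second, I derive the decomposition. For arbitrary $\theta$, write
\[
Y-\theta^\top X=\eps+(\theta^*-\theta)^\top X .
\]
Squaring and taking expectations gives three terms:
\[
\EE[\eps^2]+2(\theta^*-\theta)^\top\EE[X\eps]+(\theta^*-\theta)^\top\EE[XX^\top](\theta^*-\theta).
\]
The cross term vanishes by $\EE[X\eps]=0$. Since $\EE[XX^\top]=\Sigma$ under centering, the remaining terms are $\sigma^2+(\theta-\theta^*)^\top\Sigma(\theta-\theta^*)$, which is the claimed identity.

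The only mildly delicate point is justifying that every expectation involved is finite, so that expanding the square is legitimate. This follows from the finite second moments of $X$ and $Y$ together with Cauchy--Schwarz, which bounds $\EE[|X_i\eps|]$. No genuine obstacle is expected beyond this bookkeeping.
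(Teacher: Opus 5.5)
Your proposal is correct and follows essentially the same route as the paper: centering gives $\EE[\eps]=0$, the normal equations (equivalently $\theta^*=\Sigma^{-1}\EE[XY]$) give $\EE[X\eps]=0$, and expanding $Y-\theta^\top X=(\theta^*-\theta)^\top X+\eps$ makes the cross term vanish. Your extra remark that all expectations are finite, via Cauchy--Schwarz and the finite second moments, is harmless bookkeeping that the paper leaves implicit.
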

\begin{proof}
Since $\EE[X]=0$ and $\EE[Y]=0$, $\EE[\eps]=\EE[Y]-\theta^{*\top}\EE[X]=0$.
Moreover, since $\Sigma\theta^*=\EE[XY]$ because $\theta^*=\Sigma^{-1}\EE[XY]$ (or from the first-order condition for the objective in \eqref{eq:projection-parameter}), we have
\[
\EE[X\eps]
=
\EE[XY]-\EE[XX^\top]\theta^*
=
\EE[XY]-\Sigma\theta^*
=
0.
\]
Now, for any $\theta \in \RR^d$, $Y-\theta^\top X = (\theta^*-\theta)^\top X+\eps$.
Therefore
\begin{align*}
\EE[(Y-\theta^\top X)^2]
&=
\EE[((\theta^*-\theta)^\top X)^2]
+
2\EE[((\theta^*-\theta)^\top X)\eps]
+
\EE[\eps^2] \\
&=
(\theta-\theta^*)^\top\Sigma(\theta-\theta^*)+\sigma^2,
\end{align*}
where the cross term vanishes because $\EE[X\eps]=0$.
\end{proof}

\subsection{Best response and strategic MSE}

Below we collect basic identities used in the proofs of results in \Cref{sec:model} and throughout the paper.
All results in this section are standard and are only reproduced for completeness.

\begin{lemma}[Organization's best response]
\label{lem:org-br}
For any linear predictor $f_{\theta,b}(x)=\theta^\top x+b$, the organization's best response in \eqref{eq:agent_br} is unique and equals $a^*(x;\theta)=H^{-1}\theta$.
In particular, under the unconstrained quadratic-cost model, the optimal action does not depend on the realized feature vector $x$.
\end{lemma}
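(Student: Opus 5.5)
The plan is to solve the organization's problem \eqref{eq:agent_br} directly as an unconstrained concave quadratic maximization in $a$. For fixed $x$, $\theta$ and $b$, the objective is
\[
g(a) := \theta^\top(x+a) + b - \tfrac12 a^\top H a .
\]
The terms $\theta^\top x + b$ do not depend on $a$, so they can be discarded. This leaves $\theta^\top a - \tfrac12 a^\top H a$, which already shows that the maximizer set cannot depend on $x$ (or on $b$).

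Next, I would use $H \succ 0$: the Hessian of $g$ is $-H \prec 0$, so $g$ is strictly concave on $\RR^d$. It is also coercive, since $g(a) \le \|\theta\|_2\|a\|_2 - \tfrac12 \lambda_{\min}(H)\|a\|_2^2 \to -\infty$. A maximizer therefore exists and is unique.

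The maximizer is characterized by the first-order condition $\nabla g(a) = \theta - Ha = 0$. Since $H$ is invertible, this gives $a^* = H^{-1}\theta$.

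As an independent check, completing the square gives
\[
\theta^\top a - \tfrac12 a^\top H a = \tfrac12 \theta^\top H^{-1}\theta - \tfrac12 (a - H^{-1}\theta)^\top H (a - H^{-1}\theta).
\]
This expression is maximized exactly when $a = H^{-1}\theta$, which proves existence, uniqueness and the formula in one line without any calculus.

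There is no real obstacle here. The only point needing care is to state explicitly that uniqueness relies on strict positive definiteness of $H$ (assumed in \Cref{sec:model}). The independence from $x$ then follows because $x$ enters only through an additive constant.
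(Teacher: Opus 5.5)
Your proof is correct and follows the paper's argument: discard the $a$-independent terms $\theta^\top x + b$, use $H \succ 0$ for strict concavity, and solve the first-order condition $\theta - Ha = 0$ to get $a^* = H^{-1}\theta$. Your coercivity remark and completing-the-square identity are valid additions that make existence of the maximizer explicit, which the paper leaves implicit.
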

\begin{proof}
For fixed $x$, $\theta$, and $b$, the organization's objective is:
\[
\theta^\top(x+a)+b-\tfrac{1}{2}a^\top H a.
\]
The terms $\theta^\top x+b$ do not depend on $a$, so the optimization problem is equivalent to:
\[
\max_{a\in\RR^d}
\big\{
\theta^\top a-\tfrac{1}{2}a^\top H a
\big\}.
\]
Since $H \succ 0$, this objective is strictly concave in $a$.
Its first-order condition is $\theta-Ha=0$.
Thus, the unique maximizer is $a^*(x;\theta)=H^{-1}\theta$.
Observe that this optimal manipulation depends on the announced coefficient vector $\theta$ but not on the realized feature vector $x$.
\end{proof}

Under this best response per \Cref{lem:org-br}, the deployed prediction is
\[
\theta^\top(X+a^*(X;\theta))+b
=
\theta^\top X+\theta^\top H^{-1}\theta+b.
\]
Thus strategic manipulation induces an endogenous deterministic shift $\theta^\top H^{-1}\theta$ in the prediction.

\begin{lemma}[Strategic MSE identity]
\label{lem:general-mse}
For any $\theta\in\mathbb R^d$ and $b \in \sR$,
\begin{equation}
\label{eq:strategic-mse-decomp}
\mathrm{MSE}_{\mathrm{strat}}(\theta, b)
= (\theta-\theta^*)^\top \Sigma (\theta-\theta^*)
\;+\; (b + \theta^\top H^{-1}\theta)^2
\;+\; \sigma^2.
\end{equation}
\end{lemma}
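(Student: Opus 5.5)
The plan is to substitute the best response from \Cref{lem:org-br} directly into the definition \eqref{eq:mse-strat} and then expand the square, using the projection decomposition of \Cref{lem:projection-decomposition} to handle the cross terms.

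\textbf{Step 1: reduce to a deterministic shift.} By \Cref{lem:org-br}, $a^*(\theta)=H^{-1}\theta$ for every $x$, so the deployed prediction error is
\[
\theta^\top(X+H^{-1}\theta)+b-Y=(\theta^\top X - Y) + c, \qquad c:=b+\theta^\top H^{-1}\theta .
\]
Here $c$ is a deterministic scalar.

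\textbf{Step 2: expand the square.} Expanding gives
\[
\mathrm{MSE}_{\mathrm{strat}}(\theta,b)=\EE[(\theta^\top X-Y)^2]+2c\,\EE[\theta^\top X-Y]+c^2 .
\]
The middle term vanishes because of the centering convention $\EE[X]=0$ and $\EE[Y]=0$. The last term is exactly $(b+\theta^\top H^{-1}\theta)^2$.

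\textbf{Step 3: decompose the first term.} By \Cref{lem:projection-decomposition},
\[
\EE[(Y-\theta^\top X)^2]=(\theta-\theta^*)^\top\Sigma(\theta-\theta^*)+\sigma^2 .
\]
Combining Steps 2 and 3 yields \eqref{eq:strategic-mse-decomp}.

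There is no real obstacle. The only points to check are that the best response is independent of $x$, so the shift is deterministic, and that the centering kills the cross term. If centering were not assumed, the means would simply be absorbed into $b$, as discussed in \Cref{sec:centering-projection-identities}.
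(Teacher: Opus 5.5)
Your proof is correct and follows essentially the same route as the paper: you substitute the $x$-independent best response $a^*=H^{-1}\theta$ to obtain a deterministic shift $c=b+\theta^\top H^{-1}\theta$, then expand the square and kill the cross terms using the centering convention. The only difference is bookkeeping. The paper writes $Y=\theta^{*\top}X+\eps$ and expands all six terms directly, using $\EE[X\eps]=0$, $\EE[X]=0$ and $\EE[\eps]=0$. You instead separate off $c$ first and then invoke \Cref{lem:projection-decomposition} for $\EE[(Y-\theta^\top X)^2]$, which packages the same orthogonality argument.
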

\begin{proof}
By \Cref{lem:org-br}, $a^*(X;\theta)=H^{-1}\theta$.
Therefore
\[
\theta^\top(X+a^*(X;\theta))+b-Y
=
\theta^\top X+\theta^\top H^{-1}\theta+b-Y.
\]
Using $Y=\theta^{*\top}X+\eps$, this becomes
\[
(\theta-\theta^*)^\top X+\theta^\top H^{-1}\theta+b-\eps.
\]
Let $c(\theta,b):=\theta^\top H^{-1}\theta+b$.
Then
\[
\theta^\top(X+a^*(X;\theta))+b-Y
=
(\theta-\theta^*)^\top X+c(\theta,b)-\eps.
\]
Squaring and taking expectations lets us decompose the strategic MSE \eqref{eq:mse-strat} as:
\begin{align*}
\mathrm{MSE}_{\mathrm{strat}}(\theta,b)
&=
\EE
\big[
(
(\theta-\theta^*)^\top X+c(\theta,b)-\eps
)^2
\big] \\
&=
\EE[((\theta-\theta^*)^\top X)^2]
+
c(\theta,b)^2
+
\EE[\eps^2] \\
&\qquad
-2\EE[((\theta-\theta^*)^\top X)\eps]
+
2c(\theta,b)\EE[(\theta-\theta^*)^\top X]
-
2c(\theta,b)\EE[\eps].
\end{align*}
The cross terms vanish because $\EE[X\eps]=0$, $\EE[X]=0$, and $\EE[\eps]=0$.
Hence
\[
\mathrm{MSE}_{\mathrm{strat}}(\theta,b)
=
(\theta-\theta^*)^\top\Sigma(\theta-\theta^*)
+
c(\theta,b)^2
+
\sigma^2,
\]
as claimed.
\end{proof}

We can interpret the three terms in the strategic MSE decomposition \eqref{eq:strategic-mse-decomp} as follows:
The first term is the loss from using coefficients $\theta$ instead of the best linear projection coefficients $\theta^*$.
The second term is the deterministic shift induced jointly by the intercept and the organization's equilibrium manipulation.
The final term $\sigma^2$ is the irreducible residual variance.

\subsection{Closed forms for the policy levers}
\label{sec:estimator-expressions}

We record here closed-form expressions for the estimators (\eqref{eq:subset-ls-compact}, \eqref{eq:ordinary-ridge-compact}, and \eqref{eq:subset-ridge-compact}) introduced in \Cref{sec:model}.
The results here are again standard and are only reported for completeness, in our notation.

Recall from \Cref{sec:model} that for a subset $\mathsf S\subseteq[d]$, we let $\mathsf R=[d]\setminus\mathsf S$.
We write $\Sigma_{\mathsf S y}:=\EE[X_{\mathsf S}Y]$.
Note that since $Y=\theta^{*\top}X+\eps$ and $\EE[X\eps]=0$, we have
\begin{equation}
\label{eq:Sigma-S-y-identity}
\Sigma_{\mathsf S y}
=
\Sigma_{\mathsf{SS}}\theta^*_{\mathsf S}
+
\Sigma_{\mathsf{SR}}\theta^*_{\mathsf R}.
\end{equation}

\begin{lemma}[Closed forms for support-restricted least squares and support-restricted ridge]
\label{lem:policy-lever-closed-forms}
Under the centering convention $\EE[X]=0$, $\EE[Y]=0$, all three estimators \eqref{eq:subset-ls-compact}, \eqref{eq:ordinary-ridge-compact}, and \eqref{eq:subset-ridge-compact}, introduced in \Cref{sec:model} have intercept $0$.
Moreover:
\begin{enumerate}
    \item For any $\mathsf S\subseteq[d]$, the support-restricted least squares estimator is:
    \[
    \theta^{\mathrm{LS}}(\mathsf S)
    =
    (\theta_{\mathsf S}^{\mathrm{LS}}(\mathsf S),0_{\mathsf R}),
    \]
    where
    \[
    \theta_{\mathsf S}^{\mathrm{LS}}(\mathsf S)
    =
    \Sigma_{\mathsf{SS}}^{-1}\Sigma_{\mathsf S y}
    =
    \theta^*_{\mathsf S}
    +
    \Sigma_{\mathsf{SS}}^{-1}\Sigma_{\mathsf{SR}}\theta^*_{\mathsf R}.
    \]
    \item For any $\lambda \ge 0$, the ordinary ridge estimator is:
    \[
    \theta^{\mathrm{OR}}(\lambda)
    =
    (\Sigma+\lambda I_d)^{-1}\Sigma\theta^*.
    \]
    \item
    For any $\mathsf S\subseteq[d]$ and $\lambda \ge 0$, the support-restricted ridge estimator is:
    \[
    \theta^{\mathrm{OR}}(\mathsf S,\lambda)
    =
    (\theta_{\mathsf S}^{\mathrm{OR}}(\mathsf S,\lambda),0_{\mathsf R}),
    \]
    where
    \[
    \theta_{\mathsf S}^{\mathrm{OR}}(\mathsf S,\lambda)
    =
    (\Sigma_{\mathsf{SS}}+\lambda I_{|\mathsf S|})^{-1}\Sigma_{\mathsf S y}.
    \]
    Equivalently,
    \[
    \theta_{\mathsf S}^{\mathrm{OR}}(\mathsf S,\lambda)
    =
    (\Sigma_{\mathsf{SS}}+\lambda I_{|\mathsf S|})^{-1}
    \left(
    \Sigma_{\mathsf{SS}}\theta^*_{\mathsf S}
    +
    \Sigma_{\mathsf{SR}}\theta^*_{\mathsf R}
    \right).
    \]
\end{enumerate}
\end{lemma}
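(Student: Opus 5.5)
The plan is to handle the intercept first, then derive each closed form from a strictly convex quadratic first-order condition. The support-restricted ridge program \eqref{eq:subset-ridge-compact} contains the other two as special cases: $\lambda=0$ gives \eqref{eq:subset-ls-compact} and $\mathsf S=[d]$ gives \eqref{eq:ordinary-ridge-compact}. So it suffices to treat \eqref{eq:subset-ridge-compact} for general $(\mathsf S,\lambda)$ and then specialize.

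\textbf{Step 1 (intercept).} Fix $\theta\in\Theta(\mathsf S)$ and expand $\E[(Y-\theta^\top X-b)^2]$. Since $\E[X]=0$ and $\E[Y]=0$, the cross term $-2b\,\E[Y-\theta^\top X]$ vanishes, leaving
\[
\E[(Y-\theta^\top X)^2]+b^2 .
\]
The penalty $\lambda\|\theta\|_2^2$ does not involve $b$. The objective is therefore uniquely minimized at $b=0$ for every $\theta$, so every minimizer has zero intercept.

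\textbf{Step 2 (coefficients).} Parametrize $\theta=(\eta,0_{\mathsf R})$ with $\eta\in\RR^{|\mathsf S|}$. Then
\[
\E[(Y-\eta^\top X_{\mathsf S})^2]=\E[Y^2]-2\eta^\top\Sigma_{\mathsf S y}+\eta^\top\Sigma_{\mathsf{SS}}\eta .
\]
The reduced objective is $\E[Y^2]-2\eta^\top\Sigma_{\mathsf S y}+\eta^\top(\Sigma_{\mathsf{SS}}+\lambda I)\eta$. Because $\Sigma\succ0$, its principal submatrix satisfies $\Sigma_{\mathsf{SS}}\succ0$. Hence $\Sigma_{\mathsf{SS}}+\lambda I\succ0$ for every $\lambda\ge0$, and the objective is strictly convex in $\eta$. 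Its unique minimizer, from the first-order condition, is
\[
\eta=(\Sigma_{\mathsf{SS}}+\lambda I)^{-1}\Sigma_{\mathsf S y} .
\]
Substituting \eqref{eq:Sigma-S-y-identity} gives the second, equivalent expression in item 3.

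\textbf{Step 3 (specializations).} Setting $\lambda=0$ yields $\theta^{\mathrm{LS}}_{\mathsf S}(\mathsf S)=\Sigma_{\mathsf{SS}}^{-1}\Sigma_{\mathsf S y}$. Applying \eqref{eq:Sigma-S-y-identity} again turns this into $\theta^*_{\mathsf S}+\Sigma_{\mathsf{SS}}^{-1}\Sigma_{\mathsf{SR}}\theta^*_{\mathsf R}$, which is item 1. Setting $\mathsf S=[d]$ makes $\mathsf R$ empty and $\Sigma_{\mathsf S y}=\E[XY]=\Sigma\theta^*$, giving $(\Sigma+\lambda I_d)^{-1}\Sigma\theta^*$, which is item 2.

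There is no real obstacle. The only points needing care are the vanishing of the intercept cross term, which uses the centering, and the invertibility of $\Sigma_{\mathsf{SS}}$ when $\lambda=0$, which follows from $\Sigma\succ0$.
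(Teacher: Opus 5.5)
Your proposal is correct and follows essentially the same route as the paper: you remove the intercept via the expansion $\E[(Y-\theta^\top X)^2]+b^2$, then solve the strictly convex quadratic in $\eta$ using $\Sigma_{\mathsf{SS}}\succ 0$. The only difference is organizational. You derive the support-restricted ridge formula once and specialize to $\lambda=0$ and $\mathsf S=[d]$ (using $\E[XY]=\Sigma\theta^*$), whereas the paper treats the three estimators separately and handles ordinary ridge via the projection decomposition of \Cref{lem:projection-decomposition}.
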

\begin{proof}
We first verify that the optimal intercept is zero in all three programs.
For any fixed $\theta$, the unregularized squared-error part satisfies
\[
\EE[(Y-\theta^\top X-b)^2]
=
\EE[(Y-\theta^\top X)^2]+b^2.
\]
Thus, the optimal unique intercept is zero.
The ridge penalty does not involve $b$, so the same conclusion holds for the ridge programs.
We next write out the coefficient expressions.

\paragraph{Support-restricted least squares.}
For $\theta\in\Theta(\mathsf S)$, write $\theta=(\eta,0_{\mathsf R})$ with $\eta\in\RR^{|\mathsf S|}$.
The support-restricted least squares solution reduces to
\[
\min_{\eta\in\RR^{|\mathsf S|}}
\EE[(Y-\eta^\top X_{\mathsf S})^2].
\]
The objective is strictly convex because $\Sigma_{\mathsf{SS}} \succ 0$.
Hence, the solution is unique given by $\theta_{\mathsf S}^{\mathrm{LS}}(\mathsf S) = \Sigma_{\mathsf{SS}}^{-1}\Sigma_{\mathsf S y}$, and using \eqref{eq:Sigma-S-y-identity} gives
\[
\theta_{\mathsf S}^{\mathrm{LS}}(\mathsf S)
=
\Sigma_{\mathsf{SS}}^{-1}
\left(
\Sigma_{\mathsf{SS}}\theta^*_{\mathsf S}
+
\Sigma_{\mathsf{SR}}\theta^*_{\mathsf R}
\right)
=
\theta^*_{\mathsf S}
+
\Sigma_{\mathsf{SS}}^{-1}\Sigma_{\mathsf{SR}}\theta^*_{\mathsf R}.
\]
\paragraph{Ridge.}
By \Cref{lem:projection-decomposition}, we can decompose
\[
\EE[(Y-\theta^\top X)^2]
=
(\theta-\theta^*)^\top\Sigma(\theta-\theta^*)+\sigma^2.
\]
Thus ordinary ridge minimizes
\[
\min_{\theta \in \RR^d} (\theta-\theta^*)^\top\Sigma(\theta-\theta^*)+\lambda\|\theta\|_2^2.
\]
Since $\Sigma+\lambda I_d \succ 0$, the solution is unique and equals 
\[
\theta^{\mathrm{OR}}(\lambda)
=
(\Sigma+\lambda I_d)^{-1}\Sigma\theta^*.
\]
\paragraph{Support-restricted ridge.}
Again write $\theta=(\eta,0_{\mathsf R})$.
The support-restricted ridge problem reduces to
\[
\min_{\eta\in\RR^{|\mathsf S|}}
\left\{
\EE[(Y-\eta^\top X_{\mathsf S})^2]
+
\lambda\|\eta\|_2^2
\right\}.
\]
Since $\Sigma_{\mathsf{SS}}+\lambda I_{|\mathsf S|} \succ 0$, the solution is unique again and after substituting \eqref{eq:Sigma-S-y-identity} equals
\[
\theta_{\mathsf S}^{\mathrm{OR}}(\mathsf S,\lambda)
=
(\Sigma_{\mathsf{SS}}+\lambda I_{|\mathsf S|})^{-1}\Sigma_{\mathsf S y}
= 
(\Sigma_{\mathsf{SS}}+\lambda I_{|\mathsf S|})^{-1} (\Sigma_{\mathsf{SS}}\theta^*_{\mathsf S} + \Sigma_{\mathsf{SR}}\theta^*_{\mathsf R}),
\]
as desired.
\end{proof}

The formula $\theta_{\mathsf S}^{\mathrm{LS}}(\mathsf S) = \theta^*_{\mathsf S} + \Sigma_{\mathsf{SS}}^{-1}\Sigma_{\mathsf{SR}}\theta^*_{\mathsf R}$
shows that omitted features can still affect the fitted coefficients on retained features through the covariance structure of $X$.
Thus, the value of dropping a feature depends not only on its own manipulability, but also on how its signal loads onto the features that remain.
This observation becomes important in the support restriction decomposition in \Cref{sec:theory}.

\begin{corollary}[Nesting of the policy levers]
\label{cor:policy-lever-nesting}
The support-restricted ordinary ridge estimator satisfies $\theta^{\mathrm{OR}}(\mathsf S,0)=\theta^{\mathrm{LS}}(\mathsf S)$ and, for $\mathsf S=[d]$, $\theta^{\mathrm{OR}}([d],\lambda)=\theta^{\mathrm{OR}}(\lambda)$.
\end{corollary}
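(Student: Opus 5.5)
This is a direct specialization of \Cref{lem:policy-lever-closed-forms}, so the plan is to read off both identities from the closed forms recorded there rather than re-solve any optimization problem.

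For the first identity, I set $\lambda=0$ in item 3 of \Cref{lem:policy-lever-closed-forms}. This gives $\theta_{\mathsf S}^{\mathrm{OR}}(\mathsf S,0)=(\Sigma_{\mathsf{SS}}+0\cdot I_{|\mathsf S|})^{-1}\Sigma_{\mathsf S y}=\Sigma_{\mathsf{SS}}^{-1}\Sigma_{\mathsf S y}$, which is the retained block $\theta_{\mathsf S}^{\mathrm{LS}}(\mathsf S)$ from item 1. Both estimators vanish on $\mathsf R$ by construction, so $\theta^{\mathrm{OR}}(\mathsf S,0)=\theta^{\mathrm{LS}}(\mathsf S)$. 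The only point to check is that item 3 is stated for all $\lambda\ge 0$ and that $\Sigma_{\mathsf{SS}}\succ 0$ keeps the inverse well defined at $\lambda=0$. Both hold, since $\Sigma\succ 0$ implies every principal submatrix is positive definite.

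For the second identity, I take $\mathsf S=[d]$. Then $\mathsf R=\emptyset$, $\Sigma_{\mathsf{SS}}=\Sigma$, and by \eqref{eq:Sigma-S-y-identity} the cross term disappears, so $\Sigma_{\mathsf S y}=\Sigma\theta^*$. Item 3 then gives $(\Sigma+\lambda I_d)^{-1}\Sigma\theta^*$, which is exactly item 2. Alternatively, one can observe directly that the constraint $\Theta([d])=\mathbb R^d$ is vacuous, so programs \eqref{eq:subset-ridge-compact} and \eqref{eq:ordinary-ridge-compact} are identical and have the same unique minimizer.

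The intercepts agree (all are $0$) by the first part of \Cref{lem:policy-lever-closed-forms}. There is no real obstacle here; the only care needed is the edge-case bookkeeping of an empty $\mathsf R$ block and $\lambda=0$.
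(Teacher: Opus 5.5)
Your proposal is correct and follows the paper's own proof, which also reads off both identities directly from the closed forms in \Cref{lem:policy-lever-closed-forms}. Your specialization of item 3 at $\lambda=0$, and at $\mathsf S=[d]$ (where $\Sigma_{\mathsf S y}=\Sigma\theta^*$ because $\mathsf R=\emptyset$), is that same argument with the details written out.
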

\begin{proof}
Both identities follow immediately from the closed forms in \Cref{lem:policy-lever-closed-forms}.
\end{proof}

\section{Theoretical Details and Proofs for \Cref{sec:theory}}
\label{app:sec:theory}

Throughout this appendix, we use the same convention as in the main text and write $\mathrm{MSE}_{\mathrm{strat}}(\theta)$ as shorthand for $\mathrm{MSE}_{\mathrm{strat}}(\theta,0)$, as defined in \eqref{eq:mse-strat}.
Thus, by \Cref{lem:general-mse},
\[
\mathrm{MSE}_{\mathrm{strat}}(\theta)
=
(\theta-\theta^*)^\top\Sigma(\theta-\theta^*)
+
(\theta^\top H^{-1}\theta)^2
+
\sigma^2.
\]
For a support $\mathsf S\subseteq[d]$, 
\[
\Theta(\mathsf S)
:=
\{\theta\in\RR^d:\supp(\theta)\subseteq\mathsf S\},
\]
and
\[
\widetilde{\OPT}(\mathsf S)
:=
\min_{\theta\in\Theta(\mathsf S)}
\mathrm{MSE}_{\mathrm{strat}}(\theta),
\qquad
\widetilde{\OPT}
:=
\min_{\theta\in\RR^d}
\mathrm{MSE}_{\mathrm{strat}}(\theta).
\]

\subsection{Proof of \Cref{thm:lower-bound}}

Fix any $\mathsf S\subseteq[d]$ and $\lambda \ge 0$.
By construction, $\theta^{\mathrm{OR}}(\mathsf S,\lambda)\in\Theta(\mathsf S)$.
Therefore,
\[
\mathrm{MSE}_{\mathrm{strat}}(\theta^{\mathrm{OR}}(\mathsf S,\lambda))
\ge
\widetilde{\OPT}(\mathsf S).
\]
Moreover, since $\Theta(\mathsf S)\subseteq\RR^d$,
\[
\widetilde{\OPT}(\mathsf S)
\ge
\widetilde{\OPT}.
\]
Combining the two inequalities gives
\[
\mathrm{MSE}_{\mathrm{strat}}(\theta^{\mathrm{OR}}(\mathsf S,\lambda))
\ge
\widetilde{\OPT}.
\]
Subtracting $\OPT$ from both sides and minimizing over $(\mathsf S,\lambda)$ yields
\[
\widetilde{\OPT}-\OPT
\le
\min_{\mathsf S\subseteq[d],\,\lambda\ge0}
\left\{
\mathrm{MSE}_{\mathrm{strat}}(\theta^{\mathrm{OR}}(\mathsf S,\lambda))
-\OPT
\right\}.
\]
This proves the claimed lower bound.

\subsection{Proof of \Cref{prop:irreducible-error-bounds}}

\begin{proposition}[Irreducible error bounds]
\label{prop:irreducible-error-bounds}
We have $0\le \widetilde{\OPT}-\OPT \le (\theta^{*\top}H^{-1}\theta^*)^2$.
\end{proposition}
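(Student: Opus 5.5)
The plan is to sandwich $\widetilde{\OPT}$ between $\OPT$ and the zero-intercept strategic MSE of the single candidate $\theta^*$, using only \Cref{lem:strategic-opt} and \Cref{lem:general-mse}.

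For the lower bound $\widetilde{\OPT}-\OPT\ge 0$, note that $\widetilde{\OPT}$ minimizes $\MSEstrat(\theta,b)$ over the subset $\{(\theta,0):\theta\in\RR^d\}$ of the feasible set defining $\OPT$ in \eqref{eq:opt}. A minimum over a smaller set cannot be smaller, so $\widetilde{\OPT}\ge\OPT$.

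For the upper bound, I would evaluate the zero-intercept objective at the feasible point $\theta=\theta^*$. By \Cref{lem:general-mse} with $b=0$, the first term vanishes, giving
\[
\MSEstrat(\theta^*,0)=(\theta^{*\top}H^{-1}\theta^*)^2+\sigma^2 .
\]
Hence $\widetilde{\OPT}\le(\theta^{*\top}H^{-1}\theta^*)^2+\sigma^2$. Since \Cref{lem:strategic-opt} gives $\OPT=\sigma^2$, subtracting yields $\widetilde{\OPT}-\OPT\le(\theta^{*\top}H^{-1}\theta^*)^2$.

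There is no real obstacle. The only point needing care is that the centering convention lets us use the decomposition of \Cref{lem:general-mse} with the cross terms vanishing, and that lemma already covers it.
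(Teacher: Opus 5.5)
Your proposal is correct and matches the paper's proof essentially verbatim. The lower bound comes from $\widetilde{\OPT}$ minimizing over a subset of the feasible set for $\OPT$; the upper bound comes from evaluating $\MSEstrat(\theta^*,0)=\sigma^2+(\theta^{*\top}H^{-1}\theta^*)^2$ via \Cref{lem:general-mse} and then using $\OPT=\sigma^2$ from \Cref{lem:strategic-opt}.
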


\begin{proof}
Since $\widetilde{\OPT}$ minimizes over zero-intercept linear rules, while $\OPT$ minimizes over both coefficients and intercepts, we have $\OPT\le\widetilde{\OPT}$.
Hence $0\le\widetilde{\OPT}-\OPT$.
For the upper bound, evaluate the zero-intercept benchmark at $\theta=\theta^*$: $\widetilde{\OPT} \le \mathrm{MSE}_{\mathrm{strat}}(\theta^*)$.
By the strategic MSE identity,
\[
\mathrm{MSE}_{\mathrm{strat}}(\theta^*)
=
(\theta^*-\theta^*)^\top\Sigma(\theta^*-\theta^*)
+
(\theta^{*\top}H^{-1}\theta^*)^2
+
\sigma^2
=
\sigma^2+(\theta^{*\top}H^{-1}\theta^*)^2.
\]
By \Cref{lem:strategic-opt}, $\OPT=\sigma^2$. Therefore $\widetilde{\OPT}-\OPT \le (\theta^{*\top}H^{-1}\theta^*)^2$.
\end{proof}

\subsection{Proof of \Cref{prop:support-restriction}}

\begin{proposition}[Support restriction identity]
\label{prop:support-restriction}
Fix a subset of features $\sset\subseteq[d]$ and let $\mathsf R:=[d]\setminus\sset$.
Define the predictive loss from omitting the coordinates in $\mathsf R$ 
\[
L_{\mathrm{pred}}(\sset)
:=
(\theta^*_{\mathsf R})^\top
\Sigma_{\mathsf R\mid \mathsf S}
\theta^*_{\mathsf R},
\qquad
\Sigma_{\mathsf R\mid \mathsf S}
:=
\Sigma_{\mathsf{RR}}
-
\Sigma_{\mathsf{RS}}
\Sigma_{\mathsf{SS}}^{-1}
\Sigma_{\mathsf{SR}},
\]
and the support-restricted strategic burden
\begin{equation}
\label{eq:Gamma-S}
\Gamma(\mathsf S)
:=
\min_{\eta\in\mathbb R^{|\mathsf S|}}
\left\{
(\eta-\theta_{\mathsf S}^{\mathrm{LS}}(\mathsf S))^\top
\Sigma_{\mathsf{SS}}
(\eta-\theta_{\mathsf S}^{\mathrm{LS}}(\mathsf S))
+
(\eta^\top H^{-1}_{\mathsf{SS}}\eta)^2
\right\}.
\end{equation}
Then
\begin{equation}
\label{eq:support-restriction-decomp}
\widetilde{\OPT}(\sset)-\widetilde{\OPT}
=
L_{\mathrm{pred}}(\sset)\;-\;\big(\Gamma([d])-\Gamma(\sset)\big).
\end{equation}
\end{proposition}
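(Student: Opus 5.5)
The plan is to compute $\widetilde{\OPT}(\mathsf S)$ exactly, in a form that separates a non-strategic predictive piece from a strategic piece living only on $\mathsf S$. We then specialize to $\mathsf S=[d]$ and subtract. By \Cref{lem:general-mse} with $b=0$, every $\theta\in\Theta(\mathsf S)$, written $\theta=(\eta,0_{\mathsf R})$, satisfies
\[
\MSEstrat(\theta)=(\theta-\theta^*)^\top\Sigma(\theta-\theta^*)+(\eta^\top H^{-1}_{\mathsf{SS}}\eta)^2+\sigma^2 .
\]
Here we use $\theta^\top H^{-1}\theta=\eta^\top H^{-1}_{\mathsf{SS}}\eta$, since the zero block of $\theta$ kills all other blocks of $H^{-1}$. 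It is worth noting that $H^{-1}_{\mathsf{SS}}$ denotes the $\mathsf{SS}$ block of $H^{-1}$, not the inverse of $H_{\mathsf{SS}}$. This is the convention consistent with $\Gamma$ and with \Cref{lem:org-br}.

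The key step is a Pythagorean decomposition of the fit term on $\Theta(\mathsf S)$. Writing $\theta-\theta^*=(\eta-\theta^*_{\mathsf S},-\theta^*_{\mathsf R})$ and expanding the quadratic form gives a quadratic in $\eta$ whose Hessian is $\Sigma_{\mathsf{SS}}$. Its minimizer is $\theta^{\LS}_{\mathsf S}(\mathsf S)=\theta^*_{\mathsf S}+\Sigma_{\mathsf{SS}}^{-1}\Sigma_{\mathsf{SR}}\theta^*_{\mathsf R}$, by \Cref{lem:policy-lever-closed-forms}. Completing the square therefore yields
\[
(\theta-\theta^*)^\top\Sigma(\theta-\theta^*)=(\eta-\theta^{\LS}_{\mathsf S}(\mathsf S))^\top\Sigma_{\mathsf{SS}}(\eta-\theta^{\LS}_{\mathsf S}(\mathsf S))+c_{\mathsf S},
\]
where $c_{\mathsf S}$ is the value of the quadratic at its minimizer. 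I would verify by direct substitution that $c_{\mathsf S}=(\theta^*_{\mathsf R})^\top(\Sigma_{\mathsf{RR}}-\Sigma_{\mathsf{RS}}\Sigma_{\mathsf{SS}}^{-1}\Sigma_{\mathsf{SR}})\theta^*_{\mathsf R}=L_{\mathrm{pred}}(\mathsf S)$. This is the standard Schur-complement residual of regressing $X_{\mathsf R}^\top\theta^*_{\mathsf R}$ on $X_{\mathsf S}$. Equivalently, $c_{\mathsf S}=\MSE(\theta^{\LS}(\mathsf S))-\sigma^2$, and the cross term vanishes by the normal equations for the restricted least squares problem.

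Minimizing over $\eta$ then gives $\widetilde{\OPT}(\mathsf S)=\sigma^2+L_{\mathrm{pred}}(\mathsf S)+\Gamma(\mathsf S)$, with $\Gamma$ as in \eqref{eq:Gamma-S}. The minimum in $\Gamma$ is attained because the objective is continuous and coercive, since $\Sigma_{\mathsf{SS}}\succ0$. For $\mathsf S=[d]$ we have $\mathsf R=\emptyset$, so $L_{\mathrm{pred}}([d])=0$ and $\theta^{\LS}([d])=\theta^*$. This gives $\widetilde{\OPT}=\sigma^2+\Gamma([d])$. Subtracting the two expressions yields \eqref{eq:support-restriction-decomp}.

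The only step needing care is the completing-the-square computation identifying the constant as the Schur complement. I would carry it out by substituting $\eta=\theta^{\LS}_{\mathsf S}(\mathsf S)+u$ and checking that the linear term in $u$ vanishes. I would also handle the conventions for the degenerate cases $\mathsf S=\emptyset$ and $\mathsf R=\emptyset$, where empty blocks are interpreted as zero.
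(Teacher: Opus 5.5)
Your proposal is correct and follows essentially the same route as the paper: a support-restricted Pythagorean identity (completing the square around $\theta^{\LS}_{\mathsf S}(\mathsf S)$ to get the Schur-complement constant $L_{\mathrm{pred}}(\mathsf S)$) gives $\widetilde{\OPT}(\mathsf S)=\sigma^2+L_{\mathrm{pred}}(\mathsf S)+\Gamma(\mathsf S)$, and specializing to $\mathsf S=[d]$ and subtracting yields the identity. Your added remarks on attainment of the minimum in $\Gamma$ and on the empty-block conventions are harmless refinements that the paper leaves implicit.
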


We first record a useful support-restricted decomposition.

\begin{lemma}[Support-restricted Pythagorean identity]
\label{lem:support-pythagorean}
Fix $\mathsf S\subseteq[d]$ and let $\mathsf R=[d]\setminus\mathsf S$.
For any $\theta\in\Theta(\mathsf S)$, write $\theta=(\eta,0_{\mathsf R})$, where $\eta\in\RR^{|\mathsf S|}$.
Let
\[
\theta_{\mathsf S}^{\mathrm{LS}}(\mathsf S)
=
\Sigma_{\mathsf{SS}}^{-1}\Sigma_{\mathsf S y}
=
\theta^*_{\mathsf S}
+
\Sigma_{\mathsf{SS}}^{-1}\Sigma_{\mathsf{SR}}\theta^*_{\mathsf R}.
\]
Then
\[
(\theta-\theta^*)^\top\Sigma(\theta-\theta^*)
=
L_{\mathrm{pred}}(\mathsf S)
+
(\eta-\theta_{\mathsf S}^{\mathrm{LS}}(\mathsf S))^\top
\Sigma_{\mathsf{SS}}
(\eta-\theta_{\mathsf S}^{\mathrm{LS}}(\mathsf S)),
\]
where
\[
L_{\mathrm{pred}}(\mathsf S)
=
(\theta^*_{\mathsf R})^\top
\Sigma_{\mathsf R\mid\mathsf S}
\theta^*_{\mathsf R},
\qquad
\Sigma_{\mathsf R\mid\mathsf S}
=
\Sigma_{\mathsf{RR}}
-
\Sigma_{\mathsf{RS}}\Sigma_{\mathsf{SS}}^{-1}\Sigma_{\mathsf{SR}}.
\]
\end{lemma}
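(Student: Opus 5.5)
The plan is to obtain the identity from the standard block Schur-complement decomposition of the quadratic form $(\theta-\theta^*)^\top\Sigma(\theta-\theta^*)$. Fix $\theta=(\eta,0_{\mathsf R})\in\Theta(\mathsf S)$ and set $u:=\eta-\theta^*_{\mathsf S}$ and $v:=-\theta^*_{\mathsf R}$. Then $\theta-\theta^*=(u,v)$ in block coordinates, and expanding with the block partition of $\Sigma$ gives
\[
(\theta-\theta^*)^\top\Sigma(\theta-\theta^*)
=
u^\top\Sigma_{\mathsf{SS}}u+2u^\top\Sigma_{\mathsf{SR}}v+v^\top\Sigma_{\mathsf{RR}}v .
\]

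The key step is to complete the square in $u$. Since $\Sigma_{\mathsf{SS}}\succ0$ (it is a principal submatrix of $\Sigma\succ0$), we can write
\[
u^\top\Sigma_{\mathsf{SS}}u+2u^\top\Sigma_{\mathsf{SR}}v
=
(u+\Sigma_{\mathsf{SS}}^{-1}\Sigma_{\mathsf{SR}}v)^\top\Sigma_{\mathsf{SS}}(u+\Sigma_{\mathsf{SS}}^{-1}\Sigma_{\mathsf{SR}}v)
-
v^\top\Sigma_{\mathsf{RS}}\Sigma_{\mathsf{SS}}^{-1}\Sigma_{\mathsf{SR}}v .
\]
Adding $v^\top\Sigma_{\mathsf{RR}}v$ collects the last two terms into $v^\top\Sigma_{\mathsf R\mid\mathsf S}v=(\theta^*_{\mathsf R})^\top\Sigma_{\mathsf R\mid\mathsf S}\theta^*_{\mathsf R}=L_{\mathrm{pred}}(\mathsf S)$. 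This uses that $v$ enters quadratically, so the sign flip does not matter.

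Next, identify the shifted vector. We have
\[
u+\Sigma_{\mathsf{SS}}^{-1}\Sigma_{\mathsf{SR}}v
=
\eta-\theta^*_{\mathsf S}-\Sigma_{\mathsf{SS}}^{-1}\Sigma_{\mathsf{SR}}\theta^*_{\mathsf R}
=
\eta-\theta^{\mathrm{LS}}_{\mathsf S}(\mathsf S),
\]
by the closed form in \Cref{lem:policy-lever-closed-forms}. This yields exactly the claimed identity. The argument is essentially routine. The only point needing care is tracking signs and the symmetry $\Sigma_{\mathsf{RS}}=\Sigma_{\mathsf{SR}}^\top$ in the cross term, which I expect to be the only potential source of error.

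As a sanity check, one can alternatively argue via \Cref{lem:projection-decomposition} applied on $\mathsf S$. The risk $\EE[(Y-\eta^\top X_{\mathsf S})^2]$ equals its minimum plus $(\eta-\theta^{\mathrm{LS}}_{\mathsf S}(\mathsf S))^\top\Sigma_{\mathsf{SS}}(\eta-\theta^{\mathrm{LS}}_{\mathsf S}(\mathsf S))$, by orthogonality of the residual of $Y$ on $X_{\mathsf S}$. The minimum exceeds $\sigma^2$ by the Schur-complement term $L_{\mathrm{pred}}(\mathsf S)$, which is the residual variance of $\theta^{*\top}_{\mathsf R}X_{\mathsf R}$ after projecting onto $X_{\mathsf S}$.
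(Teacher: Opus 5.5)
Your proposal is correct and matches the paper's proof: expand the quadratic form blockwise with $\theta-\theta^*=(\eta-\theta^*_{\mathsf S},-\theta^*_{\mathsf R})$, then complete the square in the retained block. The leftover term is the Schur-complement quantity $L_{\mathrm{pred}}(\mathsf S)$, and the shifted vector is $\eta-\theta^{\mathrm{LS}}_{\mathsf S}(\mathsf S)$; your sign bookkeeping checks out, and the projection-based sanity check is a valid but unnecessary alternative view.
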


\begin{proof}
Let $z:=\eta-\theta^*_{\mathsf S}$.
Since $\theta=(\eta,0_{\mathsf R})$, we have $\theta-\theta^* = (z,-\theta^*_{\mathsf R})$.
Expanding the quadratic form blockwise gives
\[
(\theta-\theta^*)^\top\Sigma(\theta-\theta^*)
=
z^\top\Sigma_{\mathsf{SS}}z
-
2z^\top\Sigma_{\mathsf{SR}}\theta^*_{\mathsf R}
+
(\theta^*_{\mathsf R})^\top\Sigma_{\mathsf{RR}}\theta^*_{\mathsf R}.
\]
Now define
$
z_0
:=
\eta-\theta_{\mathsf S}^{\mathrm{LS}}(\mathsf S)
=
z-\Sigma_{\mathsf{SS}}^{-1}\Sigma_{\mathsf{SR}}\theta^*_{\mathsf R}
$.
Completing the square yields
\[
z^\top\Sigma_{\mathsf{SS}}z
-
2z^\top\Sigma_{\mathsf{SR}}\theta^*_{\mathsf R}
=
z_0^\top\Sigma_{\mathsf{SS}}z_0
-
(\theta^*_{\mathsf R})^\top
\Sigma_{\mathsf{RS}}\Sigma_{\mathsf{SS}}^{-1}\Sigma_{\mathsf{SR}}
\theta^*_{\mathsf R}.
\]
Substituting this into the previous display gives
\[
(\theta-\theta^*)^\top\Sigma(\theta-\theta^*)
=
z_0^\top\Sigma_{\mathsf{SS}}z_0
+
(\theta^*_{\mathsf R})^\top
\left(
\Sigma_{\mathsf{RR}}
-
\Sigma_{\mathsf{RS}}\Sigma_{\mathsf{SS}}^{-1}\Sigma_{\mathsf{SR}}
\right)
\theta^*_{\mathsf R},
\]
which is the claimed identity.
\end{proof}

\begin{proof}[Proof of \Cref{prop:support-restriction}]
Fix $\mathsf S\subseteq[d]$ and write $\mathsf R=[d]\setminus\mathsf S$.
For any $\theta=(\eta,0_{\mathsf R})\in\Theta(\mathsf S)$, \Cref{lem:support-pythagorean} gives
\[
(\theta-\theta^*)^\top\Sigma(\theta-\theta^*)
=
L_{\mathrm{pred}}(\mathsf S)
+
(\eta-\theta_{\mathsf S}^{\mathrm{LS}}(\mathsf S))^\top
\Sigma_{\mathsf{SS}}
(\eta-\theta_{\mathsf S}^{\mathrm{LS}}(\mathsf S)).
\]
Moreover,
$
\theta^\top H^{-1}\theta
=
\eta^\top H^{-1}_{\mathsf{SS}}\eta
$.
Therefore
\[
\mathrm{MSE}_{\mathrm{strat}}(\theta)
=
\sigma^2
+
L_{\mathrm{pred}}(\mathsf S)
+
(\eta-\theta_{\mathsf S}^{\mathrm{LS}}(\mathsf S))^\top
\Sigma_{\mathsf{SS}}
(\eta-\theta_{\mathsf S}^{\mathrm{LS}}(\mathsf S))
+
(\eta^\top H^{-1}_{\mathsf{SS}}\eta)^2.
\]
Minimizing over $\eta\in\RR^{|\mathsf S|}$ yields
\[
\widetilde{\OPT}(\mathsf S)
=
\sigma^2
+
L_{\mathrm{pred}}(\mathsf S)
+
\Gamma(\mathsf S),
\]
where
\[
\Gamma(\mathsf S)
=
\min_{\eta\in\RR^{|\mathsf S|}}
\left\{
(\eta-\theta_{\mathsf S}^{\mathrm{LS}}(\mathsf S))^\top
\Sigma_{\mathsf{SS}}
(\eta-\theta_{\mathsf S}^{\mathrm{LS}}(\mathsf S))
+
(\eta^\top H^{-1}_{\mathsf{SS}}\eta)^2
\right\}.
\]
For the full support $[d]$, we have $L_{\mathrm{pred}}([d])=0$ and $\theta^{\mathrm{LS}}([d])=\theta^*$, so
\[
\widetilde{\OPT}
=
\widetilde{\OPT}([d])
=
\sigma^2+\Gamma([d]).
\]
Subtracting gives
\[
\widetilde{\OPT}(\mathsf S)-\widetilde{\OPT}
=
L_{\mathrm{pred}}(\mathsf S)
+
\Gamma(\mathsf S)-\Gamma([d])
=
L_{\mathrm{pred}}(\mathsf S)
-
\bigl(\Gamma([d])-\Gamma(\mathsf S)\bigr),
\]
as claimed.
\end{proof}

\subsection{Proof of \Cref{thm:fixed-support}}

\begin{theorem}[Near-optimality of ridge on a fixed support]
\label{thm:fixed-support}
For every $\mathsf S \subseteq [d]$,
\[
0
\le
\min_{\lambda\ge 0}
\left\{
\mathrm{MSE}_{\mathrm{strat}}(\theta^{\mathrm{OR}}(\mathsf S,\lambda))
-
\widetilde{\OPT}(\mathsf S)
\right\}
\le
C_H(\mathsf S)\,\delta_H(\mathsf S)^2,
\]
where
$C_H(\mathsf S):=4\,L_H(\mathsf S)\,\|\Sigma_{\mathsf{SS}}^{-1/2} H^{-1}_{\mathsf{SS}} \Sigma_{\mathsf{SS}}^{-1/2}\|_{\op}^2\,\|\Sigma_{\mathsf{SS}}^{1/2}\theta_{\mathsf S}^{\mathrm{LS}}(\mathsf S)\|_2^6$, with 
$L_H(\mathsf S):=1+6\|\Sigma_{\mathsf{SS}}^{-1/2} H^{-1}_{\mathsf{SS}} \Sigma_{\mathsf{SS}}^{-1/2}\|_{\op}^2\|\Sigma_{\mathsf{SS}}^{1/2}\theta_{\mathsf S}^{\mathrm{LS}}(\mathsf S)\|_2^2$.
\end{theorem}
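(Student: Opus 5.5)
Throughout, write $u_0:=\Sigma_{\mathsf{SS}}^{1/2}\theta_{\mathsf S}^{\mathrm{LS}}(\mathsf S)$ and $A:=\Sigma_{\mathsf{SS}}^{-1/2}H^{-1}_{\mathsf{SS}}\Sigma_{\mathsf{SS}}^{-1/2}\succ 0$.

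\emph{Lower bound and reduction.} The lower bound $0\le\cdot$ is immediate, since $\theta^{\mathrm{OR}}(\mathsf S,\lambda)\in\Theta(\mathsf S)$ (as in the proof of \Cref{thm:lower-bound}). For the upper bound I would work in whitened coordinates on the retained support. For $\theta=(\eta,0_{\mathsf R})$, set $u:=\Sigma_{\mathsf{SS}}^{1/2}\eta$. The proof of \Cref{prop:support-restriction} shows that $\mathrm{MSE}_{\mathrm{strat}}(\theta)=\sigma^2+L_{\mathrm{pred}}(\mathsf S)+F(u)$, where
\[
F(u):=\|u-u_0\|_2^2+(u^\top A u)^2 .
\]
Hence the quantity to bound is $\min_{\lambda\ge0}F(u_\lambda)-\min_u F(u)$. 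Using \Cref{lem:policy-lever-closed-forms}, the ridge iterate is
\[
u_\lambda=\Sigma_{\mathsf{SS}}^{1/2}(\Sigma_{\mathsf{SS}}+\lambda I)^{-1}\Sigma_{\mathsf{SS}}^{1/2}u_0=(I+\lambda\Sigma_{\mathsf{SS}}^{-1})^{-1}u_0 .
\]
In these coordinates $\delta_H(\mathsf S)=\inf_{\gamma\ge0}\|A-\gamma\Sigma_{\mathsf{SS}}^{-1}\|_{\op}$.

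\emph{Oracle as a generalized ridge.} Let $u^\star$ minimize $F$; a minimizer exists since $F$ is coercive and continuous. Put $q:=u^{\star\top}Au^\star\ge0$. The first-order condition $u^\star-u_0+2qAu^\star=0$ gives $u^\star=(I+2qA)^{-1}u_0$, which is the generalized ridge form of \Cref{lem:fixed-support-oracle-gr}. Two a priori bounds follow. First, $(I+2qA)^{-1}$ has operator norm at most $1$, so $\|u^\star\|\le\|u_0\|$. Second, $q\le\|A\|_{\op}\|u_0\|^2$. Likewise $(I+\lambda\Sigma_{\mathsf{SS}}^{-1})^{-1}$ is a contraction for $\lambda\ge0$, so $\|u_\lambda\|\le\|u_0\|$.

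\emph{Matching the scalar ridge.} Fix any $\gamma\ge0$ and choose $\lambda:=2q\gamma\ge0$. The resolvent identity $B^{-1}-C^{-1}=B^{-1}(C-B)C^{-1}$ gives
\[
u^\star-u_\lambda=(I+2qA)^{-1}\,2q(\gamma\Sigma_{\mathsf{SS}}^{-1}-A)\,(I+2q\gamma\Sigma_{\mathsf{SS}}^{-1})^{-1}u_0 .
\]
Both resolvents are contractions, so
\[
\|u_\lambda-u^\star\|\le 2q\|A-\gamma\Sigma_{\mathsf{SS}}^{-1}\|_{\op}\|u_0\|\le 2\|A\|_{\op}\|u_0\|^3\,\|A-\gamma\Sigma_{\mathsf{SS}}^{-1}\|_{\op}.
\]

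\emph{Converting distance to excess loss.} Since $\nabla F(u^\star)=0$, a second-order Taylor bound gives
\[
F(u_\lambda)-F(u^\star)\le\tfrac12\sup_{u\in[u^\star,u_\lambda]}\|\nabla^2F(u)\|_{\op}\,\|u_\lambda-u^\star\|^2 .
\]
The Hessian is $\nabla^2F(u)=2I+4(u^\top Au)A+8Auu^\top A$. On the segment we have $\|u\|\le\|u_0\|$ by convexity of the ball, so $\|\nabla^2F(u)\|_{\op}\le 2+12\|A\|_{\op}^2\|u_0\|^2=2L_H(\mathsf S)$. Combining with the distance bound,
\[
F(u_\lambda)-F(u^\star)\le L_H(\mathsf S)\cdot 4\|A\|_{\op}^2\|u_0\|^6\|A-\gamma\Sigma_{\mathsf{SS}}^{-1}\|_{\op}^2 .
\]
This is exactly $C_H(\mathsf S)\|A-\gamma\Sigma_{\mathsf{SS}}^{-1}\|_{\op}^2$. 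Minimizing over $\lambda$ is at least as good as this particular choice, and taking the infimum over $\gamma\ge0$ yields $C_H(\mathsf S)\delta_H(\mathsf S)^2$.

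\emph{Main obstacle.} The hard step is the choice of coupling $\lambda=2q\gamma$, which lets the resolvent identity isolate the factor $A-\gamma\Sigma_{\mathsf{SS}}^{-1}$. A second point needs care: the Hessian is bounded only on a ball, so the argument relies on both $u^\star$ and $u_\lambda$ lying in $\{\|u\|\le\|u_0\|\}$. If $u^\star$ is not unique, any minimizer works, since only the stationarity condition is used.
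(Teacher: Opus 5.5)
Your proposal is correct and follows the paper's proof: you whiten on the support, write the oracle as a generalized ridge via its first-order condition, couple $\lambda=\alpha_{\mathsf S}\gamma$ (your $2q\gamma$) and bound the resolvent difference by contractions, then bound the Hessian on the ball $\{\|u\|_2\le\|u_0\|_2\}$ and apply Taylor's theorem. The only difference is cosmetic: you order the resolvent identity so that $u_0$ rather than $u^\star$ sits on the right, which removes the separate use of $\|u^\star\|_2\le\|u_0\|_2$ in the distance bound and gives the same constant $C_H(\mathsf S)$.
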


We now prove the fixed-support ridge approximation theorem.
Fix a support $\mathsf S\subseteq[d]$.
Let $\eta_{\mathsf S}^\star$ denote the unique minimizer in the definition of $\Gamma(\mathsf S)$:
\[
\eta_{\mathsf S}^\star
\in
\argmin_{\eta\in\RR^{|\mathsf S|}}
\left\{
(\eta-\theta_{\mathsf S}^{\mathrm{LS}}(\mathsf S))^\top
\Sigma_{\mathsf{SS}}
(\eta-\theta_{\mathsf S}^{\mathrm{LS}}(\mathsf S))
+
(\eta^\top H^{-1}_{\mathsf{SS}}\eta)^2
\right\}.
\]
For $\lambda\ge0$, let $\theta_{\mathsf S}^{\mathrm{OR}}(\lambda)$ denote the retained-coordinate vector of $\theta^{\mathrm{OR}}(\mathsf S,\lambda)$:
\[
\theta_{\mathsf S}^{\mathrm{OR}}(\lambda)
=
(\Sigma_{\mathsf{SS}}+\lambda I)^{-1}
\Sigma_{\mathsf{SS}}
\theta_{\mathsf S}^{\mathrm{LS}}(\mathsf S).
\]

\begin{lemma}[Support-restricted oracle is generalized ridge]
\label{lem:fixed-support-oracle-gr}
The minimizer $\eta_{\mathsf S}^\star$ is unique. Moreover, defining
$
\alpha_{\mathsf S}
:=
2\,\eta_{\mathsf S}^{\star\top}
H^{-1}_{\mathsf{SS}}
\eta_{\mathsf S}^\star
$,
we have
\[
(\Sigma_{\mathsf{SS}}+\alpha_{\mathsf S}H^{-1}_{\mathsf{SS}})
\eta_{\mathsf S}^\star
=
\Sigma_{\mathsf{SS}}
\theta_{\mathsf S}^{\mathrm{LS}}(\mathsf S).
\]
Equivalently,
\[
\eta_{\mathsf S}^\star
=
(\Sigma_{\mathsf{SS}}+\alpha_{\mathsf S}H^{-1}_{\mathsf{SS}})^{-1}
\Sigma_{\mathsf{SS}}
\theta_{\mathsf S}^{\mathrm{LS}}(\mathsf S).
\]
\end{lemma}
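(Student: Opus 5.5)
Write $A:=\Sigma_{\mathsf{SS}}$, $K:=H^{-1}_{\mathsf{SS}}$ and $\beta:=\theta_{\mathsf S}^{\mathrm{LS}}(\mathsf S)$, so that the objective in the definition of $\Gamma(\mathsf S)$ is
\[
F(\eta):=(\eta-\beta)^\top A(\eta-\beta)+(\eta^\top K\eta)^2 .
\]
The plan is to establish three facts in turn: $F$ is strictly convex, so its minimizer is unique; the minimizer exists; and the first-order condition at the minimizer is exactly the claimed generalized ridge equation.

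\emph{Strict convexity.} The map $\eta\mapsto\eta^\top K\eta$ is a nonnegative convex quadratic, since $K\succ0$ as a principal submatrix of $H^{-1}\succ0$. The map $t\mapsto t^2$ is convex and nondecreasing on $[0,\infty)$, so the composition $(\eta^\top K\eta)^2$ is convex. The first term of $F$ is strictly convex because $A\succ0$. Hence $F$ is strictly convex.

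\emph{Existence and uniqueness.} $F$ is continuous and coercive, since $F(\eta)\ge(\eta-\beta)^\top A(\eta-\beta)\to\infty$ as $\|\eta\|\to\infty$. A minimizer therefore exists, and by strict convexity it is unique. This proves the first claim.

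\emph{Stationarity.} $F$ is differentiable, with
\[
\nabla F(\eta)=2A(\eta-\beta)+2(\eta^\top K\eta)\cdot 2K\eta .
\]
Setting $\nabla F(\eta_{\mathsf S}^\star)=0$ and dividing by $2$ gives
\[
A\eta_{\mathsf S}^\star+2(\eta_{\mathsf S}^{\star\top}K\eta_{\mathsf S}^\star)K\eta_{\mathsf S}^\star=A\beta ,
\]
which is $(A+\alpha_{\mathsf S}K)\eta_{\mathsf S}^\star=A\beta$ with $\alpha_{\mathsf S}=2\,\eta_{\mathsf S}^{\star\top}K\eta_{\mathsf S}^\star$.

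\emph{Inversion.} Since $\alpha_{\mathsf S}\ge0$ and $K\succ0$, we have $A+\alpha_{\mathsf S}K\succeq A\succ0$. This matrix is therefore invertible, which yields the equivalent closed form for $\eta_{\mathsf S}^\star$.

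The only step needing care is the convexity of the quartic term; the composition argument above handles it. Note that the formula is implicit, because $\alpha_{\mathsf S}$ depends on $\eta_{\mathsf S}^\star$, but the lemma only asserts the fixed-point identity, so no further work is needed.
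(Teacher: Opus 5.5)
Your proof is correct and follows the paper's argument: uniqueness comes from strict (in the paper, strong) convexity, and the first-order condition, divided by $2$, gives the generalized ridge equation. The paper gets convexity by bounding the Hessian below by $2\Sigma_{\mathsf{SS}}$ rather than by the composition rule, and it leaves implicit the existence of the minimizer and the invertibility of $\Sigma_{\mathsf{SS}}+\alpha_{\mathsf S}H^{-1}_{\mathsf{SS}}$, both of which you make explicit.
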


\begin{proof}
The Hessian of the support-restricted objective in $\eta$ is
\[
2\Sigma_{\mathsf{SS}}
+
8H^{-1}_{\mathsf{SS}}\eta\eta^\top H^{-1}_{\mathsf{SS}}
+
4(\eta^\top H^{-1}_{\mathsf{SS}}\eta)H^{-1}_{\mathsf{SS}}
\succeq
2\Sigma_{\mathsf{SS}}
\succ0.
\]
Thus the objective is strongly convex and has a unique minimizer.
The first-order condition at $\eta_{\mathsf S}^\star$ is
\[
2\Sigma_{\mathsf{SS}}
(\eta_{\mathsf S}^\star-\theta_{\mathsf S}^{\mathrm{LS}}(\mathsf S))
+
4(\eta_{\mathsf S}^{\star\top}H^{-1}_{\mathsf{SS}}\eta_{\mathsf S}^\star)
H^{-1}_{\mathsf{SS}}\eta_{\mathsf S}^\star
=
0.
\]
Dividing by $2$ and using the definition of $\alpha_{\mathsf S}$ gives
\[
\Sigma_{\mathsf{SS}}
(\eta_{\mathsf S}^\star-\theta_{\mathsf S}^{\mathrm{LS}}(\mathsf S))
+
\alpha_{\mathsf S}H^{-1}_{\mathsf{SS}}\eta_{\mathsf S}^\star
=
0.
\]
Equivalently,
\[
(\Sigma_{\mathsf{SS}}+\alpha_{\mathsf S}H^{-1}_{\mathsf{SS}})
\eta_{\mathsf S}^\star
=
\Sigma_{\mathsf{SS}}
\theta_{\mathsf S}^{\mathrm{LS}}(\mathsf S).
\]
\end{proof}

We use whitened coordinates on the support.
Define
\[
u_{\mathsf S}^{\star}
:=
\Sigma_{\mathsf{SS}}^{1/2}\eta_{\mathsf S}^\star,
\qquad
u_{\mathsf S}^{\mathrm{LS}}
:=
\Sigma_{\mathsf{SS}}^{1/2}\theta_{\mathsf S}^{\mathrm{LS}}(\mathsf S),
\qquad
u_{\lambda,\mathsf S}
:=
\Sigma_{\mathsf{SS}}^{1/2}\theta_{\mathsf S}^{\mathrm{OR}}(\lambda).
\]
The support-restricted oracle satisfies
\[
\left(
I
+
\alpha_{\mathsf S}
\Sigma_{\mathsf{SS}}^{-1/2}
H^{-1}_{\mathsf{SS}}
\Sigma_{\mathsf{SS}}^{-1/2}
\right)
u_{\mathsf S}^{\star}
=
u_{\mathsf S}^{\mathrm{LS}},
\]
while the ordinary ridge path satisfies
\[
(I+\lambda\Sigma_{\mathsf{SS}}^{-1})
u_{\lambda,\mathsf S}
=
u_{\mathsf S}^{\mathrm{LS}}.
\]

\begin{lemma}[Distance from the oracle to the ordinary ridge path]
\label{lem:ridge-path-distance}
For every $\gamma\ge0$, if $\lambda=\alpha_{\mathsf S}\gamma$, then
\[
\|u_{\lambda,\mathsf S}-u_{\mathsf S}^{\star}\|_2
\le
\alpha_{\mathsf S}
\left\|
\Sigma_{\mathsf{SS}}^{-1/2}
(H^{-1}_{\mathsf{SS}}-\gamma I)
\Sigma_{\mathsf{SS}}^{-1/2}
u_{\mathsf S}^{\star}
\right\|_2.
\]
Consequently,
\[
\inf_{\lambda\ge0}
\|u_{\lambda,\mathsf S}-u_{\mathsf S}^{\star}\|_2
\le
\alpha_{\mathsf S}\,
\delta_H(\mathsf S)\,
\|u_{\mathsf S}^{\star}\|_2.
\]
\end{lemma}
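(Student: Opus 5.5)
Write $A:=\Sigma_{\mathsf{SS}}^{-1/2}H^{-1}_{\mathsf{SS}}\Sigma_{\mathsf{SS}}^{-1/2}$, and note that $\Sigma_{\mathsf{SS}}^{-1/2}(\gamma I)\Sigma_{\mathsf{SS}}^{-1/2}=\gamma\Sigma_{\mathsf{SS}}^{-1}$. The plan is to compare the two linear systems that define $u_{\mathsf S}^\star$ and $u_{\lambda,\mathsf S}$ and to subtract them. Both systems have the same right-hand side:
\[
(I+\alpha_{\mathsf S}A)u_{\mathsf S}^\star=u_{\mathsf S}^{\mathrm{LS}},
\qquad
(I+\lambda\Sigma_{\mathsf{SS}}^{-1})u_{\lambda,\mathsf S}=u_{\mathsf S}^{\mathrm{LS}}.
\]
Fix $\gamma\ge0$ and set $\lambda=\alpha_{\mathsf S}\gamma$. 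This choice is admissible because $\alpha_{\mathsf S}=2\eta_{\mathsf S}^{\star\top}H^{-1}_{\mathsf{SS}}\eta_{\mathsf S}^\star\ge0$ by Lemma~\ref{lem:fixed-support-oracle-gr}, so $\lambda\ge0$. Let $M_\lambda:=I+\lambda\Sigma_{\mathsf{SS}}^{-1}$. From the first system we get
\[
M_\lambda u_{\mathsf S}^\star=u_{\mathsf S}^{\mathrm{LS}}-\alpha_{\mathsf S}\bigl(A-\gamma\Sigma_{\mathsf{SS}}^{-1}\bigr)u_{\mathsf S}^\star .
\]
Subtracting this from the second system gives the error identity
\[
M_\lambda\bigl(u_{\lambda,\mathsf S}-u_{\mathsf S}^\star\bigr)=\alpha_{\mathsf S}\,\Sigma_{\mathsf{SS}}^{-1/2}(H^{-1}_{\mathsf{SS}}-\gamma I)\Sigma_{\mathsf{SS}}^{-1/2}u_{\mathsf S}^\star .
\]

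Next I invert $M_\lambda$ and control its inverse. Since $\Sigma_{\mathsf{SS}}\succ0$ and $\lambda\ge0$, the matrix $M_\lambda$ is symmetric with all eigenvalues at least $1$, so $\|M_\lambda^{-1}\|_{\op}\le1$. Taking norms in the error identity yields the first claimed inequality. The only subtle point is getting $\Sigma_{\mathsf{SS}}^{-1}$ (and not $I$) as the ridge term in whitened coordinates, and this is exactly what makes the $\gamma I$ inside the conjugation match $\lambda\Sigma_{\mathsf{SS}}^{-1}$. The contraction $\|M_\lambda^{-1}\|_{\op}\le1$ is the key step, and it is immediate here; it would only fail if the whitening had made $M_\lambda$ non-symmetric, which it does not.

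For the consequence, bound the right-hand side by $\alpha_{\mathsf S}\,\|\Sigma_{\mathsf{SS}}^{-1/2}(H^{-1}_{\mathsf{SS}}-\gamma I)\Sigma_{\mathsf{SS}}^{-1/2}\|_{\op}\,\|u_{\mathsf S}^\star\|_2$. The map $\gamma\mapsto\lambda=\alpha_{\mathsf S}\gamma$ sends $[0,\infty)$ into $[0,\infty)$, so
\[
\inf_{\lambda\ge0}\|u_{\lambda,\mathsf S}-u_{\mathsf S}^\star\|_2\le\inf_{\gamma\ge0}\|u_{\alpha_{\mathsf S}\gamma,\mathsf S}-u_{\mathsf S}^\star\|_2 .
\]
Taking the infimum over $\gamma$ of the operator-norm factor gives $\alpha_{\mathsf S}\,\delta_H(\mathsf S)\,\|u_{\mathsf S}^\star\|_2$, which is the second claimed bound.
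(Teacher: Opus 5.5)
Your proof is correct and follows the paper's own argument: subtract the whitened normal equations with $\lambda=\alpha_{\mathsf S}\gamma$, use $\|(I+\lambda\Sigma_{\mathsf{SS}}^{-1})^{-1}\|_{\op}\le 1$, then take the infimum over $\gamma$. The one addition is your explicit check that $\lambda=\alpha_{\mathsf S}\gamma\ge 0$ because $\alpha_{\mathsf S}\ge 0$, which the paper leaves implicit.
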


\begin{proof}
Subtracting the whitened normal equations gives
\[
(I+\lambda\Sigma_{\mathsf{SS}}^{-1})
(u_{\lambda,\mathsf S}-u_{\mathsf S}^{\star})
=
\left(
\alpha_{\mathsf S}
\Sigma_{\mathsf{SS}}^{-1/2}
H^{-1}_{\mathsf{SS}}
\Sigma_{\mathsf{SS}}^{-1/2}
-
\lambda\Sigma_{\mathsf{SS}}^{-1}
\right)
u_{\mathsf S}^{\star}.
\]
Therefore
\[
u_{\lambda,\mathsf S}-u_{\mathsf S}^{\star}
=
(I+\lambda\Sigma_{\mathsf{SS}}^{-1})^{-1}
\left(
\alpha_{\mathsf S}
\Sigma_{\mathsf{SS}}^{-1/2}
H^{-1}_{\mathsf{SS}}
\Sigma_{\mathsf{SS}}^{-1/2}
-
\lambda\Sigma_{\mathsf{SS}}^{-1}
\right)
u_{\mathsf S}^{\star}.
\]
Choosing $\lambda=\alpha_{\mathsf S}\gamma$ gives
\[
u_{\alpha_{\mathsf S}\gamma,\mathsf S}-u_{\mathsf S}^{\star}
=
\alpha_{\mathsf S}
(I+\alpha_{\mathsf S}\gamma\Sigma_{\mathsf{SS}}^{-1})^{-1}
\Sigma_{\mathsf{SS}}^{-1/2}
(H^{-1}_{\mathsf{SS}}-\gamma I)
\Sigma_{\mathsf{SS}}^{-1/2}
u_{\mathsf S}^{\star}.
\]
Since $\Sigma_{\mathsf{SS}}^{-1}\succ0$,
\[
\|(I+\alpha_{\mathsf S}\gamma\Sigma_{\mathsf{SS}}^{-1})^{-1}\|_{\op}\le1.
\]
Thus
\[
\|u_{\alpha_{\mathsf S}\gamma,\mathsf S}-u_{\mathsf S}^{\star}\|_2
\le
\alpha_{\mathsf S}
\left\|
\Sigma_{\mathsf{SS}}^{-1/2}
(H^{-1}_{\mathsf{SS}}-\gamma I)
\Sigma_{\mathsf{SS}}^{-1/2}
u_{\mathsf S}^{\star}
\right\|_2.
\]
Taking the infimum over $\gamma\ge0$ and using the definition of $\delta_H(\mathsf S)$ gives
\[
\inf_{\lambda\ge0}
\|u_{\lambda,\mathsf S}-u_{\mathsf S}^{\star}\|_2
\le
\alpha_{\mathsf S}
\delta_H(\mathsf S)
\|u_{\mathsf S}^{\star}\|_2.
\]
\end{proof}

\begin{lemma}[Curvature bound on the fixed support]
\label{lem:fixed-support-curvature}
Let
\[
L_H(\mathsf S)
:=
1+
6
\left\|
\Sigma_{\mathsf{SS}}^{-1/2}
H^{-1}_{\mathsf{SS}}
\Sigma_{\mathsf{SS}}^{-1/2}
\right\|_{\op}^2
\left\|
\Sigma_{\mathsf{SS}}^{1/2}
\theta_{\mathsf S}^{\mathrm{LS}}(\mathsf S)
\right\|_2^2.
\]
Then
\[
\mathrm{MSE}_{\mathrm{strat}}(\theta^{\mathrm{OR}}(\mathsf S,\lambda))
-
\widetilde{\OPT}(\mathsf S)
\le
L_H(\mathsf S)
\|u_{\lambda,\mathsf S}-u_{\mathsf S}^{\star}\|_2^2
\]
for every $\lambda\ge0$.
\end{lemma}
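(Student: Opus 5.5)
We will work in the whitened coordinates on the support. Set $A:=\Sigma_{\mathsf{SS}}^{-1/2}H^{-1}_{\mathsf{SS}}\Sigma_{\mathsf{SS}}^{-1/2}\succ0$ and write $\eta=\Sigma_{\mathsf{SS}}^{-1/2}u$. By the proof of \Cref{prop:support-restriction}, for $\theta=(\eta,0_{\mathsf R})$ we have
\[
\mathrm{MSE}_{\mathrm{strat}}(\theta)-\widetilde{\OPT}(\mathsf S)=F(u)-F(u_{\mathsf S}^\star),
\qquad
F(u):=\|u-u_{\mathsf S}^{\mathrm{LS}}\|_2^2+(u^\top A u)^2 .
\]
Here $u_{\mathsf S}^\star$ is the unique minimizer of $F$ by \Cref{lem:fixed-support-oracle-gr}, and $\nabla F(u_{\mathsf S}^\star)=0$. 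The plan is a second-order Taylor bound with integral remainder along the segment from $u_{\mathsf S}^\star$ to $u_{\lambda,\mathsf S}$:
\[
F(u_{\lambda,\mathsf S})-F(u_{\mathsf S}^\star)\le \tfrac12\sup_{v\in[u_{\mathsf S}^\star,u_{\lambda,\mathsf S}]}\|\nabla^2F(v)\|_{\op}\,\|u_{\lambda,\mathsf S}-u_{\mathsf S}^\star\|_2^2 .
\]
It therefore suffices to show $\|\nabla^2F(v)\|_{\op}\le 2L_H(\mathsf S)$ on that segment.

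The Hessian in whitened coordinates is $\nabla^2F(v)=2I+8Avv^\top A+4(v^\top Av)A$. Its operator norm is at most
\[
2+8\|A\|_{\op}^2\|v\|_2^2+4\|A\|_{\op}^2\|v\|_2^2=2+12\|A\|_{\op}^2\|v\|_2^2 ,
\]
where we used $v^\top Av\le\|A\|_{\op}\|v\|_2^2$ and $\|A\|_{\op}\le\|A\|_{\op}^2/\|A\|_{\op}$. More carefully, we bound $4(v^\top Av)\|A\|_{\op}\le 4\|A\|_{\op}^2\|v\|_2^2$ directly. This gives $\tfrac12\|\nabla^2F\|_{\op}\le 1+6\|A\|_{\op}^2\|v\|_2^2$. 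Comparing with $L_H(\mathsf S)$, we need $\|v\|_2\le\|u_{\mathsf S}^{\mathrm{LS}}\|_2$ for every $v$ on the segment.

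\textbf{Main obstacle: the norm bound on the segment.} By convexity of the Euclidean ball, it suffices to bound the two endpoints.
\begin{itemize}
\item For the ridge endpoint, $u_{\lambda,\mathsf S}=(I+\lambda\Sigma_{\mathsf{SS}}^{-1})^{-1}u_{\mathsf S}^{\mathrm{LS}}$. The matrix $(I+\lambda\Sigma_{\mathsf{SS}}^{-1})^{-1}$ is symmetric with eigenvalues in $(0,1]$, so $\|u_{\lambda,\mathsf S}\|_2\le\|u_{\mathsf S}^{\mathrm{LS}}\|_2$.
\item For the oracle endpoint, $u_{\mathsf S}^\star=(I+\alpha_{\mathsf S}A)^{-1}u_{\mathsf S}^{\mathrm{LS}}$ with $\alpha_{\mathsf S}\ge0$ and $A\succeq0$. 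The same eigenvalue argument gives $\|u_{\mathsf S}^\star\|_2\le\|u_{\mathsf S}^{\mathrm{LS}}\|_2$.
\end{itemize}
Hence every point of the segment lies in the ball of radius $\|u_{\mathsf S}^{\mathrm{LS}}\|_2=\|\Sigma_{\mathsf{SS}}^{1/2}\theta_{\mathsf S}^{\mathrm{LS}}(\mathsf S)\|_2$, and the curvature bound holds with the stated $L_H(\mathsf S)$.

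The only delicate points are whether the constant $6$ comes out exactly and the factor $\tfrac12$ in the Taylor remainder. Combining $8+4=12$ with the $\tfrac12$ gives exactly $6$, and the constant term $2\cdot\tfrac12=1$ matches the leading $1$ in $L_H(\mathsf S)$. If a sharper route were needed, one could instead expand $F(u)-F(u^\star)$ exactly, using $\nabla F(u^\star)=0$, as a polynomial in $h=u-u^\star$ and bound the cubic and quartic terms. The Taylor route should suffice.
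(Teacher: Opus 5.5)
Your proposal is correct and follows the paper's proof essentially step for step. Like the paper, you whiten to $F(u)=\|u-u_{\mathsf S}^{\mathrm{LS}}\|_2^2+(u^\top Au)^2$, bound the Hessian norm by $2+12\|A\|_{\op}^2\|v\|_2^2$, use the two contractive resolvents to keep the segment inside the ball of radius $\|u_{\mathsf S}^{\mathrm{LS}}\|_2$, and finish with the second-order Taylor bound at the stationary point $u_{\mathsf S}^\star$. The aside ``$\|A\|_{\op}\le\|A\|_{\op}^2/\|A\|_{\op}$'' is vacuous, but the direct bound you give right after, $4(v^\top Av)\|A\|_{\op}\le 4\|A\|_{\op}^2\|v\|_2^2$, is exactly what the argument needs.
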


\begin{proof}
After subtracting the support-dependent constants
$\sigma^2+L_{\mathrm{pred}}(\mathsf S)$, the support-restricted zero-intercept objective in whitened coordinates is
\[
u
\mapsto
\|u-u_{\mathsf S}^{\mathrm{LS}}\|_2^2
+
\left(
u^\top
\Sigma_{\mathsf{SS}}^{-1/2}
H^{-1}_{\mathsf{SS}}
\Sigma_{\mathsf{SS}}^{-1/2}
u
\right)^2.
\]
The Hessian of this function is
\[
2I
+
8
\Sigma_{\mathsf{SS}}^{-1/2}H^{-1}_{\mathsf{SS}}\Sigma_{\mathsf{SS}}^{-1/2}
uu^\top
\Sigma_{\mathsf{SS}}^{-1/2}H^{-1}_{\mathsf{SS}}\Sigma_{\mathsf{SS}}^{-1/2}
+
4
\left(
u^\top
\Sigma_{\mathsf{SS}}^{-1/2}H^{-1}_{\mathsf{SS}}\Sigma_{\mathsf{SS}}^{-1/2}
u
\right)
\Sigma_{\mathsf{SS}}^{-1/2}H^{-1}_{\mathsf{SS}}\Sigma_{\mathsf{SS}}^{-1/2}.
\]
Hence its operator norm is at most
\[
2
+
12
\left\|
\Sigma_{\mathsf{SS}}^{-1/2}
H^{-1}_{\mathsf{SS}}
\Sigma_{\mathsf{SS}}^{-1/2}
\right\|_{\op}^2
\|u\|_2^2.
\]
Both $u_{\mathsf S}^{\star}$ and $u_{\lambda,\mathsf S}$ lie in the ball $
\|u\|_2
\le
\|u_{\mathsf S}^{\mathrm{LS}}\|_2.$
Indeed,
\[
u_{\mathsf S}^{\star}
=
\left(
I+
\alpha_{\mathsf S}
\Sigma_{\mathsf{SS}}^{-1/2}
H^{-1}_{\mathsf{SS}}
\Sigma_{\mathsf{SS}}^{-1/2}
\right)^{-1}
u_{\mathsf S}^{\mathrm{LS}},
\]
and
\[
u_{\lambda,\mathsf S}
=
(I+\lambda\Sigma_{\mathsf{SS}}^{-1})^{-1}
u_{\mathsf S}^{\mathrm{LS}},
\]
and both resolvents have operator norm at most $1$.
Therefore, along the line segment between $u_{\mathsf S}^{\star}$ and $u_{\lambda,\mathsf S}$, the Hessian operator norm is at most $2L_H(\mathsf S)$.
Since $u_{\mathsf S}^{\star}$ is the minimizer, Taylor's theorem gives
\[
\mathrm{MSE}_{\mathrm{strat}}(\theta^{\mathrm{OR}}(\mathsf S,\lambda))
-
\widetilde{\OPT}(\mathsf S)
\le
L_H(\mathsf S)
\|u_{\lambda,\mathsf S}-u_{\mathsf S}^{\star}\|_2^2.
\]
\end{proof}

\begin{proof}[Proof of \Cref{thm:fixed-support}]
By \Cref{lem:fixed-support-curvature} and \Cref{lem:ridge-path-distance},
\[
\min_{\lambda\ge0}
\left\{
\mathrm{MSE}_{\mathrm{strat}}(\theta^{\mathrm{OR}}(\mathsf S,\lambda))
-
\widetilde{\OPT}(\mathsf S)
\right\}
\le
\alpha_{\mathsf S}^2
L_H(\mathsf S)
\delta_H(\mathsf S)^2
\|u_{\mathsf S}^{\star}\|_2^2.
\]
Now
\[
\|u_{\mathsf S}^{\star}\|_2
\le
\|u_{\mathsf S}^{\mathrm{LS}}\|_2
=
\left\|
\Sigma_{\mathsf{SS}}^{1/2}
\theta_{\mathsf S}^{\mathrm{LS}}(\mathsf S)
\right\|_2.
\]
Also,
\[
\alpha_{\mathsf S}
=
2\eta_{\mathsf S}^{\star\top}
H^{-1}_{\mathsf{SS}}
\eta_{\mathsf S}^{\star}
=
2u_{\mathsf S}^{\star\top}
\Sigma_{\mathsf{SS}}^{-1/2}
H^{-1}_{\mathsf{SS}}
\Sigma_{\mathsf{SS}}^{-1/2}
u_{\mathsf S}^{\star},
\]
so
\[
\alpha_{\mathsf S}
\le
2
\left\|
\Sigma_{\mathsf{SS}}^{-1/2}
H^{-1}_{\mathsf{SS}}
\Sigma_{\mathsf{SS}}^{-1/2}
\right\|_{\op}
\|u_{\mathsf S}^{\star}\|_2^2
\le
2
\left\|
\Sigma_{\mathsf{SS}}^{-1/2}
H^{-1}_{\mathsf{SS}}
\Sigma_{\mathsf{SS}}^{-1/2}
\right\|_{\op}
\|u_{\mathsf S}^{\mathrm{LS}}\|_2^2.
\]
Therefore
\[
\alpha_{\mathsf S}^2
\|u_{\mathsf S}^{\star}\|_2^2
\le
4
\left\|
\Sigma_{\mathsf{SS}}^{-1/2}
H^{-1}_{\mathsf{SS}}
\Sigma_{\mathsf{SS}}^{-1/2}
\right\|_{\op}^2
\|u_{\mathsf S}^{\mathrm{LS}}\|_2^6.
\]
Combining the displays gives
\[
\min_{\lambda\ge0}
\left\{
\mathrm{MSE}_{\mathrm{strat}}(\theta^{\mathrm{OR}}(\mathsf S,\lambda))
-
\widetilde{\OPT}(\mathsf S)
\right\}
\le
C_H(\mathsf S)\delta_H(\mathsf S)^2,
\]
with the valid choice
\[
C_H(\mathsf S)
:=
4\,L_H(\mathsf S)\,
\left\|
\Sigma_{\mathsf{SS}}^{-1/2}
H^{-1}_{\mathsf{SS}}
\Sigma_{\mathsf{SS}}^{-1/2}
\right\|_{\op}^2
\left\|
\Sigma_{\mathsf{SS}}^{1/2}
\theta_{\mathsf S}^{\mathrm{LS}}(\mathsf S)
\right\|_2^6.
\]
\end{proof}

\subsection{Proof of \Cref{thm:global-upper}}
\label{sec:proof-upper}

For every support $\mathsf S$ and every $\lambda\ge0$, we have the exact decomposition
\[
\mathrm{MSE}_{\mathrm{strat}}(\theta^{\mathrm{OR}}(\mathsf S,\lambda))
-
\widetilde{\OPT}
=
\left(
\widetilde{\OPT}(\mathsf S)-\widetilde{\OPT}
\right)
+
\left(
\mathrm{MSE}_{\mathrm{strat}}(\theta^{\mathrm{OR}}(\mathsf S,\lambda))
-
\widetilde{\OPT}(\mathsf S)
\right).
\]
By \Cref{prop:support-restriction},
\[
\widetilde{\OPT}(\mathsf S)-\widetilde{\OPT}
=
L_{\mathrm{pred}}(\mathsf S)
-
\bigl(\Gamma([d])-\Gamma(\mathsf S)\bigr).
\]
By \Cref{thm:fixed-support},
\[
\min_{\lambda\ge0}
\left\{
\mathrm{MSE}_{\mathrm{strat}}(\theta^{\mathrm{OR}}(\mathsf S,\lambda))
-
\widetilde{\OPT}(\mathsf S)
\right\}
\le
C_H(\mathsf S)\delta_H(\mathsf S)^2.
\]
Therefore, for every $\mathsf S$,
\[
\min_{\lambda\ge0}
\left\{
\mathrm{MSE}_{\mathrm{strat}}(\theta^{\mathrm{OR}}(\mathsf S,\lambda))
-
\widetilde{\OPT}
\right\}
\le
L_{\mathrm{pred}}(\mathsf S)
-
\bigl(\Gamma([d])-\Gamma(\mathsf S)\bigr)
+
C_H(\mathsf S)\delta_H(\mathsf S)^2.
\]

\subsection{Structure of the zero-intercept oracle}
\label{sec:zero-intercept-oracle}

The next proposition records the generalized-ridge structure of the zero-intercept oracle over the full feature set.
It is the full-support analogue of \Cref{lem:fixed-support-oracle-gr}.

\begin{proposition}[Zero-intercept oracle over $\RR^d$]
\label{prop:gr-optimum}
The objective $\mathrm{MSE}_{\mathrm{strat}}(\theta)$ is strongly convex and therefore admits a unique minimizer
$
\theta^{\widetilde{\OPT}}
\in\RR^d
$.
Moreover, $\theta^{\widetilde{\OPT}}$ is characterized by
\[
\Sigma(\theta^{\widetilde{\OPT}}-\theta^*)
+
2(\theta^{\widetilde{\OPT}\top}H^{-1}\theta^{\widetilde{\OPT}})
H^{-1}\theta^{\widetilde{\OPT}}
=
0.
\]
Equivalently, defining
\[
\lambda^{\widetilde{\OPT}}
:=
2\,\theta^{\widetilde{\OPT}\top}H^{-1}\theta^{\widetilde{\OPT}},
\]
we have
\[
(\Sigma+\lambda^{\widetilde{\OPT}}H^{-1})
\theta^{\widetilde{\OPT}}
=
\Sigma\theta^*,
\]
and hence
\[
\theta^{\widetilde{\OPT}}
=
(\Sigma+\lambda^{\widetilde{\OPT}}H^{-1})^{-1}
\Sigma\theta^*.
\]
Thus the zero-intercept oracle is a generalized-ridge shrinkage of $\theta^*$, with the regularization level determined endogenously by
\[
\lambda^{\widetilde{\OPT}}
=
2\,\theta^{\widetilde{\OPT}\top}H^{-1}\theta^{\widetilde{\OPT}}.
\]
\end{proposition}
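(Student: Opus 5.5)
The plan mirrors the proof of \Cref{lem:fixed-support-oracle-gr}, now on the full support $[d]$. First I will write the objective explicitly using \Cref{lem:general-mse} with $b=0$:
\[
F(\theta):=\mathrm{MSE}_{\mathrm{strat}}(\theta)=(\theta-\theta^*)^\top\Sigma(\theta-\theta^*)+(\theta^\top H^{-1}\theta)^2+\sigma^2 .
\]
For strong convexity, I will compute the Hessian,
\[
\nabla^2F(\theta)=2\Sigma+8H^{-1}\theta\theta^\top H^{-1}+4(\theta^\top H^{-1}\theta)H^{-1}.
\]
Because $H^{-1}\succ0$, the last two terms are positive semidefinite, so $\nabla^2F(\theta)\succeq 2\Sigma\succeq 2\lambda_{\min}(\Sigma)I$ uniformly in $\theta$, with $\lambda_{\min}(\Sigma)>0$ since $\Sigma\succ0$. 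Hence $F$ is strongly convex. A continuous, strongly convex function on $\RR^d$ is coercive, so it attains a minimum, and that minimizer is unique; I call it $\theta^{\widetilde{\OPT}}$.

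Next I will characterize the minimizer. Since $F$ is differentiable and convex, $\theta^{\widetilde{\OPT}}$ is the unique solution of $\nabla F(\theta)=0$, where
\[
\nabla F(\theta)=2\Sigma(\theta-\theta^*)+4(\theta^\top H^{-1}\theta)H^{-1}\theta .
\]
Here the gradient of $(\theta^\top H^{-1}\theta)^2$ is $2(\theta^\top H^{-1}\theta)\cdot 2H^{-1}\theta$. Dividing the stationarity equation by $2$ gives exactly the displayed characterization. Convexity ensures stationarity is both necessary and sufficient, which justifies the word "characterized".

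Then I will set $\lambda^{\widetilde{\OPT}}:=2\,\theta^{\widetilde{\OPT}\top}H^{-1}\theta^{\widetilde{\OPT}}\ge0$ and rearrange the stationarity equation into $(\Sigma+\lambda^{\widetilde{\OPT}}H^{-1})\theta^{\widetilde{\OPT}}=\Sigma\theta^*$. The matrix $\Sigma+\lambda^{\widetilde{\OPT}}H^{-1}$ is a positive definite matrix plus a nonnegative multiple of one, so it is positive definite and therefore invertible. This yields the closed form for $\theta^{\widetilde{\OPT}}$.

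There is no real obstacle. The only point that needs care is the logical direction: $\lambda^{\widetilde{\OPT}}$ depends on the minimizer itself, so the closed form is a fixed-point identity rather than an explicit formula. I will make clear that it is derived after uniqueness and existence have already been established. Alternatively, the result follows directly as the special case $\mathsf S=[d]$ of \Cref{lem:fixed-support-oracle-gr}, since $\theta^{\mathrm{LS}}([d])=\theta^*$, $\Sigma_{[d][d]}=\Sigma$, and $\widetilde{\OPT}([d])=\widetilde{\OPT}$. I would note this as a one-line proof.
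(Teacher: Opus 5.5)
Your proposal is correct and follows the paper's proof step for step. The steps are: the Hessian bound $\nabla^2 F(\theta)\succeq 2\Sigma\succ 0$ gives strong convexity and uniqueness; the first-order condition divided by $2$ gives the characterization; then you define $\lambda^{\widetilde{\OPT}}$ and invert the positive definite matrix $\Sigma+\lambda^{\widetilde{\OPT}}H^{-1}$. Your added remarks on existence via coercivity, and on the result being the case $\mathsf S=[d]$ of \Cref{lem:fixed-support-oracle-gr}, are both valid.
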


\begin{proof}
Ignoring the additive constant $\sigma^2$, the zero-intercept objective is
\[
\mathrm{MSE}_{\mathrm{strat}}(\theta)-\sigma^2
=
(\theta-\theta^*)^\top\Sigma(\theta-\theta^*)
+
(\theta^\top H^{-1}\theta)^2.
\]
Its gradient is
\[
\nabla_\theta\mathrm{MSE}_{\mathrm{strat}}(\theta)
=
2\Sigma(\theta-\theta^*)
+
4(\theta^\top H^{-1}\theta)H^{-1}\theta,
\]
and its Hessian is
\[
\nabla_\theta^2\mathrm{MSE}_{\mathrm{strat}}(\theta)
=
2\Sigma
+
8H^{-1}\theta\theta^\top H^{-1}
+
4(\theta^\top H^{-1}\theta)H^{-1}.
\]
Since $\Sigma\succ0$ and $H^{-1}\succ0$,
\[
\nabla_\theta^2\mathrm{MSE}_{\mathrm{strat}}(\theta)
\succeq
2\Sigma
\succ0.
\]
Thus $\mathrm{MSE}_{\mathrm{strat}}(\theta)$ is strongly convex and has a unique minimizer.

At the minimizer $\theta^{\widetilde{\OPT}}$, the first-order condition gives
\[
2\Sigma(\theta^{\widetilde{\OPT}}-\theta^*)
+
4(\theta^{\widetilde{\OPT}\top}H^{-1}\theta^{\widetilde{\OPT}})
H^{-1}\theta^{\widetilde{\OPT}}
=
0.
\]
Dividing by $2$ and defining
\[
\lambda^{\widetilde{\OPT}}
=
2\,\theta^{\widetilde{\OPT}\top}H^{-1}\theta^{\widetilde{\OPT}}
\]
yields
\[
(\Sigma+\lambda^{\widetilde{\OPT}}H^{-1})
\theta^{\widetilde{\OPT}}
=
\Sigma\theta^*.
\]
Since $\Sigma+\lambda^{\widetilde{\OPT}}H^{-1}\succ0$, this is equivalent to
\[
\theta^{\widetilde{\OPT}}
=
(\Sigma+\lambda^{\widetilde{\OPT}}H^{-1})^{-1}
\Sigma\theta^*.
\]
\end{proof}

\subsection{Ridge optimality under isotropic costs}
\label{sec:zero-ridge-gap-isotropy}

The next corollary formalizes the case in which scalar ridge is expressive enough to attain the support-restricted zero-intercept oracle exactly.

\begin{corollary}[Zero heterogeneity gap under isotropy]
\label{cor:zero-ridge-gap}
Fix a support $\mathsf S\subseteq[d]$. If $H^{-1}_{\mathsf{SS}}=\gamma I$ for some $\gamma \ge 0$, then 
\[
\min_{\lambda\ge 0} \mathrm{MSE}_{\mathrm{strat}}(\theta^{\mathrm{OR}}(\mathsf S,\lambda)) = \widetilde{\OPT}(\mathsf S). 
\]
\end{corollary}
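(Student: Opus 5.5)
The plan is to show directly that the support-restricted oracle $\eta_{\mathsf S}^\star$ lies on the ordinary ridge path, rather than passing through the bound of \Cref{thm:fixed-support}. Since $\widetilde{\OPT}(\mathsf S)$ is attained uniquely at $\theta=(\eta_{\mathsf S}^\star,0_{\mathsf R})$, and every $\theta^{\mathrm{OR}}(\mathsf S,\lambda)$ lies in $\Theta(\mathsf S)$, we have $\mathrm{MSE}_{\mathrm{strat}}(\theta^{\mathrm{OR}}(\mathsf S,\lambda))\ge\widetilde{\OPT}(\mathsf S)$ for every $\lambda\ge0$. This gives the inequality ``$\ge$'' in the claimed identity. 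It therefore suffices to exhibit a single $\lambda\ge0$ with $\theta_{\mathsf S}^{\mathrm{OR}}(\lambda)=\eta_{\mathsf S}^\star$.

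We invoke \Cref{lem:fixed-support-oracle-gr}, which gives
\[
(\Sigma_{\mathsf{SS}}+\alpha_{\mathsf S}H^{-1}_{\mathsf{SS}})\eta_{\mathsf S}^\star=\Sigma_{\mathsf{SS}}\theta_{\mathsf S}^{\mathrm{LS}}(\mathsf S),
\qquad
\alpha_{\mathsf S}=2\,\eta_{\mathsf S}^{\star\top}H^{-1}_{\mathsf{SS}}\eta_{\mathsf S}^\star\ge0 .
\]
Substituting $H^{-1}_{\mathsf{SS}}=\gamma I$ turns this into $(\Sigma_{\mathsf{SS}}+\alpha_{\mathsf S}\gamma I)\eta_{\mathsf S}^\star=\Sigma_{\mathsf{SS}}\theta_{\mathsf S}^{\mathrm{LS}}(\mathsf S)=\Sigma_{\mathsf S y}$.

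Set $\lambda^\star:=\alpha_{\mathsf S}\gamma\ge 0$. The matrix $\Sigma_{\mathsf{SS}}+\lambda^\star I$ is positive definite, so we can invert it. Comparing with the closed form in \Cref{lem:policy-lever-closed-forms} then gives $\eta_{\mathsf S}^\star=(\Sigma_{\mathsf{SS}}+\lambda^\star I)^{-1}\Sigma_{\mathsf S y}=\theta_{\mathsf S}^{\mathrm{OR}}(\mathsf S,\lambda^\star)$. Hence $\mathrm{MSE}_{\mathrm{strat}}(\theta^{\mathrm{OR}}(\mathsf S,\lambda^\star))=\widetilde{\OPT}(\mathsf S)$, so the minimum over $\lambda$ is attained at $\lambda^\star$ and equals $\widetilde{\OPT}(\mathsf S)$.

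As a consistency check, the same conclusion follows from \Cref{thm:fixed-support}. Taking $\gamma$ in the definition of $\delta_H(\mathsf S)$ gives $\delta_H(\mathsf S)=0$, and the finite constant $C_H(\mathsf S)$ then forces the gap to be $\le 0$. Combined with the lower bound above, this again gives equality. Because the upper bound there is stated with $\min_{\lambda}$ and is zero, this route also confirms that the minimum is achieved.

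The only subtlety is the degenerate case $\gamma=0$, which formally contradicts $H\succ0$. If it is allowed, then $\lambda^\star=0$ and the oracle coincides with least squares, which the argument above handles uniformly. I do not expect a real obstacle: the main step is simply recognizing that isotropy collapses the generalized-ridge penalty of \Cref{lem:fixed-support-oracle-gr} to a scalar one.
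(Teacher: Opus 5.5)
Your proposal is correct and matches the paper's proof: substitute $H^{-1}_{\mathsf{SS}}=\gamma I$ into the generalized-ridge characterization of \Cref{lem:fixed-support-oracle-gr}, then read off that the oracle $\eta_{\mathsf S}^\star$ is the ordinary ridge estimator at $\lambda=\alpha_{\mathsf S}\gamma$. Your explicit ``$\ge$'' direction and the cross-check via $\delta_H(\mathsf S)=0$ in \Cref{thm:fixed-support} are harmless additions that the paper leaves implicit.
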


\begin{proof}[Proof of \Cref{cor:zero-ridge-gap}]
Let $\eta_{\mathsf S}^{\star}$ denote the support-restricted zero-intercept oracle on $\mathsf S$.
By \Cref{lem:fixed-support-oracle-gr}, there exists
$
\alpha_{\mathsf S}
:=
2\,\eta_{\mathsf S}^{\star\top}
H^{-1}_{\mathsf{SS}}
\eta_{\mathsf S}^{\star}
$
such that
\[
(\Sigma_{\mathsf{SS}}+\alpha_{\mathsf S}H^{-1}_{\mathsf{SS}})
\eta_{\mathsf S}^{\star}
=
\Sigma_{\mathsf{SS}}
\theta_{\mathsf S}^{\mathrm{LS}}(\mathsf S).
\]
If $H^{-1}_{\mathsf{SS}}=\gamma I$, then
\[
(\Sigma_{\mathsf{SS}}+\alpha_{\mathsf S}\gamma I)
\eta_{\mathsf S}^{\star}
=
\Sigma_{\mathsf{SS}}
\theta_{\mathsf S}^{\mathrm{LS}}(\mathsf S).
\]
Thus $\eta_{\mathsf S}^{\star}$ is exactly the ordinary ridge estimator on support $\mathsf S$ with ridge parameter
$
\lambda=\alpha_{\mathsf S}\gamma.
$
Therefore the ordinary ridge path contains the support-restricted zero-intercept oracle, and so
\[
\min_{\lambda\ge0}
\mathrm{MSE}_{\mathrm{strat}}
\bigl(\theta^{\mathrm{OR}}(\mathsf S,\lambda)\bigr)
=
\widetilde{\OPT}(\mathsf S).
\]
\end{proof}

\Cref{cor:zero-ridge-gap} shows that scalar ridge is exact when the retained manipulation costs are isotropic.
This condition is sufficient but not necessary.
\Cref{prop:necessary-sufficient-ridge-optimality} below gives the exact necessary-and-sufficient condition: the fixed-support ridge gap is zero if and only if the support-restricted oracle lies in an eigenspace of the retained inverse-cost matrix.

\subsection{Necessary and sufficient condition for ridge optimality}
\label{sec:necessary-sufficient-ridge-optimality}
The isotropy condition in \Cref{cor:zero-ridge-gap} is sufficient but not necessary.
The exact necessary-and-sufficient condition is that the support-restricted oracle lies in an eigenspace of the inverse-cost matrix on the retained support.

\begin{proposition}[Necessary and sufficient condition for exact fixed-support ridge optimality]
\label{prop:necessary-sufficient-ridge-optimality}
Fix a support $\mathsf S\subseteq[d]$.
Let $\eta_{\mathsf S}^{\star}$ denote the support-restricted zero-intercept oracle on $\mathsf S$.
Then
\[
\min_{\lambda\ge0}
\mathrm{MSE}_{\mathrm{strat}}(\theta^{\mathrm{OR}}(\mathsf S,\lambda))
=
\widetilde{\OPT}(\mathsf S)
\]
if and only if there exists $\gamma\ge0$ such that
$
(H^{-1}_{\mathsf{SS}}-\gamma I)\eta_{\mathsf S}^{\star}=0
$.
\end{proposition}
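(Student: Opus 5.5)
The argument reduces the equality $\min_{\lambda\ge0}\mathrm{MSE}_{\mathrm{strat}}(\theta^{\mathrm{OR}}(\mathsf S,\lambda))=\widetilde{\OPT}(\mathsf S)$ to a statement about whether the ordinary ridge path on $\mathsf S$ passes through the oracle $\eta_{\mathsf S}^\star$. Every $\theta^{\mathrm{OR}}(\mathsf S,\lambda)$ lies in $\Theta(\mathsf S)$, and by \Cref{lem:fixed-support-oracle-gr} the support-restricted objective is strongly convex with unique minimizer $\eta_{\mathsf S}^\star$. Hence the minimum over $\lambda$ attains $\widetilde{\OPT}(\mathsf S)$ if and only if there is some $\lambda\ge0$ with $\theta_{\mathsf S}^{\mathrm{OR}}(\lambda)=\eta_{\mathsf S}^\star$. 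One caveat is whether the minimum over $\lambda\in[0,\infty)$ is actually attained. To settle this, I will note that $\lambda\mapsto\mathrm{MSE}_{\mathrm{strat}}(\theta^{\mathrm{OR}}(\mathsf S,\lambda))$ is continuous. As $\lambda\to\infty$ the path tends to $0$, so if $\eta^\star_{\mathsf S}\neq0$ the infimum is achieved at some finite $\lambda$ or approached only in the limit. In the limiting case the limit value equals $\mathrm{MSE}_{\mathrm{strat}}(0)$, which exceeds $\widetilde{\OPT}(\mathsf S)$ by uniqueness of the minimizer. If instead $\eta^\star_{\mathsf S}=0$, the eigen-condition holds trivially and I treat it separately.

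\textbf{Sufficiency.} Suppose $H^{-1}_{\mathsf{SS}}\eta_{\mathsf S}^\star=\gamma\eta_{\mathsf S}^\star$. The normal equation from \Cref{lem:fixed-support-oracle-gr} then reads $(\Sigma_{\mathsf{SS}}+\alpha_{\mathsf S}\gamma I)\eta_{\mathsf S}^\star=\Sigma_{\mathsf{SS}}\theta^{\mathrm{LS}}_{\mathsf S}(\mathsf S)$. This says exactly that $\eta_{\mathsf S}^\star=\theta_{\mathsf S}^{\mathrm{OR}}(\alpha_{\mathsf S}\gamma)$, so the value $\widetilde{\OPT}(\mathsf S)$ is attained at $\lambda=\alpha_{\mathsf S}\gamma\ge0$, exactly as in the proof of \Cref{cor:zero-ridge-gap}. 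Alternatively, one can plug the eigen-condition into \Cref{lem:ridge-path-distance} to get distance zero.

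\textbf{Necessity.} Suppose equality holds. By the reduction above, some $\lambda\ge0$ satisfies $(\Sigma_{\mathsf{SS}}+\lambda I)\eta_{\mathsf S}^\star=\Sigma_{\mathsf{SS}}\theta^{\mathrm{LS}}_{\mathsf S}(\mathsf S)$. Subtracting the oracle normal equation gives $\lambda\eta_{\mathsf S}^\star=\alpha_{\mathsf S}H^{-1}_{\mathsf{SS}}\eta_{\mathsf S}^\star$.
\begin{itemize}
\item If $\alpha_{\mathsf S}>0$, we can divide to obtain $H^{-1}_{\mathsf{SS}}\eta_{\mathsf S}^\star=(\lambda/\alpha_{\mathsf S})\eta_{\mathsf S}^\star$, so the condition holds with $\gamma=\lambda/\alpha_{\mathsf S}\ge0$.
\item If $\alpha_{\mathsf S}=0$, then $\eta^{\star\top}_{\mathsf S}H^{-1}_{\mathsf{SS}}\eta^\star_{\mathsf S}=0$, and $H^{-1}_{\mathsf{SS}}\succ0$ forces $\eta_{\mathsf S}^\star=0$. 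The condition then holds for any $\gamma$.
\end{itemize}

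The main obstacle is the attainment issue in the reduction: showing that equality of the minimum with $\widetilde{\OPT}(\mathsf S)$ really forces an exact hit at a finite $\lambda$. The problem is the possible $\lambda\to\infty$ limit, and the limit $\theta=0$ when $\mathsf S=\emptyset$ or $\theta^{\mathrm{LS}}_{\mathsf S}(\mathsf S)=0$. I will handle it as described: a continuous function on $[0,\infty)$ whose value tends to $\mathrm{MSE}_{\mathrm{strat}}(0)$ either attains its infimum or has infimum $\mathrm{MSE}_{\mathrm{strat}}(0)$. In the latter case equality with $\widetilde{\OPT}(\mathsf S)$ would force $\eta^\star_{\mathsf S}=0$, which is already covered by the degenerate branch.
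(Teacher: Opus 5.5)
Your proof is correct and follows the paper's argument. Both compare the oracle's generalized-ridge normal equation from \Cref{lem:fixed-support-oracle-gr} with the ordinary ridge normal equation, use uniqueness of the strongly convex minimizer to reduce equality to an exact hit $\theta^{\mathrm{OR}}_{\mathsf S}(\lambda)=\eta^\star_{\mathsf S}$, and split on $\alpha_{\mathsf S}>0$ versus $\alpha_{\mathsf S}=0$. Two details are valid refinements: your argument that the minimum over $\lambda$ is attained, which the paper takes for granted, and your case-free sufficiency step with $\lambda=\alpha_{\mathsf S}\gamma$, which also covers $\eta^\star_{\mathsf S}=0$ where the paper splits cases.
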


\begin{proof}
The support-restricted oracle $\eta_{\mathsf S}^{\star}$ satisfies
\[
(\Sigma_{\mathsf{SS}}+\alpha_{\mathsf S}H^{-1}_{\mathsf{SS}})
\eta_{\mathsf S}^{\star}
=
\Sigma_{\mathsf{SS}}
\theta_{\mathsf S}^{\mathrm{LS}}(\mathsf S),
\]
where
$
\alpha_{\mathsf S}
=
2\eta_{\mathsf S}^{\star\top}
H^{-1}_{\mathsf{SS}}
\eta_{\mathsf S}^{\star}
$.
The ordinary ridge estimator at level $\lambda$ satisfies
\[
(\Sigma_{\mathsf{SS}}+\lambda I)
\theta_{\mathsf S}^{\mathrm{OR}}(\lambda)
=
\Sigma_{\mathsf{SS}}
\theta_{\mathsf S}^{\mathrm{LS}}(\mathsf S).
\]

Suppose first that the fixed-support ridge gap is zero.
Since the support-restricted objective is strongly convex, its minimizer is unique. Hence there exists $\lambda\ge0$ such that
$
\theta_{\mathsf S}^{\mathrm{OR}}(\lambda)
=
\eta_{\mathsf S}^{\star}
$.
Subtracting the two normal equations gives $(\alpha_{\mathsf S}H^{-1}_{\mathsf{SS}}-\lambda I)\eta_{\mathsf S}^{\star}=0$.
If $\alpha_{\mathsf S}>0$, set $\gamma=\lambda/\alpha_{\mathsf S}$. 
Then $(H^{-1}_{\mathsf{SS}}-\gamma I)\eta_{\mathsf S}^{\star}=0$.
If $\alpha_{\mathsf S}=0$, then $\eta_{\mathsf S}^{\star}=0$ because $H^{-1}_{\mathsf{SS}}\succ0$. 
In this case, $(H^{-1}_{\mathsf{SS}}-\gamma I)\eta_{\mathsf S}^{\star}=0$
holds for every $\gamma\ge0$.

Conversely, suppose there exists $\gamma\ge0$ such that $(H^{-1}_{\mathsf{SS}}-\gamma I)\eta_{\mathsf S}^{\star}=0$.
If $\eta_{\mathsf S}^{\star}=0$, then the first-order condition implies
$
\Sigma_{\mathsf{SS}}\theta_{\mathsf S}^{\mathrm{LS}}(\mathsf S)=0
$,
and hence $\theta_{\mathsf S}^{\mathrm{LS}}(\mathsf S)=0$. Ordinary ridge also returns $0$ for every $\lambda\ge0$, so the ridge gap is zero.

If $\eta_{\mathsf S}^{\star}\neq0$, then $\alpha_{\mathsf S}>0$. Set $\lambda=\alpha_{\mathsf S}\gamma$. Then
\[
(\Sigma_{\mathsf{SS}}+\lambda I)\eta_{\mathsf S}^{\star}
=
(\Sigma_{\mathsf{SS}}+\alpha_{\mathsf S}\gamma I)\eta_{\mathsf S}^{\star}
=
(\Sigma_{\mathsf{SS}}+\alpha_{\mathsf S}H^{-1}_{\mathsf{SS}})
\eta_{\mathsf S}^{\star}
=
\Sigma_{\mathsf{SS}}
\theta_{\mathsf S}^{\mathrm{LS}}(\mathsf S).
\]
Therefore $\eta_{\mathsf S}^{\star}$ is the ordinary ridge estimator at level $\lambda$, so ordinary ridge attains $\widetilde{\OPT}(\mathsf S)$.
\end{proof}

\subsection{Two-level cost model}
\label{app:two-level-inverse-cost}

This subsection formalizes the intuition that high manipulability alone does not force exclusion.
What matters for scalar ridge is whether the retained features have manipulation costs that are homogeneous enough to be controlled by a single regularization parameter.

Partition the features as $[d]=\mathsf S_m\cup\mathsf S_b$ with $\mathsf S_m\cap\mathsf S_b=\emptyset$, where $\mathsf S_m$ denotes a more manipulable feature group and $\mathsf S_b$ denotes a baseline feature group.
For $\kappa>\mu>0$, consider the two-level inverse-cost model
\[
H^{-1}_{\kappa,\mu}
=
\begin{pmatrix}
\kappa I_{\mathsf S_m} & 0\\
0 & \mu I_{\mathsf S_b}
\end{pmatrix},
\]
where the rows and columns are ordered with the features in $\mathsf S_m$ first.
The special case $\mu=1$ corresponds to a highly manipulable group and a baseline group; the limiting regime $\mu\to0$ corresponds to a baseline group that is essentially non-manipulable.

\begin{proposition}[Homogeneity and heterogeneity in the two-level cost model]
\label{prop:two-level-inverse-cost}
Assume $\Sigma = I$ and $H^{-1}=H^{-1}_{\kappa,\mu}$ with $\kappa>\mu>0$.
For any support $\mathsf T\subseteq[d]$,
  \[
\delta_H(\mathsf T)
=
\begin{cases}
0,
& \mathsf T\subseteq \mathsf S_m
\text{ or }
\mathsf T\subseteq \mathsf S_b,\\[0.35em]
(\kappa-\mu)/2,
& \mathsf T\cap\mathsf S_m\neq\emptyset
\text{ and }
\mathsf T\cap\mathsf S_b\neq\emptyset.
\end{cases}
\]
Consequently, if \(\mathsf T\subseteq \mathsf S_m\) or
\(\mathsf T\subseteq \mathsf S_b\), then
\[
\min_{\lambda\ge 0}
\left\{
\mathrm{MSE}_{\mathrm{strat}}
\bigl(\theta^{\mathrm{OR}}(\mathsf T,\lambda)\bigr)
-
\widetilde{\OPT}(\mathsf T)
\right\}
=0.
\]
\end{proposition}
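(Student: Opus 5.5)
Since $\Sigma=I$, every principal block satisfies $\Sigma_{\mathsf{TT}}=I$, so the definition of the heterogeneity defect collapses to
\[
\delta_H(\mathsf T)=\inf_{\gamma\ge0}\bigl\|H^{-1}_{\mathsf{TT}}-\gamma I\bigr\|_{\op}.
\]
Because $H^{-1}_{\kappa,\mu}$ is diagonal, $H^{-1}_{\mathsf{TT}}$ is diagonal with entries $\kappa$ on $\mathsf T\cap\mathsf S_m$ and $\mu$ on $\mathsf T\cap\mathsf S_b$. Thus $H^{-1}_{\mathsf{TT}}-\gamma I$ is diagonal, and its operator norm is the maximum of $|\kappa-\gamma|$ and $|\mu-\gamma|$ over the levels actually present in $\mathsf T$. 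The plan is simply to evaluate this one-dimensional minimax problem case by case.

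\textbf{Homogeneous supports.} If $\mathsf T\subseteq\mathsf S_m$, then $H^{-1}_{\mathsf{TT}}=\kappa I$, and taking $\gamma=\kappa\ge0$ gives $\delta_H(\mathsf T)=0$. The case $\mathsf T\subseteq\mathsf S_b$ is identical with $\gamma=\mu$. The empty support is covered trivially: the norm is over a zero-dimensional space, so $\delta_H(\emptyset)=0$, and the ridge estimator and the oracle coincide at $0$.

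\textbf{Mixed supports.} If $\mathsf T$ meets both groups, both levels appear, so
\[
\delta_H(\mathsf T)=\inf_{\gamma\ge0}\max\{|\kappa-\gamma|,\,|\mu-\gamma|\}.
\]
By the triangle inequality,
\[
\max\{|\kappa-\gamma|,|\mu-\gamma|\}\ge\tfrac12\bigl(|\kappa-\gamma|+|\gamma-\mu|\bigr)\ge(\kappa-\mu)/2
\]
for every $\gamma$. The midpoint $\gamma=(\kappa+\mu)/2>0$ is admissible and attains this value, so $\delta_H(\mathsf T)=(\kappa-\mu)/2$.

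\textbf{Consequence for ridge.} For $\mathsf T\subseteq\mathsf S_m$ or $\mathsf T\subseteq\mathsf S_b$, \Cref{thm:fixed-support} gives
\[
0\le\min_{\lambda\ge0}\bigl\{\mathrm{MSE}_{\mathrm{strat}}(\theta^{\mathrm{OR}}(\mathsf T,\lambda))-\widetilde{\OPT}(\mathsf T)\bigr\}\le C_H(\mathsf T)\,\delta_H(\mathsf T)^2=0,
\]
since $C_H(\mathsf T)$ is finite. Alternatively, $H^{-1}_{\mathsf{TT}}$ is a scalar multiple of the identity, so \Cref{cor:zero-ridge-gap} applies directly.

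I expect no real obstacle. The only points needing care are that the infimum is restricted to $\gamma\ge0$, which is harmless because both optimal $\gamma$ choices are positive, and the degenerate empty support.
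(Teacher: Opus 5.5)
Your proposal is correct and follows essentially the same route as the paper. Under $\Sigma=I$ you reduce $\delta_H(\mathsf T)$ to $\inf_{\gamma\ge0}\|H^{-1}_{\mathsf{TT}}-\gamma I\|_{\op}$, take $\gamma=\kappa$ or $\gamma=\mu$ on single-group supports and the midpoint $(\kappa+\mu)/2$ on mixed supports, and then invoke \Cref{thm:fixed-support}. Your triangle-inequality lower bound, the empty-support remark, and the alternative appeal to \Cref{cor:zero-ridge-gap} are small but useful tightenings; the last is evidently what the paper's self-referential second justification meant.
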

\begin{proof}
Under the isotropic covariance $\Sigma = I$, we have
\[
\delta_H(\mathsf T)
=
\inf_{\gamma\ge 0}
\left\|
H^{-1}_{\mathsf{TT}}-\gamma I
\right\|_{\op}.
\]
If \(\mathsf T\subseteq \mathsf S_m\), then
\(H^{-1}_{\mathsf{TT}}=\kappa I_{\mathsf T}\), so choosing
\(\gamma=\kappa\) gives \(\delta_H(\mathsf T)=0\).
Similarly, if \(\mathsf T\subseteq \mathsf S_b\), then
\(H^{-1}_{\mathsf{TT}}=\mu I_{\mathsf T}\), so choosing
\(\gamma=\mu\) gives \(\delta_H(\mathsf T)=0\).

Now suppose that $\mathsf T\cap\mathsf S_m\neq\emptyset$ and $\mathsf T\cap\mathsf S_b\neq\emptyset$.
Then \(H^{-1}_{\mathsf{TT}}\) has eigenvalue \(\kappa\) on
\(\mathsf T\cap\mathsf S_m\) and eigenvalue \(\mu\) on
\(\mathsf T\cap\mathsf S_b\).
Therefore
\[
\left\|
H^{-1}_{\mathsf{TT}}-\gamma I
\right\|_{\op}
=
\max\{|\kappa-\gamma|,\ |\mu-\gamma|\}.
\]
The minimizer is \(\gamma=(\kappa+\mu)/2\), and the minimum value is
\((\kappa-\mu)/2\).
This proves the expression for \(\delta_H(\mathsf T)\).

The zero scalar-ridge gap for supports contained in a single group follows from
\(\delta_H(\mathsf T)=0\) together with \Cref{thm:fixed-support}. 
It also follows directly from \Cref{prop:two-level-inverse-cost}, since
\(H^{-1}_{\mathsf{TT}}\) is a scalar multiple of the identity on such supports.
\end{proof}

\Cref{prop:two-level-inverse-cost} leads to a near-optimality result for the optimal support-restricted ridge estimator:

\begin{corollary}[Near-optimality of support-restricted ridge in the two-level cost model]
Assume the setup of \Cref{prop:two-level-inverse-cost}.
If there exists a support $\mathsf T\subseteq \mathsf S_m$ or $\mathsf T\subseteq \mathsf S_b$ such that
\[
L_{\mathrm{pred}}(\mathsf T)
-
\bigl(\Gamma([d])-\Gamma(\mathsf T)\bigr)
\le
\varepsilon,
\]
then
\[
\min_{\mathsf S\subseteq[d],\,\lambda\ge 0}
\mathrm{MSE}_{\mathrm{strat}}
\bigl(\theta^{\mathrm{OR}}(\mathsf S,\lambda)\bigr)
\le
\widetilde{\OPT}+\varepsilon.
\]
\end{corollary}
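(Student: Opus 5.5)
The plan is to chain the exact support-restriction identity with the vanishing of the scalar-ridge gap on homogeneous supports, and then bound the global minimum by the value at the single witness pair $(\mathsf T,\lambda^\star)$.

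\textbf{Step 1 (zero ridge gap on $\mathsf T$).} Fix the support $\mathsf T$ supplied by the hypothesis, with $\mathsf T\subseteq\mathsf S_m$ or $\mathsf T\subseteq\mathsf S_b$. By \Cref{prop:two-level-inverse-cost} we have $\delta_H(\mathsf T)=0$, and
\[
\min_{\lambda\ge0}\mathrm{MSE}_{\mathrm{strat}}(\theta^{\mathrm{OR}}(\mathsf T,\lambda))=\widetilde{\OPT}(\mathsf T).
\]
The same conclusion also follows from \Cref{cor:zero-ridge-gap}, since $H^{-1}_{\mathsf{TT}}$ is $\kappa I$ or $\mu I$.

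\textbf{Step 2 (the minimum over $\lambda$ is attained).} This matters only if one insists on $\min$ rather than $\inf$. The proof of \Cref{cor:zero-ridge-gap} exhibits the level $\lambda^\star=\alpha_{\mathsf T}\gamma$ at which the ridge solution coincides with the oracle $\eta_{\mathsf T}^\star$, so the minimum is attained.

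\textbf{Step 3 (support-restriction identity).} By \Cref{prop:support-restriction},
\[
\widetilde{\OPT}(\mathsf T)=\widetilde{\OPT}+L_{\mathrm{pred}}(\mathsf T)-\bigl(\Gamma([d])-\Gamma(\mathsf T)\bigr)\le\widetilde{\OPT}+\varepsilon,
\]
where the inequality uses the hypothesis.

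\textbf{Step 4 (conclude).} The global minimum over $(\mathsf S,\lambda)$ is at most the value at the particular pair $(\mathsf T,\lambda^\star)$. That value equals $\widetilde{\OPT}(\mathsf T)$, which by Step 3 is at most $\widetilde{\OPT}+\varepsilon$.

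Equivalently, one can apply \Cref{thm:global-upper} directly to $\mathsf T$. The heterogeneity term $C_H(\mathsf T)\delta_H(\mathsf T)^2$ vanishes because $\delta_H(\mathsf T)=0$, which leaves exactly the hypothesized quantity bounded by $\varepsilon$.

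There is no real obstacle; the argument is a direct combination of earlier results. The only point needing care is Step 2's attainment issue. The global minimum over finitely many supports and $\lambda\ge0$ should be read as an infimum, and the explicit $\lambda^\star$ from \Cref{cor:zero-ridge-gap} settles it.
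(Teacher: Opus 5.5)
Your proposal is correct and matches the paper's proof essentially step for step. The paper also uses \Cref{prop:two-level-inverse-cost} to get a zero scalar-ridge gap on the homogeneous support $\mathsf T$, then applies \Cref{prop:support-restriction} and the hypothesis to obtain $\widetilde{\OPT}(\mathsf T)\le\widetilde{\OPT}+\varepsilon$, and finally bounds the global minimum by its value at $\mathsf T$; your point that the explicit level $\lambda=\alpha_{\mathsf T}\gamma$ attains the minimum is a detail the paper leaves implicit.
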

\begin{proof}
By \Cref{prop:two-level-inverse-cost}, any support contained entirely in
\(\mathsf S_m\) or entirely in \(\mathsf S_b\) has zero scalar-ridge gap:
\[
\min_{\lambda\ge 0}
\mathrm{MSE}_{\mathrm{strat}}
\bigl(\theta^{\mathrm{OR}}(\mathsf T,\lambda)\bigr)
=
\widetilde{\OPT}(\mathsf T).
\]
Therefore
\[
\min_{\lambda\ge 0}
\mathrm{MSE}_{\mathrm{strat}}
\bigl(\theta^{\mathrm{OR}}(\mathsf T,\lambda)\bigr)
-
\widetilde{\OPT}
=
\widetilde{\OPT}(\mathsf T)-\widetilde{\OPT}.
\]
Using the support restriction identity from
\Cref{prop:support-restriction},
\[
\widetilde{\OPT}(\mathsf T)-\widetilde{\OPT}
=
L_{\mathrm{pred}}(\mathsf T)
-
\bigl(\Gamma([d])-\Gamma(\mathsf T)\bigr).
\]
By assumption, this quantity is at most \(\varepsilon\).
Taking the minimum over all supports and all ridge parameters can only improve
the value, which proves the claim.
\end{proof}

In this two-level cost model, scalar ridge is exact on any support drawn from a single cost class, even when that class is highly manipulable.
The obstacle is therefore not high manipulability by itself, but combining features with very different manipulation costs.
On a mixed support, the oracle would shrink the two cost classes at different rates, whereas ordinary ridge has only one scalar shrinkage parameter.
As a result, the heterogeneity gap grows with the separation $\kappa-\mu$.
This formalizes the policy insight in \Cref{fig:homogeneity-phase-main}: predictive manipulable groups can be retained when their manipulation costs are internally homogeneous, whereas feature selection becomes more important when retained costs are heterogeneous.

\subsection{Additional properties of ridge tuning}
\label{app:ridge-tuning-properties}

This subsection records two elementary facts about ridge regularization in the strategic setting.
They are not needed for our main approximation theorem, but they support the qualitative message that regularization directly reduces strategic $\MSE$.

Fix a support $\mathsf S$.
Let
\[
\theta_{\mathsf S}^{\mathrm{LS}}
:=
\theta_{\mathsf S}^{\mathrm{LS}}(\mathsf S),
\qquad
\theta_{\mathsf S}^{\mathrm{OR}}(\lambda)
=
(\Sigma_{\mathsf{SS}}+\lambda I)^{-1}
\Sigma_{\mathsf{SS}}\theta_{\mathsf S}^{\mathrm{LS}}.
\]
Ignoring constants independent of $\lambda$, the strategic $\MSE$ along the support-restricted ridge path is
\[
R_{\mathsf S}(\lambda)
=
(\theta_{\mathsf S}^{\mathrm{OR}}(\lambda)-\theta_{\mathsf S}^{\mathrm{LS}})^\top
\Sigma_{\mathsf{SS}}
(\theta_{\mathsf S}^{\mathrm{OR}}(\lambda)-\theta_{\mathsf S}^{\mathrm{LS}})
+
\left(
\theta_{\mathsf S}^{\mathrm{OR}}(\lambda)^\top
H^{-1}_{\mathsf{SS}}
\theta_{\mathsf S}^{\mathrm{OR}}(\lambda)
\right)^2.
\]

\begin{proposition}[Infinitesimal ridge improves strategic $\MSE$]
\label{prop:infinitesimal-ridge-improves}
If
$
\theta_{\mathsf S}^{\mathrm{LS}\top}
H^{-1}_{\mathsf{SS}}
\Sigma_{\mathsf{SS}}^{-1}
\theta_{\mathsf S}^{\mathrm{LS}}
>
0
$,
then there exists $\lambda_0>0$ such that for all $\lambda\in(0,\lambda_0)$,
\[
\mathrm{MSE}_{\mathrm{strat}}(\theta^{\mathrm{OR}}(\mathsf S,\lambda))
<
\mathrm{MSE}_{\mathrm{strat}}(\theta^{\mathrm{OR}}(\mathsf S,0)).
\]
\end{proposition}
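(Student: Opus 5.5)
The proof will be a first-order (derivative at zero) argument on the one-dimensional function $R_{\mathsf S}(\lambda)$. The difference $\mathrm{MSE}_{\mathrm{strat}}(\theta^{\mathrm{OR}}(\mathsf S,\lambda))-\mathrm{MSE}_{\mathrm{strat}}(\theta^{\mathrm{OR}}(\mathsf S,0))$ equals $R_{\mathsf S}(\lambda)-R_{\mathsf S}(0)$. This follows from \Cref{lem:support-pythagorean}: the constants $\sigma^2+L_{\mathrm{pred}}(\mathsf S)$ are independent of $\lambda$. The map $\lambda\mapsto\theta_{\mathsf S}^{\mathrm{OR}}(\lambda)=(\Sigma_{\mathsf{SS}}+\lambda I)^{-1}\Sigma_{\mathsf{SS}}\theta_{\mathsf S}^{\mathrm{LS}}$ is smooth (indeed real-analytic) on a neighborhood of $\lambda=0$, since $\Sigma_{\mathsf{SS}}\succ0$. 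Hence $R_{\mathsf S}$ is $C^1$ near $0$. It therefore suffices to show $R_{\mathsf S}'(0)<0$; continuity of $R_{\mathsf S}'$ then gives $\lambda_0>0$ with $R_{\mathsf S}'<0$ on $[0,\lambda_0)$, and the mean value theorem yields strict decrease on $(0,\lambda_0)$.

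The first step is to compute the derivative of the path at $\lambda=0$. Differentiating the resolvent gives
\[
\frac{d}{d\lambda}\theta_{\mathsf S}^{\mathrm{OR}}(\lambda)=-(\Sigma_{\mathsf{SS}}+\lambda I)^{-1}\theta_{\mathsf S}^{\mathrm{OR}}(\lambda).
\]
At $\lambda=0$ this equals $-\Sigma_{\mathsf{SS}}^{-1}\theta_{\mathsf S}^{\mathrm{LS}}$, using $\theta_{\mathsf S}^{\mathrm{OR}}(0)=\theta_{\mathsf S}^{\mathrm{LS}}$.

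The second step is to differentiate the two terms of $R_{\mathsf S}$ at $\lambda=0$. The fit term $(\theta_{\mathsf S}^{\mathrm{OR}}(\lambda)-\theta_{\mathsf S}^{\mathrm{LS}})^\top\Sigma_{\mathsf{SS}}(\cdots)$ vanishes to second order at $\lambda=0$, so its derivative there is $0$. For the strategic term, write $q(\lambda)=\theta_{\mathsf S}^{\mathrm{OR}}(\lambda)^\top H^{-1}_{\mathsf{SS}}\theta_{\mathsf S}^{\mathrm{OR}}(\lambda)$. Then $\frac{d}{d\lambda}q^2=2q\,q'$, and
\[
q'(0)=-2\,\theta_{\mathsf S}^{\mathrm{LS}\top}H^{-1}_{\mathsf{SS}}\Sigma_{\mathsf{SS}}^{-1}\theta_{\mathsf S}^{\mathrm{LS}},
\]
using symmetry of $H^{-1}_{\mathsf{SS}}$. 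Combining,
\[
R_{\mathsf S}'(0)=-4\,q(0)\,\theta_{\mathsf S}^{\mathrm{LS}\top}H^{-1}_{\mathsf{SS}}\Sigma_{\mathsf{SS}}^{-1}\theta_{\mathsf S}^{\mathrm{LS}}.
\]

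The main point to check is the sign of this expression. By hypothesis the bilinear factor is strictly positive. This forces $\theta_{\mathsf S}^{\mathrm{LS}}\neq0$, and then $q(0)=\theta_{\mathsf S}^{\mathrm{LS}\top}H^{-1}_{\mathsf{SS}}\theta_{\mathsf S}^{\mathrm{LS}}>0$ because $H^{-1}_{\mathsf{SS}}\succ0$ (a principal submatrix of $H^{-1}\succ0$). Hence $R_{\mathsf S}'(0)<0$. The only subtlety is that $H^{-1}_{\mathsf{SS}}\Sigma_{\mathsf{SS}}^{-1}$ is not symmetric, so its quadratic form need not be positive; this is exactly why the proposition takes that positivity as its hypothesis rather than deducing it. With that settled, the continuity and mean-value argument above finishes the proof.
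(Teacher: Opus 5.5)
Your proposal is correct and follows essentially the same route as the paper: differentiate the ridge path through the resolvent, note that the fit term has zero derivative at $\lambda=0$, and compute $R_{\mathsf S}'(0)=-4\,q(0)\,\theta_{\mathsf S}^{\mathrm{LS}\top}H^{-1}_{\mathsf{SS}}\Sigma_{\mathsf{SS}}^{-1}\theta_{\mathsf S}^{\mathrm{LS}}<0$ using the hypothesis and $H^{-1}_{\mathsf{SS}}\succ0$. The only difference is at the final step, where your $C^1$ and mean-value argument is a slightly more explicit version of the paper's closing ``by continuity.''
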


\begin{proof}
The derivative of the ridge path is
\[
\frac{d}{d\lambda}\theta_{\mathsf S}^{\mathrm{OR}}(\lambda)
=
-(\Sigma_{\mathsf{SS}}+\lambda I)^{-1}
\theta_{\mathsf S}^{\mathrm{OR}}(\lambda).
\]
Thus
\[
\frac{d}{d\lambda}\theta_{\mathsf S}^{\mathrm{OR}}(0)
=
-\Sigma_{\mathsf{SS}}^{-1}
\theta_{\mathsf S}^{\mathrm{LS}}.
\]
Write
$
G(\lambda)
:=
(\theta_{\mathsf S}^{\mathrm{OR}}(\lambda)-\theta_{\mathsf S}^{\mathrm{LS}})^\top
\Sigma_{\mathsf{SS}}
(\theta_{\mathsf S}^{\mathrm{OR}}(\lambda)-\theta_{\mathsf S}^{\mathrm{LS}})
$
and
$
Q(\lambda)
:=
\left(
\theta_{\mathsf S}^{\mathrm{OR}}(\lambda)^\top
H^{-1}_{\mathsf{SS}}
\theta_{\mathsf S}^{\mathrm{OR}}(\lambda)
\right)^2
$.
Since $\theta_{\mathsf S}^{\mathrm{OR}}(0)=\theta_{\mathsf S}^{\mathrm{LS}}$, we have
$
G'(0)=0.
$
Next, define
$
q(\lambda)
:=
\theta_{\mathsf S}^{\mathrm{OR}}(\lambda)^\top
H^{-1}_{\mathsf{SS}}
\theta_{\mathsf S}^{\mathrm{OR}}(\lambda)
$.
Then
$
Q'(\lambda)=2q(\lambda)q'(\lambda)
$,
and
$
q'(0)
=
-2\theta_{\mathsf S}^{\mathrm{LS}\top}
H^{-1}_{\mathsf{SS}}
\Sigma_{\mathsf{SS}}^{-1}
\theta_{\mathsf S}^{\mathrm{LS}}
$.
Therefore
$
Q'(0)
=
-4
(
\theta_{\mathsf S}^{\mathrm{LS}\top}
H^{-1}_{\mathsf{SS}}
\theta_{\mathsf S}^{\mathrm{LS}}
)
(
\theta_{\mathsf S}^{\mathrm{LS}\top}
H^{-1}_{\mathsf{SS}}
\Sigma_{\mathsf{SS}}^{-1}
\theta_{\mathsf S}^{\mathrm{LS}}
)
$.

Since $H^{-1}_{\mathsf{SS}}\succ0$, the assumed condition implies $\theta_{\mathsf S}^{\mathrm{LS}}\neq0$, and hence
$
\theta_{\mathsf S}^{\mathrm{LS}\top}
H^{-1}_{\mathsf{SS}}
\theta_{\mathsf S}^{\mathrm{LS}}
>
0
$.
Thus
$
R_{\mathsf S}'(0)=G'(0)+Q'(0)<0
$.
By continuity, there exists $\lambda_0>0$ such that
$
R_{\mathsf S}(\lambda)<R_{\mathsf S}(0)
$
for every $\lambda\in(0,\lambda_0)$.
Since $R_{\mathsf S}(\lambda)$ differs from
$\mathrm{MSE}_{\mathrm{strat}}(\theta^{\mathrm{OR}}(\mathsf S,\lambda))$
only by constants independent of $\lambda$, the claim follows.
\end{proof}

\begin{proposition}[Scaling inverse-cost intensity increases the locally optimal ridge level]
\label{prop:scaling-inverse-cost-increases-lambda}
Fix a support $\mathsf S$.
For $\tau>0$, consider scaling the inverse-cost matrix from $H^{-1}_{\mathsf{SS}}$ to $\tau H^{-1}_{\mathsf{SS}}$.
Ignoring constants independent of $\lambda$, define the strategic $\MSE$ along the support-restricted ridge path:
\[
R_{\mathsf S,\tau}(\lambda)
=
G(\lambda)+\tau^2 Q(\lambda),
\]
where $G(\lambda)$ and $Q(\lambda)$ are defined in the proof of \Cref{prop:infinitesimal-ridge-improves}.
Suppose that for each $\tau$ in an open interval $I\subset(0,\infty)$, $R_{\mathsf S,\tau}$ admits a finite interior minimizer
$
\lambda^*(\tau)\in(0,\infty)
$,
and that this minimizer is strict, i.e.,
\[
\frac{\partial R_{\mathsf S,\tau}}{\partial\lambda}(\lambda^*(\tau))=0,
\qquad
\frac{\partial^2 R_{\mathsf S,\tau}}{\partial\lambda^2}(\lambda^*(\tau))>0.
\]
Then $\lambda^*(\tau)$ is differentiable on $I$, and
\[
\frac{d}{d\tau}\lambda^*(\tau)>0.
\]
\end{proposition}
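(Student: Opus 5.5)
The plan is to apply the implicit function theorem to the first-order condition $\partial_\lambda R_{\mathsf S,\tau}(\lambda)=G'(\lambda)+\tau^2Q'(\lambda)=0$ and then to sign the resulting derivative.

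\emph{Step 1: smoothness and differentiability of $\lambda^*$.} On $\lambda>-\lambda_{\min}(\Sigma_{\mathsf{SS}})$, the map $\lambda\mapsto\theta_{\mathsf S}^{\mathrm{OR}}(\lambda)=(\Sigma_{\mathsf{SS}}+\lambda I)^{-1}\Sigma_{\mathsf{SS}}\theta_{\mathsf S}^{\mathrm{LS}}$ is real-analytic. Hence $G$ and $Q$ are $C^\infty$ in $\lambda$. The map $F(\lambda,\tau):=G'(\lambda)+\tau^2Q'(\lambda)$ is then $C^1$ jointly in $(\lambda,\tau)$. By hypothesis $F(\lambda^*(\tau),\tau)=0$ and $\partial_\lambda F(\lambda^*(\tau),\tau)=\partial^2_\lambda R_{\mathsf S,\tau}(\lambda^*(\tau))>0$. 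The implicit function theorem therefore gives a locally unique $C^1$ solution branch through each point. Because the minimizer is strict, this branch coincides with $\lambda^*$ near each $\tau\in I$, so $\lambda^*$ is differentiable on $I$. Differentiating the identity $F(\lambda^*(\tau),\tau)=0$ in $\tau$ yields
\[
\frac{d\lambda^*}{d\tau}
=
-\frac{\partial_\tau F}{\partial_\lambda F}
=
-\frac{2\tau\,Q'(\lambda^*)}{\partial^2_\lambda R_{\mathsf S,\tau}(\lambda^*)}.
\]
Since the denominator is positive and $\tau>0$, it suffices to show $Q'(\lambda^*(\tau))<0$.

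\emph{Step 2: reduce the sign of $Q'$ to the sign of $G'$.} The first-order condition gives $\tau^2Q'(\lambda^*)=-G'(\lambda^*)$. So it is enough to show $G'(\lambda)>0$ for every $\lambda>0$. This is the main step, but the computation is explicit. Write $\Sigma_{\mathsf{SS}}=\sum_i s_i v_iv_i^\top$ with $s_i>0$, and set $c_i:=v_i^\top\theta_{\mathsf S}^{\mathrm{LS}}$. Then
\[
\theta_{\mathsf S}^{\mathrm{OR}}(\lambda)-\theta_{\mathsf S}^{\mathrm{LS}}=-\lambda(\Sigma_{\mathsf{SS}}+\lambda I)^{-1}\theta_{\mathsf S}^{\mathrm{LS}},
\]
and so
\[
G(\lambda)=\sum_i \frac{s_i\lambda^2}{(s_i+\lambda)^2}c_i^2,
\qquad
G'(\lambda)=\sum_i\frac{2s_i^2\lambda}{(s_i+\lambda)^3}c_i^2 .
\]
This is strictly positive for $\lambda>0$ unless $\theta_{\mathsf S}^{\mathrm{LS}}=0$.

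\emph{Step 3: exclude the degenerate case and conclude.} If $\theta_{\mathsf S}^{\mathrm{LS}}=0$, then the ridge path is identically zero. In that case $R_{\mathsf S,\tau}$ is constant in $\lambda$, so $\partial^2_\lambda R_{\mathsf S,\tau}\equiv 0$, which contradicts the strictness hypothesis. Therefore $G'(\lambda^*)>0$. It follows that $Q'(\lambda^*)=-G'(\lambda^*)/\tau^2<0$, and substituting into the formula from Step 1 gives $\frac{d}{d\tau}\lambda^*(\tau)>0$.

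The only delicate point is Step 1: identifying the implicit-function branch with the given minimizer $\lambda^*(\tau)$. The strict local minimum, together with local uniqueness of the zero of $F$ near $(\lambda^*(\tau_0),\tau_0)$, handles this. The sign argument in Steps 2 and 3 is routine once the spectral form of $G'$ is written down.
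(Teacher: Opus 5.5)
Your proof is essentially the paper's: you apply the implicit function theorem to $G'(\lambda)+\tau^2Q'(\lambda)=0$ and then sign $Q'(\lambda^*)=-G'(\lambda^*)/\tau^2<0$ using $G'>0$. Your eigen-expansion of $G'$ is the same quantity as the paper's $2\lambda\,\theta_{\mathsf S}^{\mathrm{OR}}(\lambda)^\top(\Sigma_{\mathsf{SS}}+\lambda I)^{-1}\theta_{\mathsf S}^{\mathrm{OR}}(\lambda)$, and your use of strictness to rule out $\theta_{\mathsf S}^{\mathrm{LS}}=0$ makes the paper's ``nontrivial case'' remark explicit.

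One caveat, which the paper shares: identifying the implicit-function branch with $\lambda^*$ needs $\lambda^*$ to be continuous (for example, taken as a locally chosen branch of strict minimizers), because strictness plus local uniqueness of zeros of $G'+\tau^2Q'$ does not by itself stop $\lambda^*$ from jumping between distinct branches of local minimizers.
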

\begin{proof}
Let
\[
\mathcal F(\lambda,\tau)
:=
\frac{\partial R_{\mathsf S,\tau}}{\partial\lambda}(\lambda)
=
G'(\lambda)+\tau^2 Q'(\lambda).
\]
At $\lambda=\lambda^*(\tau)$,
$
\mathcal F(\lambda^*(\tau),\tau)=0
$.
Moreover,
\[
\frac{\partial \mathcal F}{\partial\lambda}(\lambda^*(\tau),\tau)
=
\frac{\partial^2 R_{\mathsf S,\tau}}{\partial\lambda^2}(\lambda^*(\tau))
>
0.
\]
By the implicit function theorem, $\lambda^*(\tau)$ is differentiable and
\[
\frac{d}{d\tau}\lambda^*(\tau)
=
-
\frac{\partial \mathcal F/\partial\tau}{\partial \mathcal F/\partial\lambda}
=
-
\frac{2\tau Q'(\lambda^*(\tau))}
{R_{\mathsf S,\tau}''(\lambda^*(\tau))}.
\]
It remains to show that
$
Q'(\lambda^*(\tau))<0
$.

At the optimum,
$
G'(\lambda^*(\tau))+\tau^2Q'(\lambda^*(\tau))=0
$.
We now show that $G'(\lambda)>0$ for every $\lambda>0$ in the nontrivial case.
Using
\[
(\Sigma_{\mathsf{SS}}+\lambda I)
\theta_{\mathsf S}^{\mathrm{OR}}(\lambda)
=
\Sigma_{\mathsf{SS}}
\theta_{\mathsf S}^{\mathrm{LS}},
\]
we have
\[
\theta_{\mathsf S}^{\mathrm{OR}}(\lambda)-\theta_{\mathsf S}^{\mathrm{LS}}
=
-\lambda\Sigma_{\mathsf{SS}}^{-1}
\theta_{\mathsf S}^{\mathrm{OR}}(\lambda).
\]
Also,
\[
\frac{d}{d\lambda}\theta_{\mathsf S}^{\mathrm{OR}}(\lambda)
=
-(\Sigma_{\mathsf{SS}}+\lambda I)^{-1}
\theta_{\mathsf S}^{\mathrm{OR}}(\lambda).
\]
Hence
\begin{align*}
G'(\lambda)
=
2
(\theta_{\mathsf S}^{\mathrm{OR}}(\lambda)-\theta_{\mathsf S}^{\mathrm{LS}})^\top
\Sigma_{\mathsf{SS}}
\frac{d}{d\lambda}\theta_{\mathsf S}^{\mathrm{OR}}(\lambda)
=
2\lambda\,
\theta_{\mathsf S}^{\mathrm{OR}}(\lambda)^\top
(\Sigma_{\mathsf{SS}}+\lambda I)^{-1}
\theta_{\mathsf S}^{\mathrm{OR}}(\lambda).
\end{align*}
For $\lambda>0$ and $\theta_{\mathsf S}^{\mathrm{OR}}(\lambda)\neq0$, this is strictly positive.
Under the stated strict interior-minimizer assumption, the nontrivial case has
$\theta_{\mathsf S}^{\mathrm{OR}}(\lambda^*(\tau))\neq0$.
Thus
$
G'(\lambda^*(\tau))>0
$.
The first-order condition then implies
\[
Q'(\lambda^*(\tau))
=
-\frac{G'(\lambda^*(\tau))}{\tau^2}
<
0.
\]
Therefore
\[
\frac{d}{d\tau}\lambda^*(\tau)
=
-
\frac{2\tau Q'(\lambda^*(\tau))}
{R_{\mathsf S,\tau}''(\lambda^*(\tau))}
>
0,
\]
because $\tau>0$, $Q'(\lambda^*(\tau))<0$, and
$R_{\mathsf S,\tau}''(\lambda^*(\tau))>0$.
\end{proof}

\section{Simulation Details and Illustrations for \Cref{sec:implications}}

\subsection{Details for \Cref{fig:decomposition-tradeoff}}
\label{sec:fig:decomposition-tradeoff-details}

We consider the following $d = 4$ example:
\[
\Sigma=
\begin{pmatrix}
1 & 0.9 & 0.2 & 0.1\\
0.9 & 1 & 0.2 & 0.1\\
0.2 & 0.2 & 1 & 0.5\\
0.1 & 0.1 & 0.5 & 1
\end{pmatrix},
\qquad
\theta^*=(1.6,1.2,1.0,0.7), 
\qquad H^{-1}=\diag(8.0,1.5,0.8,0.5).
\]
Feature $1$ is highly predictive but also highly manipulable.
Feature $2$ is strongly correlated with feature $1$, and therefore acts as a less manipulable proxy.
Features $3$ and $4$ are less manipulable, moderately predictive features.

We enumerate all $2^4=16$ supports $\mathsf S\subseteq[4]$.
For each support, we compute the support-restricted zero-intercept oracle value
\[
\widetilde{\OPT}(\mathsf S)
=
\min_{\theta\in\Theta(\mathsf S)}
\mathrm{MSE}_{\mathrm{strat}}(\theta,0),
\]
and the best support-restricted ridge value
\[
\min_{\lambda\ge0}\,
\mathrm{MSE}_{\mathrm{strat}}
\bigl(\theta^{\mathrm{OR}}(\mathsf S,\lambda),0\bigr)
\]
by one-dimensional optimization over $\lambda$.
The plotted quantities are predictive loss $L_{\mathrm{pred}}(\mathsf S)$, negative manipulability gain
$-(\Gamma([d]) - \Gamma(\mathsf S))$, the heterogeneity gap $C_H(\mathsf S)\delta_H(\mathsf S)^2$, and total excess, given by the sum of these three quantities.
The support restriction identity in \Cref{prop:support-restriction} gives the support restriction cost $\widetilde{\OPT}(\mathsf S)-\widetilde{\OPT} = L_{\mathrm{pred}}(\mathsf S) - (\Gamma([d])-\Gamma(\mathsf S))$.

The left panel illustrates five representative supports.
The full support $\{1,2,3,4\}$ has zero support restriction cost but a large heterogeneity gap because it combines features with very different manipulation costs.
The support $\{2,3,4\}$ drops the most manipulable feature while retaining its less manipulable proxy and the remaining less manipulable predictive features; it has a larger support restriction cost than the full support but a much smaller heterogeneity gap, and is best overall.
The support $\{2,3\}$ retains the proxy and one less manipulable feature but loses additional predictive signal.
The singleton support $\{2\}$ retains only the proxy and therefore has zero heterogeneity gap but incurs substantial predictive loss.
The pair $\{1, 2\}$ retains the manipulable feature and its proxy, and hence incurs smaller predictive loss than the singleton support, but has a large heterogeneity gap.

The right panel plots every support by support restriction cost and heterogeneity gap.
The light labels show all supports, while the five representative supports are highlighted.
Because $\Theta(\mathsf S)\subseteq\Theta(\mathsf T)$ whenever $\mathsf S\subseteq\mathsf T$, support restriction cost is monotone with respect to set inclusion: enlarging the support cannot increase support restriction cost.
This monotonicity is visible in the placement of the full support near the left edge and smaller supports farther to the right.
By contrast, the heterogeneity gap is not monotone in support size: adding features can either improve or worsen the heterogeneity gap, depending on whether the added features make the retained inverse-cost geometry more heterogeneous.

\subsection{Details for \Cref{fig:homogeneity-phase-main}}
\label{sec:fig:homogeneity-phase-main-details}
We consider the following $d=4$ example:
\[
\Sigma = I_4,
\qquad
\theta^*=(a,a,1.0,0.9),
\qquad
H^{-1}=\diag\bigl(\kappa(1+\delta),\,\kappa(1-\delta),\,0.6,\,0.6\bigr),
\]
where we fix $a=1.2$.
Thus features $1$ and $2$ form a manipulable pair with average manipulability $\kappa$ and relative cost heterogeneity $\delta$, while features $3$ and $4$ are less manipulable baseline features.
We vary $(\kappa,\delta)$ on a $51\times 51$ grid, with $\kappa\in[0.6,7.5]$ and $\delta\in[0,0.95]$.

For each grid point $(\kappa,\delta)$, we enumerate all $2^4$ supports $\mathsf{S} \subseteq[4]$.
For each support, we compute the exact support-restricted ridge optimum
\[
\min_{\lambda\ge 0}\,
\mathrm{MSE}_{\mathrm{strat}}\bigl(\theta^{\mathrm{OR}}(\mathsf S,\lambda), 0\bigr),
\]
and the exact support-restricted zero-intercept oracle
\[
\widetilde{\mathrm{OPT}}(\mathsf S)
:=
\min_{\theta \in \Theta(\mathsf S)}
\mathrm{MSE}_{\mathrm{strat}}(\theta,0).
\]
We also compute the full-support zero-intercept oracle
\[
\widetilde{\mathrm{OPT}}
:=
\min_{\theta\in\mathbb{R}^4}\,
\mathrm{MSE}_{\mathrm{strat}}(\theta,0).
\]
These are computed using the generalized-ridge fixed-point characterization (see \Cref{prop:gr-optimum} for more details):
\begin{equation}
\label{eq:gr-fixed-point}
\theta(q)=(\Sigma+2qH^{-1})^{-1}\Sigma\theta^*,
\qquad
q=\theta(q)^\top H^{-1}\theta(q).
\end{equation}

The support-restricted ridge estimator is then chosen by joint optimization over the support and ridge level:
\[
(\mathsf{S}^{\star},\lambda^{\star})
\in
\argmin_{\mathsf{S}\subseteq[4],\,\lambda\ge 0}
\mathrm{MSE}_{\mathrm{strat}}\bigl(\theta^{\mathrm{OR}}(\mathsf{S},\lambda), 0\bigr).
\]

The left, middle, and right panels show, respectively, the optimal support $\mathsf{S}^{\star}(\kappa,\delta)$, the corresponding optimal ridge level $\lambda^{\star}(\kappa,\delta)$, and the excess strategic $\MSE$ relative to $\widetilde{\OPT}$: $\mathrm{MSE}_{\mathrm{strat}}\bigl(\theta^{\mathrm{OR}}(\mathsf{S}^{\star},\lambda^{\star}), 0\bigr)-\widetilde{\mathrm{OPT}}.$

Only three supports appear on the grid: $\{1,2,3,4\}$, $\{2,3,4\}$, and $\{3,4\}$.
The transition from $\{1,2,3,4\}$ to $\{2,3,4\}$ shows that once the manipulable pair becomes sufficiently heterogeneous, the optimizer drops the more manipulable coordinate while retaining the less manipulable one.
The transition from $\{2,3,4\}$ to $\{3,4\}$ shows that when average manipulability becomes large enough, the entire manipulable pair is excluded and only the less manipulable baseline features are retained.

\subsection{Details for \Cref{fig:supp-correlated-proxy}}
\label{sec:supp-correlated-proxy}

We consider the following $d=4$ example:
\[
\Sigma(\rho)=
\begin{pmatrix}
1 & \rho & 0.1 & 0\\
\rho & 1 & 0 & 0.1\\
0.1 & 0 & 1 & 0.2\\
0 & 0.1 & 0.2 & 1
\end{pmatrix},\qquad \theta^*=(1.6,0,1.0,0.9),
\qquad
H^{-1}=\mathrm{diag}(k_1,0.6,0.5,0.5).
\]
Feature $1$ is manipulable and directly predictive, feature $2$ is a correlated less manipulable proxy whose coefficient in $\theta^*$ is zero, and features $3$ and $4$ are less manipulable baseline features.
We vary $\rho\in[0,0.95]$ and $k_1\in[0.5,4.0]$ on a $61\times61$ grid.

For each grid point $(\rho,k_1)$, we enumerate all $2^4$ supports $\mathsf{S} \subseteq[4]$. 
For each support, we compute the exact support-restricted ridge optimum
\[
\min_{\lambda\ge 0}\,
\mathrm{MSE}_{\mathrm{strat}}\bigl(\theta^{\mathrm{OR}}(\mathsf S,\lambda), 0\bigr).
\]
We then record the minimizing support $\mathsf{S}^*(\rho,k_1)$ and ridge level $\lambda^*(\rho,k_1)$. 
The full-support zero-intercept oracle
\[
\widetilde{\mathrm{OPT}}
=
\min_{\theta\in\mathbb{R}^4}
\mathrm{MSE}_{\mathrm{strat}}(\theta,0)
\]
is computed using the generalized-ridge fixed-point equation \eqref{eq:gr-fixed-point} described above.

The left, middle, and right panels show, respectively, the optimal support $\mathsf{S}^{\star}(\rho,k_1)$, the corresponding optimal ridge level $\lambda^{\star}(\rho,k_1)$, and the excess strategic $\MSE$ relative to $\widetilde{\OPT}$: $\mathrm{MSE}_{\mathrm{strat}}\bigl(\theta^{\mathrm{OR}}(\mathsf{S}^{\star},\lambda^{\star}), 0\bigr)-\widetilde{\mathrm{OPT}}.$

The support map in the left panel shows that only two supports appear on the grid: $\{1,2,3,4\}$ and $\{2,3,4\}$.
The phase transition is therefore especially transparent: as the proxy becomes more correlated with the manipulable feature, and as that feature becomes more manipulable, the optimal design drops feature $1$ while retaining feature $2$.
The middle and right panels show that this support transition is accompanied by a systematic change in the preferred regularization level, while the resulting support-restricted ridge estimator remains close to the oracle benchmark over a broad region of the parameter space.

\subsection{Predictiveness versus manipulability with homogeneous costs}
\label{sec:supp-signal-manip}

\begin{figure}[!t]
    \centering
    \includegraphics[width=0.99\linewidth]{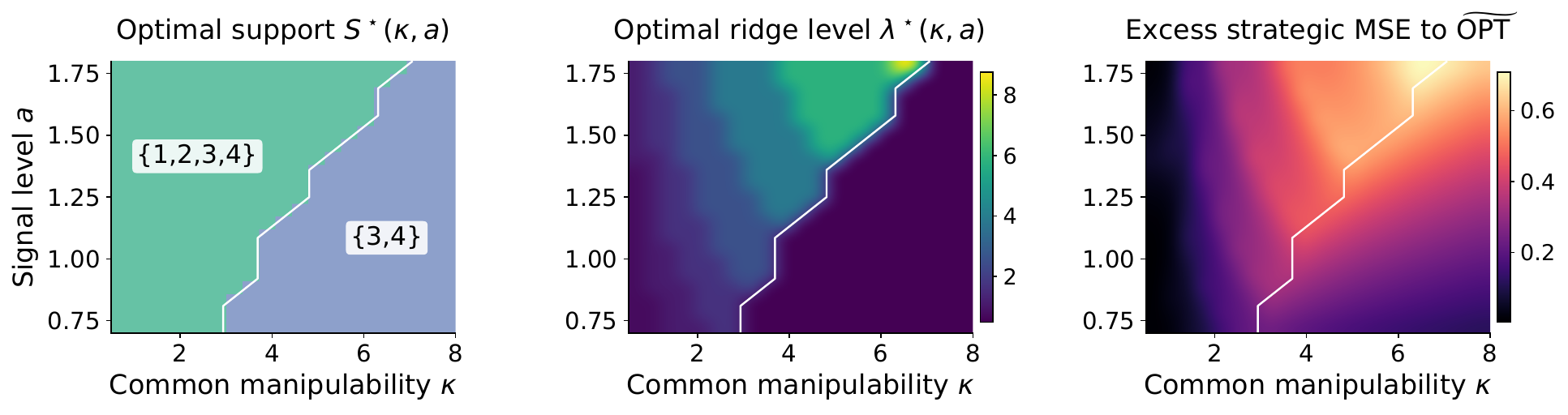}
    \caption{
    \textbf{A homogeneous manipulable pair can be retained and regularized when sufficiently predictive.}
    We compute the optimal support and regularization level for $d=4$ features with diagonal costs.
    Features $\{1,2\}$ are a homogeneous manipulable pair with common signal level $a$ and common manipulability $\kappa$;  features $3$ and $4$ have low manipulability.
    (Left) The optimal support $\mathsf{S}^*(\kappa,a)$ after joint optimization over $\mathsf S$ and $\lambda$.
    (Middle) The corresponding optimal ridge level $\lambda^*(\kappa,a)$, where white contour indicates the support boundary.
    (Right) Excess strategic $\MSE$ relative to $\widetilde{\mathrm{OPT}}$.
    As the manipulable pair becomes more predictive, it is retained rather than excluded; as manipulability increases, the optimal response is to increase regularization.
    The resulting support-restricted ridge estimator remains close to $\widetilde{\mathrm{OPT}}$ over a broad range of the parameter space.
    }
    \label{fig:supp-signal-manip}
\end{figure}

We consider the following $d=4$ example in \Cref{fig:supp-signal-manip}:
\[
\Sigma = I_4,
\qquad
\theta^*=(a,a,1.0,0.9),
\qquad
H^{-1}=\diag(\kappa,\kappa,0.6,0.6).
\]
Features $1$ and $2$ form a homogeneous manipulable pair with common signal level $a$ and common manipulability $\kappa$, while features $3$ and $4$ are less manipulable baseline features.
We vary $a\in[0.7,1.8]$ and $\kappa\in[0.5,8]$ on a $21\times 21$ grid.

For each grid point ($a, \kappa$), we compute the optimal support-restricted ridge choice $(\mathsf{S}^*,\lambda^*)$ and evaluate the resulting strategic $\MSE$.
To assess near-optimality, we compute the full-support zero-intercept oracle
\[
\widetilde{\mathrm{OPT}}=\min_{\theta\in\mathbb{R}^4} \mathrm{MSE}_{\mathrm{strat}}(\theta, 0).
\]
Because $\Sigma$ and $H^{-1}$ are diagonal here, the oracle can be computed from the scalar fixed-point equation \eqref{eq:gr-fixed-point}:
\[
q = \sum_{j=1}^4 \frac{H^{-1}_{jj} (\theta_j^*)^2}{(1+2q H^{-1}_{jj})^2},
\qquad
\theta_j^{\star,\mathrm{oracle}} = \frac{\theta_j^*}{1+2q H^{-1}_{jj}}.
\]
The left, middle, and right panels show, respectively, the optimal support $\mathsf{S}^*(\kappa,a)$, the corresponding optimal ridge level $\lambda^*(\kappa,a)$, and the excess strategic $\MSE$ relative to $\widetilde{\OPT}$: $\mathrm{MSE}_{\mathrm{strat}}\bigl(\theta^{\mathrm{OR}}(\mathsf{S}^*,\lambda^*), 0\bigr)-\widetilde{\mathrm{OPT}}.$

Only two supports appear on the grid: the full support $\{1,2,3,4\}$ and the baseline support $\{3,4\}$.
The support switch therefore cleanly illustrates the message: a homogeneous manipulable pair is retained when it is sufficiently predictive and otherwise excluded; conditional on retaining it, the preferred regularization strength increases with manipulability. The resulting support-restricted ridge estimator remains close to $\widetilde{\mathrm{OPT}}$ over a broad region of the parameter space, showing that coarse levers are often near-optimal in this setting.

\section{Theoretical Details and Proofs for \Cref{sec:algorithm}}
\label{sec:app-algorithm}

As Appendix~\ref{app:sec:theory}, throughout this appendix, we use the shorthand 
\[
\mathrm{MSE}_{\mathrm{strat}}(\theta)
=
\mathrm{MSE}_{\mathrm{strat}}(\theta,0)
=
(\theta-\theta^*)^\top\Sigma(\theta-\theta^*)
+
(\theta^\top H^{-1}\theta)^2
+
\sigma^2.
\]
For fixed $\lambda\ge0$, we also write
\[
\Phi_\lambda(\mathsf S)
:=
\Phi(\mathsf S,\lambda)
=
\mathrm{MSE}_{\mathrm{strat}}
\bigl(\theta^{\mathrm{OR}}(\mathsf S,\lambda)\bigr).
\]
Recall that for a support budget $s$, the exact fixed-$\lambda$ support-restricted ridge optimum is
\[
\Phi_{\lambda,s}^{\star}
:=
\min_{\substack{\mathsf S\subseteq[d]\\|\mathsf S|\le s}}
\Phi(\mathsf S,\lambda).
\]

\subsection{Proof of \Cref{prop:binary-equivalence}}

Let $w\in\{0,1\}^d$ and define $\mathsf S(w) = \{i\in[d]:w_i=1\}$.
By the definition of the perspective function $\varphi$, if $w_i=0$, then $\varphi(\theta_i,w_i) = +\infty$ unless $\theta_i=0$.
Thus every finite-valued feasible vector in the weighted ridge problem must satisfy $\supp(\theta)\subseteq\mathsf S(w)$.
On coordinates $i\in\mathsf S(w)$, we have $w_i=1$, so $\varphi(\theta_i,w_i)=\theta_i^2$.
Therefore the weighted ridge problem becomes:
\[
\argmin_{\theta\in\RR^d:\supp(\theta)\subseteq\mathsf S(w)}
\left\{
\E[(Y-\theta^\top X)^2]
+
\lambda\|\theta\|_2^2
\right\}.
\]
This is exactly the support-restricted ordinary ridge problem defining $\theta^{\mathrm{OR}}(\mathsf S(w),\lambda)$.
Hence, $\theta_\lambda(w)=\theta^{\mathrm{OR}}(\mathsf S(w),\lambda)$.
Evaluating the same exact strategic MSE on both sides gives
$\mathrm{MSE}_{\mathrm{strat}}(\theta_\lambda(w)) = \Phi(\mathsf S(w),\lambda)$.
Finally, the constraint $\mathbf 1^\top w\le s$ is equivalent to $|\mathsf S(w)|\le s$ for binary $w$, so
\[
\Phi_{\lambda,s}^\star
=
\min_{\substack{\mathsf S\subseteq[d]\\|\mathsf S|\le s}}
\Phi(\mathsf S,\lambda)
=
\min_{\substack{w\in\{0,1\}^d\\\mathbf 1^\top w\le s}}
\mathrm{MSE}_{\mathrm{strat}}(\theta_\lambda(w)).
\]

\subsection{Weighted relaxation as a lower bound}

For $\lambda>0$, define the weight polytope
$
\mathcal W_s
:=
\{w\in[0,1]^d:\mathbf 1^\top w\le s\}.
$
The weighted relaxation at fixed $\lambda$ is $L^{\mathrm{wt}}_{\lambda,s}:=\min_{w\in\mathcal W_s}
\MSEstrat(\theta_\lambda(w))$.
Because the binary feasible set is contained in $\mathcal W_s$, this relaxation gives a lower bound on the exact fixed-$\lambda$ support-restricted ridge optimum.

\begin{proposition}[Weighted relaxation lower bound]
\label{prop:weighted-relaxation-lower-bound}
For every $\lambda>0$ and support budget $s$, $L_{\lambda,s}^{\mathrm{wt}} \le \Phi_{\lambda,s}^\star$.
\end{proposition}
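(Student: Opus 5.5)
The approach is to identify the exact fixed-$\lambda$ optimum $\Phi_{\lambda,s}^\star$ with a minimum of the weighted objective over a subset of $\mathcal W_s$, and then enlarge the feasible set. By \Cref{prop:binary-equivalence} (and the display at the end of its proof),
\[
\Phi_{\lambda,s}^\star
=
\min_{w\in\{0,1\}^d,\ \mathbf 1^\top w\le s}
\MSEstrat(\theta_\lambda(w)).
\]
Since $\{w\in\{0,1\}^d:\mathbf 1^\top w\le s\}\subseteq\mathcal W_s$, minimizing the same function over the larger set can only decrease the value. This gives $L^{\mathrm{wt}}_{\lambda,s}\le\Phi_{\lambda,s}^\star$.

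The only real work is confirming that the objects are well defined, so that ``$\min$'' makes sense and the map $w\mapsto\theta_\lambda(w)$ is single-valued on all of $\mathcal W_s$, including weights with some zero coordinates. I would proceed as follows.

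\begin{enumerate}
\item Fix $w\in\mathcal W_s$ and let $\mathsf S(w):=\{i:w_i>0\}$. Every finite-value $\theta$ must then be supported on $\mathsf S(w)$.
\item On that support, the weighted problem is
\[
\min_{\eta}\ (\eta-\theta^{\LS}_{\mathsf S(w)})^\top\Sigma_{\mathsf S(w)\mathsf S(w)}(\eta-\theta^{\LS}_{\mathsf S(w)})+\lambda\,\eta^\top W_{\mathsf S(w)}^{-1}\eta,
\]
up to constants, using \Cref{lem:support-pythagorean}.
\item This objective is strictly convex because $\Sigma\succ0$ and $\lambda>0$. Hence $\theta_\lambda(w)$ is unique, with closed form $(\Sigma_{\mathsf{SS}}+\lambda W_{\mathsf{SS}}^{-1})^{-1}\Sigma_{\mathsf S y}$ padded by zeros.
\end{enumerate}

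The main obstacle is existence of the minimizer over $\mathcal W_s$. This needs lower semicontinuity, or ideally continuity, of $w\mapsto\MSEstrat(\theta_\lambda(w))$ on the compact polytope, and the delicate point is as some $w_i\downarrow0$.

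I would argue continuity directly. As $w_i\to0$, the penalty $\lambda t^2/w_i$ forces $\theta_i\to0$. In terms of the closed form, $(\Sigma+\lambda W^{-1})^{-1}\to$ the restricted inverse padded with zeros; one can see this via a Schur complement, or via the $\Gamma$-convergence of the strongly convex objectives. Composing with the continuous function $\MSEstrat$ then gives continuity, and the minimum is attained.

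If the limiting argument becomes messy, I would fall back to defining $L^{\mathrm{wt}}_{\lambda,s}$ as an infimum. The inequality $\inf_{\mathcal W_s}\le\min_{\text{binary}}$ holds trivially by set inclusion, and that already suffices for the stated bound.
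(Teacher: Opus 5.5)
Your proposal is correct and matches the paper's proof: the paper likewise uses \Cref{prop:binary-equivalence} to write $\Phi_{\lambda,s}^\star$ as the minimum of $\MSEstrat(\theta_\lambda(w),0)$ over binary $w$ with $\mathbf 1^\top w\le s$, and then concludes from the inclusion of that set in $\mathcal W_s$. Your extra well-posedness checks (uniqueness of $\theta_\lambda(w)$ and continuity as weights approach the boundary of $\mathcal W_s$) go beyond what the paper states, but the inequality does not need them, since, as you note, it holds equally when $L^{\mathrm{wt}}_{\lambda,s}$ is read as an infimum.
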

\begin{proof}
By \Cref{prop:binary-equivalence}, $\Phi_{\lambda,s}^\star = \min_{\substack{w\in\{0,1\}^d\\\mathbf 1^\top w\le s}} \mathrm{MSE}_{\mathrm{strat}}(\theta_\lambda(w))$.
Since $\{w\in\{0,1\}^d:\mathbf 1^\top w\le s\} \subseteq \mathcal W_s$, minimizing the same objective over $\mathcal W_s$ can only decrease the value:
\[
L_{\lambda,s}^{\mathrm{wt}}
=
\min_{w\in\mathcal W_s}
\mathrm{MSE}_{\mathrm{strat}}(\theta_\lambda(w))
\le
\Phi_{\lambda,s}^\star.
\]
\end{proof}

This also yields an a posteriori certificate whenever the weighted relaxation is solved globally, or whenever a certified lower bound on it is available.

\begin{corollary}[A posteriori certificate at fixed $\lambda$]
\label{cor:fixed-lambda-certificate}
Fix $\lambda>0$ and suppose $\widehat{\mathsf S}_\lambda$ is any support with $|\widehat{\mathsf S}_\lambda|\le s$.
Then
\[
0
\le
\Phi(\widehat{\mathsf S}_\lambda,\lambda)-\Phi_{\lambda,s}^\star
\le
\Phi(\widehat{\mathsf S}_\lambda,\lambda)-L_{\lambda,s}^{\mathrm{wt}}.
\]
More generally, the same bound holds if $L_{\lambda,s}^{\mathrm{wt}}$ is replaced by any certified lower bound on $\Phi_{\lambda,s}^\star$.
\end{corollary}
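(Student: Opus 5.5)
The statement is \Cref{cor:fixed-lambda-certificate}. The plan is to sandwich $\Phi_{\lambda,s}^\star$ between $\Phi(\widehat{\mathsf S}_\lambda,\lambda)$ and $L_{\lambda,s}^{\mathrm{wt}}$ using only the definitions and \Cref{prop:weighted-relaxation-lower-bound}; no analytic estimates are needed.

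For the left inequality, I will observe that $\widehat{\mathsf S}_\lambda$ is feasible for the minimization defining $\Phi_{\lambda,s}^\star$, since it is a subset of $[d]$ with $|\widehat{\mathsf S}_\lambda|\le s$. Hence $\Phi_{\lambda,s}^\star\le\Phi(\widehat{\mathsf S}_\lambda,\lambda)$, which gives $0\le\Phi(\widehat{\mathsf S}_\lambda,\lambda)-\Phi_{\lambda,s}^\star$. The minimum exists because the feasible family of supports is finite.

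For the right inequality, I will invoke \Cref{prop:weighted-relaxation-lower-bound}, which gives $L_{\lambda,s}^{\mathrm{wt}}\le\Phi_{\lambda,s}^\star$ for $\lambda>0$. Negating this and adding $\Phi(\widehat{\mathsf S}_\lambda,\lambda)$ to both sides yields
\[
\Phi(\widehat{\mathsf S}_\lambda,\lambda)-\Phi_{\lambda,s}^\star\le\Phi(\widehat{\mathsf S}_\lambda,\lambda)-L_{\lambda,s}^{\mathrm{wt}}.
\]
For the general version, the argument only used the inequality $L\le\Phi_{\lambda,s}^\star$. Any certified lower bound $L$ satisfies this by definition, so the same subtraction goes through with $L$ in place of $L_{\lambda,s}^{\mathrm{wt}}$.

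The only point needing care is whether $L_{\lambda,s}^{\mathrm{wt}}$ is well defined as a minimum over $\mathcal W_s$. If attainment is in doubt, I would read it as an infimum. The argument of \Cref{prop:weighted-relaxation-lower-bound} still applies, because the binary points lie in $\mathcal W_s$, so the infimum remains a valid lower bound. Beyond this, there is no real obstacle.
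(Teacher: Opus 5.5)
Your proof is correct and matches the paper's argument: feasibility of $\widehat{\mathsf S}_\lambda$ gives the left inequality, and \Cref{prop:weighted-relaxation-lower-bound} (or any lower bound $L\le\Phi_{\lambda,s}^\star$) gives the right one. Your caveat about attainment is harmless but unnecessary. For $\lambda>0$ the map $w\mapsto\theta_\lambda(w)=W^{1/2}(W^{1/2}\Sigma W^{1/2}+\lambda I)^{-1}W^{1/2}\Sigma\theta^*$ is continuous on the compact set $\mathcal W_s$, including its boundary, so the minimum defining $L_{\lambda,s}^{\mathrm{wt}}$ is attained.
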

\begin{proof}
Since $\Phi_{\lambda,s}^\star$ is the exact optimum over all supports of size at most $s$,
\[
0
\le
\Phi(\widehat{\mathsf S}_\lambda,\lambda)-\Phi_{\lambda,s}^\star.
\]
By \Cref{prop:weighted-relaxation-lower-bound},
$
L_{\lambda,s}^{\mathrm{wt}}\le \Phi_{\lambda,s}^\star
$.
Therefore
\[
\Phi(\widehat{\mathsf S}_\lambda,\lambda)-\Phi_{\lambda,s}^\star
\le
\Phi(\widehat{\mathsf S}_\lambda,\lambda)-L_{\lambda,s}^{\mathrm{wt}}.
\]
The same argument applies to any lower bound on $\Phi_{\lambda,s}^\star$.
\end{proof}

\subsection{Local support refinement}

The weighted relaxation produces a fractional vector, which must be converted into a discrete support.
For $w\in[0,1]^d$, let $\operatorname{Top}_s(w)$
denote the indices of the $s$ largest coordinates of $w$, with ties broken arbitrarily.
After rounding, we refine the support using the exact fixed-$\lambda$ objective $\Phi(\mathsf S,\lambda)$.

For a current support $\mathsf S$ with $|\mathsf S|\le s$, define the add/drop/swap neighborhood
\[
\mathcal N_s(\mathsf S)
:=
\bigl\{
\mathsf S\cup\{j\}: j\notin\mathsf S,\ |\mathsf S|<s
\bigr\}
\cup
\bigl\{
\mathsf S\setminus\{i\}: i\in\mathsf S
\bigr\}
\cup
\bigl\{
(\mathsf S\setminus\{i\})\cup\{j\}: i\in\mathsf S,\ j\notin\mathsf S
\bigr\}.
\]
Starting from an initial support $\mathsf S^{(0)}$, the refinement step repeatedly replaces the current support by the best strictly improving support in
$\mathcal N_s(\mathsf S)\cup\{\mathsf S\}$.

\begin{proposition}[Termination and monotonicity of local refinement]
\label{prop:local-refinement-termination}
Fix $\lambda\ge0$ and support budget $s$.
Starting from any support $\mathsf S^{(0)}$ with $|\mathsf S^{(0)}|\le s$, the add/drop/swap local refinement procedure terminates after finitely many accepted moves.
If $\widehat{\mathsf S}_\lambda$ is its final output, then
$
\Phi(\widehat{\mathsf S}_\lambda,\lambda)
\le
\Phi(\mathsf S^{(0)},\lambda)
$.
Moreover, $\widehat{\mathsf S}_\lambda$ is locally optimal with respect to $\mathcal N_s(\cdot)$:
$
\Phi(\widehat{\mathsf S}_\lambda,\lambda)
\le
\Phi(\mathsf T,\lambda)
$
for every
$\mathsf T\in\mathcal N_s(\widehat{\mathsf S}_\lambda)$.
\end{proposition}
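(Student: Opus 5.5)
The plan is a standard finite-descent argument; the only care needed is the direction of monotonicity and reading the stopping rule correctly.

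\textbf{Finiteness of the search space.} The family of feasible supports $\{\mathsf S\subseteq[d]:|\mathsf S|\le s\}$ is finite, with at most $2^d$ elements. The neighborhood $\mathcal N_s(\mathsf S)$ preserves the budget. Adding is allowed only when $|\mathsf S|<s$, while dropping and swapping do not increase cardinality. Hence every iterate $\mathsf S^{(t)}$ stays in this finite family.

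\textbf{Strict descent and termination.} A move from $\mathsf S^{(t)}$ to $\mathsf S^{(t+1)}$ is accepted only if $\Phi(\mathsf S^{(t+1)},\lambda)<\Phi(\mathsf S^{(t)},\lambda)$. Here $\Phi(\cdot,\lambda)$ is a well-defined real number for every support, since the support-restricted ridge estimator is unique by \Cref{lem:policy-lever-closed-forms}. The values along accepted moves therefore form a strictly decreasing sequence. This rules out revisiting any support: if $\mathsf S^{(t)}=\mathsf S^{(t')}$ with $t<t'$, strict descent would give $\Phi(\mathsf S^{(t')},\lambda)<\Phi(\mathsf S^{(t)},\lambda)$, contradicting equality of the values. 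So the iterates are pairwise distinct elements of a finite set, and the number of accepted moves is at most $2^d-1$.

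\textbf{Monotonicity.} Chaining the descent inequalities from the initial support to the output gives $\Phi(\widehat{\mathsf S}_\lambda,\lambda)\le\Phi(\mathsf S^{(0)},\lambda)$. Equality holds exactly when no move is accepted.

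\textbf{Local optimality.} This comes from the stopping rule. The loop in \Cref{alg:weighted-relaxation-local-refinement} halts precisely when no $\mathsf T\in\mathcal N_s(\widehat{\mathsf S}_\lambda)$ satisfies $\Phi(\mathsf T,\lambda)<\Phi(\widehat{\mathsf S}_\lambda,\lambda)$. That is, $\Phi(\widehat{\mathsf S}_\lambda,\lambda)\le\Phi(\mathsf T,\lambda)$ for every such $\mathsf T$. One point needs checking: the update selects the argmin over $\mathcal N_s(\mathsf S)\cup\{\mathsf S\}$, and when an improving neighbor exists this argmin is strictly better than $\mathsf S$. So every executed update is a genuine strict move, and the invariant used for descent holds.

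No step is a real obstacle. The only subtlety is phrasing termination via the finiteness of the support family rather than via a lower bound on $\Phi$; the latter alone would not preclude infinitely many small decreases.
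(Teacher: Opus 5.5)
Your proof is correct and follows the same route as the paper. Strict decrease of $\Phi(\cdot,\lambda)$ at every accepted move, combined with finiteness of the family of supports with $|\mathsf S|\le s$, gives termination; chaining the inequalities gives monotonicity; and the stopping rule gives local optimality. You also spell out two points the paper leaves implicit: that $\mathcal N_s$ preserves the budget, and that strict descent rules out revisiting a support, which is exactly what the finiteness argument needs.
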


\begin{proof}
Each accepted move replaces the current support $\mathsf S$ by a support $\mathsf T$ satisfying
$
\Phi(\mathsf T,\lambda)<\Phi(\mathsf S,\lambda)
$.
Thus the exact fixed-$\lambda$ objective strictly decreases at every accepted move.
Because there are only finitely many supports of size at most $s$, the procedure must terminate after finitely many accepted moves.

The monotonicity statement follows by chaining the strict improvements along the accepted moves:
$
\Phi(\widehat{\mathsf S}_\lambda,\lambda)
\le
\Phi(\mathsf S^{(0)},\lambda)
$.
At termination, there is no neighbor
$\mathsf T\in\mathcal N_s(\widehat{\mathsf S}_\lambda)$
with
$
\Phi(\mathsf T,\lambda)<\Phi(\widehat{\mathsf S}_\lambda,\lambda)
$,
which is precisely the stated local optimality condition.
\end{proof}

\subsection{A convex reformulation under diagonal covariance}

The weighted relaxation is generally nonconvex.
However, it becomes convex after a change of variables when the feature covariance is diagonal.
Assume throughout this section that the feature covariance is diagonal: $\Sigma = \diag(v_1,\dots,v_d)$ with $v_i > 0$ for all $i \in [d]$.
Write $\theta^* = (\theta_1^*,\dots,\theta_d^*)$.
Fix $\lambda > 0$ and recall that $\mathcal W_s = \{w\in[0,1]^d:\ \mathbf 1^\top w\le s\}$.

Under diagonal covariance, the weighted ridge coefficients decouple coordinatewise.
Indeed, for any $w\in\mathcal W_s$,
\begin{equation}
\label{eq:diag-weighted-theta}
\theta_{\lambda,i}(w)
=
\frac{v_i w_i}{v_i w_i+\lambda}\,\theta_i^*,
\qquad
i=1,\dots,d.
\end{equation}
This motivates the reparameterization
\begin{equation}
\label{eq:r-reparam}
r_i
:=
\frac{v_i w_i}{v_i w_i+\lambda},
\qquad
i=1,\dots,d.
\end{equation}
Let $r = (r_1,\dots,r_d)$.
The variables $r_i$ have a simple interpretation.
They are exactly the fraction of the original signal coefficients that survive after weighted shrinkage, since \eqref{eq:diag-weighted-theta} becomes $\theta_{\lambda,i}(w)=\theta_i^*\,r_i$.

The change of variables in \eqref{eq:r-reparam} is one-to-one for $\lambda>0$, with inverse
\begin{equation}
\label{eq:w-from-r}
w_i
=
\frac{\lambda r_i}{v_i(1-r_i)},
\qquad
0\le r_i\le \frac{v_i}{v_i+\lambda}.
\end{equation}
Under this reparameterization, the support-budget constraint $\mathbf 1^\top w\le s$ becomes
\begin{equation}
\label{eq:r-budget}
\sum_{i=1}^d
\frac{\lambda r_i}{v_i(1-r_i)}
\le s.
\end{equation}

The next proposition gives the exact weighted relaxation in these variables:
\begin{proposition}[Convex reformulation under diagonal covariance]
\label{prop:diag-cov-convex}
Under the diagonal covariance setup, the weighted relaxation is exactly equivalent to the optimization problem:
\begin{equation}
\label{eq:diag-cov-weighted-problem}
\begin{aligned}
\min_{r\in\mathbb R^d}\quad
&
\sum_{i=1}^d
v_i(\theta_i^*)^2(1-r_i)^2
+
\Bigl(
(\theta^*\odot r)^\top H^{-1}(\theta^*\odot r)
\Bigr)^2\\
\textnormal{subject to}\quad
&
0\le r_i\le \frac{v_i}{v_i+\lambda},
\qquad
i=1,\dots,d,\\
&
\sum_{i=1}^d
\frac{\lambda r_i}{v_i(1-r_i)}
\le s,
\end{aligned}
\end{equation}
where $\odot$ denotes coordinatewise multiplication.
Moreover, \eqref{eq:diag-cov-weighted-problem} is a convex optimization problem.
\end{proposition}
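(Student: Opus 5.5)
The plan is to prove the statement in two parts. First, show that the reparameterization \eqref{eq:r-reparam} is a bijection between the weight polytope $\mathcal W_s$ and the feasible set of \eqref{eq:diag-cov-weighted-problem}, and that it carries the objective $\MSEstrat(\theta_\lambda(w))$ to the displayed objective up to the additive constant $\sigma^2$. Second, check convexity of the objective and of each constraint separately.

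\emph{Step 1 (coordinatewise closed form).} By \Cref{lem:projection-decomposition}, for diagonal $\Sigma$ the weighted ridge objective \eqref{eq:weighted-ridge} equals $\sigma^2+\sum_i\{v_i(\theta_i-\theta_i^*)^2+\lambda\varphi(\theta_i,w_i)\}$, so it separates across coordinates.
\begin{itemize}
\item For $w_i>0$, minimizing $v_i(\theta_i-\theta_i^*)^2+\lambda\theta_i^2/w_i$ gives $\theta_{\lambda,i}(w)=\frac{v_iw_i}{v_iw_i+\lambda}\theta_i^*$, which is \eqref{eq:diag-weighted-theta}.
\item For $w_i=0$, the convention on $\varphi$ forces $\theta_i=0$. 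This agrees with the same formula evaluated at $w_i=0$.
\end{itemize}
Hence $\theta_\lambda(w)=\theta^*\odot r$ for all $w\in\mathcal W_s$, including boundary weights. Substituting into \Cref{lem:general-mse} with $b=0$ gives
\[
\MSEstrat(\theta_\lambda(w))=\sigma^2+\sum_i v_i(\theta_i^*)^2(1-r_i)^2+\bigl((\theta^*\odot r)^\top H^{-1}(\theta^*\odot r)\bigr)^2 .
\]

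\emph{Step 2 (the change of variables).} For $\lambda>0$ and $v_i>0$, the map $w_i\mapsto v_iw_i/(v_iw_i+\lambda)$ is continuous and strictly increasing on $[0,1]$. It sends $0\mapsto 0$ and $1\mapsto v_i/(v_i+\lambda)$, and its inverse is \eqref{eq:w-from-r}. So $w_i\in[0,1]$ holds if and only if $0\le r_i\le v_i/(v_i+\lambda)$. Since $v_i/(v_i+\lambda)<1$, every feasible $r_i$ satisfies $r_i<1$, the inverse is finite, and $\mathbf 1^\top w\le s$ becomes exactly \eqref{eq:r-budget}. Together with Step 1, this shows that the minimization over $\mathcal W_s$ and problem \eqref{eq:diag-cov-weighted-problem} have matching feasible points, with objective values differing by $\sigma^2$. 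Therefore their optimal values and minimizers correspond.

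\emph{Step 3 (convexity).}
\begin{itemize}
\item The first objective term is a sum of convex quadratics in $r$.
\item Write $q(r):=(\theta^*\odot r)^\top H^{-1}(\theta^*\odot r)=r^\top D H^{-1}D r$ with $D=\diag(\theta^*)$. This is a convex quadratic, since $DH^{-1}D\succeq0$, and it satisfies $q\ge0$. Because $t\mapsto t^2$ is convex and nondecreasing on $[0,\infty)$, the composition $q(r)^2$ is convex.
\item The box constraints are linear.
\item For the budget, write $\frac{r_i}{1-r_i}=\frac{1}{1-r_i}-1$. This is convex on $r_i<1$, with second derivative $2/(1-r_i)^3>0$. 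With positive coefficients $\lambda/v_i$, the left side of \eqref{eq:r-budget} is convex on the box, so its sublevel set is convex.
\end{itemize}
The problem therefore minimizes a convex function over a convex set.

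I expect the main obstacle to be only bookkeeping in Step 1: the degenerate weights $w_i=0$ must match the perspective convention for $\varphi$, so that the correspondence covers the closed polytope $\mathcal W_s$ and not just its interior. The convexity of the squared quadratic term needs the nonnegativity of $q$, which is why the composition rule applies; I would state that point explicitly.
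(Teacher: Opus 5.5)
Your proposal is correct and follows essentially the same route as the paper's proof: coordinatewise decoupling of the weighted ridge solution, the monotone change of variables $r_i = v_iw_i/(v_iw_i+\lambda)$ with its explicit inverse to carry over the box and budget constraints, substitution into the strategic MSE identity (dropping $\sigma^2$), and termwise convexity checks. Your explicit handling of boundary weights $w_i=0$ via the perspective convention, and your note that the composition rule needs $q\ge 0$, are slightly more careful than the paper's write-up but do not change the argument.
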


\Cref{prop:diag-cov-convex} is especially useful because it turns the inner weighted problem into a globally solvable convex program.
The first term in the objective is the prediction loss from shrinking the retained fraction of each signal coordinate away from its full value.
The second term is the strategic burden induced by the retained signal.
The budget constraint in \eqref{eq:r-budget} plays the role of a soft support-size constraint.

When $H$ is also diagonal, say $H = \diag(h_1,\dots,h_d)$ with $h_i > 0$ for $i \in [d]$, the strategic term simplifies further to $(\sum_{i=1}^d
h_i^{-1}(\theta_i^*)^2 r_i^2)^2$.
This is the form that appears in our exact diagonal examples.
It makes especially transparent how weighted shrinkage trades off predictive signal against exposure to manipulation.

There is also an exact discrete simplification in the diagonal case.
When both $\Sigma$ and $H^{-1}$ are diagonal, the fixed-$\lambda$ binary support-selection objective can be written as a modular predictive benefit minus a quadratic total-exposure penalty.
The resulting gain function is submodular, though not necessarily monotone; see \Cref{prop:diag-fixed-lambda-set-function,prop:diag-gain-submodular}.
This explains why greedy local search is a natural baseline, while also motivating the exact local-refinement step in \Cref{alg:weighted-relaxation-local-refinement}.

\begin{proof}
Before we begin, note that the statement in the main text omits the additive constant $\sigma^2$, which does not affect the minimizer.

Assume $\Sigma=\diag(v_1,\dots,v_d)$ with $v_i>0$.
For $w\in\mathcal W_s$, the weighted ridge problem decouples coordinatewise:
\[
\theta_{\lambda,i}(w)
=
\frac{v_i w_i}{v_iw_i+\lambda}\theta_i^*.
\]
Define
\[
r_i
=
\frac{v_iw_i}{v_iw_i+\lambda}.
\]
Then $\theta_{\lambda,i}(w)=\theta_i^*r_i$.
Since $w_i\in[0,1]$, we have
\[
0\le r_i\le \frac{v_i}{v_i+\lambda}.
\]
For $\lambda>0$, the map $w_i\mapsto r_i$ is one-to-one, with inverse
\[
w_i
=
\frac{\lambda r_i}{v_i(1-r_i)}.
\]
Thus the budget constraint $\mathbf 1^\top w\le s$ is equivalent to
\[
\sum_{i=1}^d
\frac{\lambda r_i}{v_i(1-r_i)}
\le s.
\]
Now, we compute the strategic MSE.
Since $\theta_\lambda(w)=\theta^*\odot r$, we have
\[
(\theta_\lambda(w)-\theta^*)^\top\Sigma(\theta_\lambda(w)-\theta^*)
=
\sum_{i=1}^d
v_i(\theta_i^*)^2(1-r_i)^2.
\]
Also,
\[
\theta_\lambda(w)^\top H^{-1}\theta_\lambda(w)
=
(\theta^*\odot r)^\top H^{-1}(\theta^*\odot r).
\]
Therefore
\[
\mathrm{MSE}_{\mathrm{strat}}(\theta_\lambda(w))
=
\sigma^2
+
\sum_{i=1}^d
v_i(\theta_i^*)^2(1-r_i)^2
+
\left(
(\theta^*\odot r)^\top H^{-1}(\theta^*\odot r)
\right)^2.
\]
Dropping the additive constant $\sigma^2$ gives the stated optimization problem.

It remains to prove convexity.
The first term
$
\sum_{i=1}^d
v_i(\theta_i^*)^2(1-r_i)^2
$
is convex in $r$.
The map
$
r
\mapsto
(\theta^*\odot r)^\top H^{-1}(\theta^*\odot r)
$
is a nonnegative convex quadratic function because $H^{-1} \succ 0$.
Hence the composition
$
r
\mapsto
\left(
(\theta^*\odot r)^\top H^{-1}(\theta^*\odot r)
\right)^2
$
is convex.

The box constraints are convex.
Finally, for each $i$, the function
\[
r_i
\mapsto
\frac{\lambda r_i}{v_i(1-r_i)}
\]
is convex on $0\le r_i<1$, since
\[
\frac{d^2}{dr_i^2}
\frac{\lambda r_i}{v_i(1-r_i)}
=
\frac{2\lambda}{v_i(1-r_i)^3}
>0.
\]
Because $r_i\le v_i/(v_i+\lambda)<1$, the budget constraint is convex.
Thus the entire problem is a convex optimization problem.
\end{proof}

\subsection{Exact strategic MSE of support-restricted ridge}

We record here an explicit expression for the exact strategic MSE of a support-restricted ridge estimator.
This expression is useful for implementation and for exact local search.

\begin{lemma}[Exact strategic MSE on a support]
\label{lem:subset-ridge-exact-risk}
Fix $\mathsf S\subseteq[d]$ and let $\mathsf R=[d]\setminus\mathsf S$.
Let
\[
\theta_{\mathsf S}^{\mathrm{LS}}(\mathsf S)
=
\Sigma_{\mathsf{SS}}^{-1}\Sigma_{\mathsf S y},
\qquad
\theta_{\mathsf S}^{\mathrm{OR}}(\lambda)
=
(\Sigma_{\mathsf{SS}}+\lambda I)^{-1}\Sigma_{\mathsf S y}.
\]
Then
\[
\Phi(\mathsf S,\lambda)
=
\sigma^2
+
L_{\mathrm{pred}}(\mathsf S)
+
\lambda^2
\theta_{\mathsf S}^{\mathrm{OR}}(\lambda)^\top
\Sigma_{\mathsf{SS}}^{-1}
\theta_{\mathsf S}^{\mathrm{OR}}(\lambda)
+
\left(
\theta_{\mathsf S}^{\mathrm{OR}}(\lambda)^\top
H^{-1}_{\mathsf{SS}}
\theta_{\mathsf S}^{\mathrm{OR}}(\lambda)
\right)^2.
\]
\end{lemma}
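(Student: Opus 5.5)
The plan is to start from the strategic MSE identity of \Cref{lem:general-mse}, with zero intercept. Because $\theta^{\mathrm{OR}}(\mathsf S,\lambda)=(\theta_{\mathsf S}^{\mathrm{OR}}(\lambda),0_{\mathsf R})\in\Theta(\mathsf S)$ by \Cref{lem:policy-lever-closed-forms}, the support-restricted Pythagorean identity in \Cref{lem:support-pythagorean} applies with $\eta=\theta_{\mathsf S}^{\mathrm{OR}}(\lambda)$. Together with $\theta^\top H^{-1}\theta=\eta^\top H^{-1}_{\mathsf{SS}}\eta$ for $\theta=(\eta,0_{\mathsf R})$, this gives
\[
\Phi(\mathsf S,\lambda)=\sigma^2+L_{\mathrm{pred}}(\mathsf S)+(\eta-\theta_{\mathsf S}^{\mathrm{LS}}(\mathsf S))^\top\Sigma_{\mathsf{SS}}(\eta-\theta_{\mathsf S}^{\mathrm{LS}}(\mathsf S))+(\eta^\top H^{-1}_{\mathsf{SS}}\eta)^2 .
\]
The first, second and fourth terms already match the claimed formula.

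It therefore remains to rewrite the fit-deviation term. The ridge normal equations read $(\Sigma_{\mathsf{SS}}+\lambda I)\eta=\Sigma_{\mathsf S y}=\Sigma_{\mathsf{SS}}\theta_{\mathsf S}^{\mathrm{LS}}(\mathsf S)$. Rearranging gives $\Sigma_{\mathsf{SS}}(\eta-\theta_{\mathsf S}^{\mathrm{LS}}(\mathsf S))=-\lambda\eta$, so $\eta-\theta_{\mathsf S}^{\mathrm{LS}}(\mathsf S)=-\lambda\Sigma_{\mathsf{SS}}^{-1}\eta$; the same identity was used in the proof of \Cref{prop:scaling-inverse-cost-increases-lambda}. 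Substituting this into the quadratic form gives $\lambda^2\eta^\top\Sigma_{\mathsf{SS}}^{-1}\Sigma_{\mathsf{SS}}\Sigma_{\mathsf{SS}}^{-1}\eta=\lambda^2\eta^\top\Sigma_{\mathsf{SS}}^{-1}\eta$, which is the stated third term.

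Two points need checking. The first is that the intercept is zero, so that $\Phi$ coincides with $\mathrm{MSE}_{\mathrm{strat}}(\cdot,0)$; this holds by definition of $\Phi$ and by \Cref{lem:policy-lever-closed-forms}. The second is the degenerate case $\mathsf S=\emptyset$, where all retained-coordinate terms vanish, $\mathsf R=[d]$, and $L_{\mathrm{pred}}(\emptyset)=\theta^{*\top}\Sigma\theta^*$, which is consistent with the formula. No step is a real obstacle. The only place care is needed is the sign and symmetry bookkeeping in substituting $-\lambda\Sigma_{\mathsf{SS}}^{-1}\eta$, which uses that $\Sigma_{\mathsf{SS}}^{-1}$ is symmetric.
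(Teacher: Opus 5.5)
Your proposal is correct and follows the paper's proof essentially step for step. You apply \Cref{lem:support-pythagorean} with $\eta=\theta_{\mathsf S}^{\mathrm{OR}}(\lambda)$, then use the ridge normal equation to write $\eta-\theta_{\mathsf S}^{\mathrm{LS}}(\mathsf S)=-\lambda\Sigma_{\mathsf{SS}}^{-1}\eta$, which collapses the fit-deviation term to $\lambda^2\eta^\top\Sigma_{\mathsf{SS}}^{-1}\eta$.
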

\begin{proof}
By \Cref{lem:support-pythagorean}, for every $\theta=(\eta,0_{\mathsf R})\in\Theta(\mathsf S)$,
\[
(\theta-\theta^*)^\top\Sigma(\theta-\theta^*)
=
L_{\mathrm{pred}}(\mathsf S)
+
(\eta-\theta_{\mathsf S}^{\mathrm{LS}}(\mathsf S))^\top
\Sigma_{\mathsf{SS}}
(\eta-\theta_{\mathsf S}^{\mathrm{LS}}(\mathsf S)).
\]
Substituting $\eta=\theta_{\mathsf S}^{\mathrm{OR}}(\lambda)$ gives
\[
\Phi(\mathsf S,\lambda)
=
\sigma^2
+
L_{\mathrm{pred}}(\mathsf S)
+
(\theta_{\mathsf S}^{\mathrm{OR}}(\lambda)-\theta_{\mathsf S}^{\mathrm{LS}}(\mathsf S))^\top
\Sigma_{\mathsf{SS}}
(\theta_{\mathsf S}^{\mathrm{OR}}(\lambda)-\theta_{\mathsf S}^{\mathrm{LS}}(\mathsf S))
+
\left(
\theta_{\mathsf S}^{\mathrm{OR}}(\lambda)^\top
H^{-1}_{\mathsf{SS}}
\theta_{\mathsf S}^{\mathrm{OR}}(\lambda)
\right)^2.
\]
The ridge normal equation is
\[
(\Sigma_{\mathsf{SS}}+\lambda I)
\theta_{\mathsf S}^{\mathrm{OR}}(\lambda)
=
\Sigma_{\mathsf{SS}}
\theta_{\mathsf S}^{\mathrm{LS}}(\mathsf S).
\]
Equivalently,
\[
\theta_{\mathsf S}^{\mathrm{OR}}(\lambda)-\theta_{\mathsf S}^{\mathrm{LS}}(\mathsf S)
=
-\lambda\Sigma_{\mathsf{SS}}^{-1}
\theta_{\mathsf S}^{\mathrm{OR}}(\lambda).
\]
Therefore
\[
(\theta_{\mathsf S}^{\mathrm{OR}}(\lambda)-\theta_{\mathsf S}^{\mathrm{LS}}(\mathsf S))^\top
\Sigma_{\mathsf{SS}}
(\theta_{\mathsf S}^{\mathrm{OR}}(\lambda)-\theta_{\mathsf S}^{\mathrm{LS}}(\mathsf S))
=
\lambda^2
\theta_{\mathsf S}^{\mathrm{OR}}(\lambda)^\top
\Sigma_{\mathsf{SS}}^{-1}
\theta_{\mathsf S}^{\mathrm{OR}}(\lambda).
\]
This proves the claim.
\end{proof}

\subsection{Closed form for the weighted ridge}

\begin{lemma}[Closed form for the weighted ridge estimator]
\label{lem:weighted-ridge-closed-form}
Suppose $w_i>0$ for all $i\in[d]$ and let $W=\diag(w)$.
Then $\theta_\lambda(w) = (\Sigma+\lambda W^{-1})^{-1}\Sigma\theta^*$.
\end{lemma}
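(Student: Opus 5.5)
The plan is to reduce the weighted ridge program \eqref{eq:weighted-ridge} to an unconstrained strictly convex quadratic and read off its minimizer from the first-order condition. When every $w_i>0$, the perspective penalty is finite everywhere and equals $\varphi(\theta_i,w_i)=\theta_i^2/w_i$. Hence $\sum_i\varphi(\theta_i,w_i)=\theta^\top W^{-1}\theta$ with $W=\diag(w)$. So the weighted problem is the generalized ridge problem
\[
\min_{\theta\in\RR^d}\ \EE[(Y-\theta^\top X)^2]+\lambda\,\theta^\top W^{-1}\theta .
\]

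First, I would invoke \Cref{lem:projection-decomposition}, which rewrites the squared-error term as $(\theta-\theta^*)^\top\Sigma(\theta-\theta^*)+\sigma^2$. The constant $\sigma^2$ does not affect the minimizer, so the objective is $(\theta-\theta^*)^\top\Sigma(\theta-\theta^*)+\lambda\theta^\top W^{-1}\theta$. Its Hessian is $2(\Sigma+\lambda W^{-1})$. This matrix is positive definite because $\Sigma\succ0$, $\lambda\ge0$, and $W^{-1}\succ0$ when all $w_i>0$. Therefore the objective is strictly convex and has a unique minimizer, so $\theta_\lambda(w)$ is well defined.

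Next, the first-order condition is $2\Sigma(\theta-\theta^*)+2\lambda W^{-1}\theta=0$, that is, $(\Sigma+\lambda W^{-1})\theta=\Sigma\theta^*$. Inverting the positive definite matrix gives $\theta_\lambda(w)=(\Sigma+\lambda W^{-1})^{-1}\Sigma\theta^*$, as claimed.

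There is no real obstacle here. The only point needing care is confirming that the case conventions of $\varphi$ (where $u=0$ forces $t=0$) never come into play. This holds because the lemma assumes $w_i>0$ for all $i$, so the perspective function is the smooth quadratic $\theta_i^2/w_i$ on all of $\RR^d$.
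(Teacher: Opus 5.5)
Your proposal is correct and follows essentially the same route as the paper: rewrite the penalty as $\lambda\theta^\top W^{-1}\theta$, reduce the squared error via $\mathbb{E}[X\varepsilon]=0$, and solve the first-order condition $(\Sigma+\lambda W^{-1})\theta=\Sigma\theta^*$. Your explicit strict-convexity check via the Hessian $2(\Sigma+\lambda W^{-1})\succ 0$ is a small extra that justifies uniqueness and sufficiency of the first-order condition; it relies only on $\Sigma\succ 0$, so it also covers $\lambda=0$, which the paper's appeal to $\lambda W^{-1}\succ 0$ does not.
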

\begin{proof}
The weighted ridge objective is $\E[(Y-\theta^\top X)^2] + \lambda\theta^\top W^{-1}\theta$.
Using $Y=\theta^{*\top}X+\eps$ and $\E[X\eps]=0$, the first-order condition is $2\Sigma(\theta-\theta^*)+2\lambda W^{-1}\theta=0$.
Thus $(\Sigma+\lambda W^{-1})\theta=\Sigma\theta^*$.
Since $\Sigma\succ0$ and $\lambda W^{-1}\succ0$, the matrix $\Sigma+\lambda W^{-1}$ is positive definite, and hence $\theta_\lambda(w) = (\Sigma+\lambda W^{-1})^{-1}\Sigma\theta^*$.
\end{proof}

\subsection{Exact set function in the diagonal case}

Assume
\[
\Sigma=\diag(v_1,\dots,v_d),
\qquad
H^{-1}=\diag(\omega_1,\dots,\omega_d),
\qquad
\theta^*=(\theta_1^*,\dots,\theta_d^*),
\]
where $v_i>0$ and $\omega_i>0$.
Fix $\lambda\ge0$.
For a subset $\mathsf S\subseteq[d]$,
\[
\theta_i^{\mathrm{OR}}(\mathsf S,\lambda)
=
\begin{cases}
\dfrac{v_i}{v_i+\lambda}\theta_i^*, & i\in\mathsf S,\\[0.8em]
0, & i\notin\mathsf S.
\end{cases}
\]

Define
\[
b_i(\lambda)
:=
\frac{v_i^2(v_i+2\lambda)}{(v_i+\lambda)^2}
(\theta_i^*)^2,
\qquad
c_i(\lambda)
:=
\omega_i
\left(\frac{v_i}{v_i+\lambda}\right)^2
(\theta_i^*)^2.
\]
Then define
\[
G_\lambda(\mathsf S)
:=
\sum_{i\in\mathsf S}b_i(\lambda)
-
\left(
\sum_{i\in\mathsf S}c_i(\lambda)
\right)^2.
\]

\begin{proposition}[Exact diagonal fixed-$\lambda$ objective]
\label{prop:diag-fixed-lambda-set-function}
Under the diagonal setup above,
\[
\Phi_\lambda(\mathsf S)
=
\Phi_\lambda(\varnothing)-G_\lambda(\mathsf S),
\]
where
\[
\Phi_\lambda(\varnothing)
=
\sigma^2+\sum_{i=1}^d v_i(\theta_i^*)^2.
\]
Consequently,
\[
\argmin_{\substack{\mathsf S\subseteq[d]\\|\mathsf S|\le s}}
\Phi_\lambda(\mathsf S)
=
\argmax_{\substack{\mathsf S\subseteq[d]\\|\mathsf S|\le s}}
G_\lambda(\mathsf S).
\]
\end{proposition}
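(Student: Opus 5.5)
We compute $\Phi_\lambda(\mathsf S)$ directly, using the strategic MSE identity (\Cref{lem:general-mse}) with zero intercept:
\[
\Phi_\lambda(\mathsf S)=\sigma^2+(\theta-\theta^*)^\top\Sigma(\theta-\theta^*)+(\theta^\top H^{-1}\theta)^2,
\qquad \theta=\theta^{\mathrm{OR}}(\mathsf S,\lambda).
\]
Because $\Sigma$ and $H^{-1}$ are diagonal, both quadratic forms split into sums over coordinates. We may therefore evaluate $\Phi_\lambda$ coordinate by coordinate, using the closed form $\theta_i=\tfrac{v_i}{v_i+\lambda}\theta_i^*$ for $i\in\mathsf S$ and $\theta_i=0$ otherwise. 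This closed form follows from \Cref{lem:policy-lever-closed-forms}: with diagonal $\Sigma$ we have $\Sigma_{\mathsf{SR}}=0$, so $\theta^{\mathrm{LS}}_{\mathsf S}(\mathsf S)=\theta^*_{\mathsf S}$, and $(\Sigma_{\mathsf{SS}}+\lambda I)^{-1}$ is diagonal.

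\textbf{Prediction term.} For $i\notin\mathsf S$, coordinate $i$ contributes $v_i(\theta_i^*)^2$. For $i\in\mathsf S$, the deviation is $\theta_i-\theta_i^*=-\tfrac{\lambda}{v_i+\lambda}\theta_i^*$, so the contribution is $v_i\lambda^2(\theta_i^*)^2/(v_i+\lambda)^2$. Adding and subtracting $v_i(\theta_i^*)^2$ for every $i\in\mathsf S$ gives
\[
\sum_{i=1}^d v_i(\theta_i^*)^2-\sum_{i\in\mathsf S}v_i(\theta_i^*)^2\Bigl(1-\tfrac{\lambda^2}{(v_i+\lambda)^2}\Bigr).
\]
Using the factorization $1-\lambda^2/(v_i+\lambda)^2=v_i(v_i+2\lambda)/(v_i+\lambda)^2$, the subtracted summand is exactly $b_i(\lambda)$.

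\textbf{Strategic term.} Here $\theta^\top H^{-1}\theta=\sum_{i\in\mathsf S}\omega_i\bigl(\tfrac{v_i}{v_i+\lambda}\bigr)^2(\theta_i^*)^2=\sum_{i\in\mathsf S}c_i(\lambda)$, and we square this sum.

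\textbf{Assembling.} Combining the two terms gives $\Phi_\lambda(\mathsf S)=\sigma^2+\sum_i v_i(\theta_i^*)^2-G_\lambda(\mathsf S)$. At $\mathsf S=\varnothing$ both sums defining $G_\lambda$ are empty, so $G_\lambda(\varnothing)=0$ and $\Phi_\lambda(\varnothing)=\sigma^2+\sum_i v_i(\theta_i^*)^2$, as claimed. The argmin/argmax equivalence follows because $\Phi_\lambda(\varnothing)$ does not depend on $\mathsf S$, so minimizing $\Phi_\lambda$ over $\{|\mathsf S|\le s\}$ is the same as maximizing $G_\lambda$ over that set.

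The computation is routine. The only step that needs care is the algebraic identification of $b_i(\lambda)$ through the factorization $(v_i+\lambda)^2-\lambda^2=v_i(v_i+2\lambda)$, together with checking that the diagonal structure really removes all cross terms, both in the quadratic forms and in the dependence of $\theta^{\mathrm{LS}}_{\mathsf S}(\mathsf S)$ on the omitted coordinates.
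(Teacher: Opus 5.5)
Your proposal is correct and follows essentially the same route as the paper. Both arguments evaluate the diagonal prediction and strategic terms coordinatewise using $\theta_i = \frac{v_i}{v_i+\lambda}\theta_i^*$ on $\mathsf S$, identify the per-coordinate prediction gain with $b_i(\lambda)$ via $1-(1-r_i)^2 = 2r_i - r_i^2 = v_i(v_i+2\lambda)/(v_i+\lambda)^2$ (where $r_i = v_i/(v_i+\lambda)$), and recognize the strategic term as $\bigl(\sum_{i\in\mathsf S} c_i(\lambda)\bigr)^2$. Your only addition is noting that $\Sigma_{\mathsf{SR}}=0$ justifies that closed form, which the paper simply states in the setup.
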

\begin{proof}
Let
\[
r_i(\lambda)=\frac{v_i}{v_i+\lambda}.
\]
For $i\in\mathsf S$,
$
\theta_i^{\mathrm{OR}}(\mathsf S,\lambda)=r_i(\lambda)\theta_i^*
$,
and for $i\notin\mathsf S$,
$
\theta_i^{\mathrm{OR}}(\mathsf S,\lambda)=0
$.
The prediction component of the strategic MSE is
\[
\sum_{i\notin\mathsf S}v_i(\theta_i^*)^2
+
\sum_{i\in\mathsf S}v_i(\theta_i^*)^2(1-r_i(\lambda))^2.
\]
Equivalently, relative to the empty support, retaining coordinate $i$ gives prediction gain
\[
v_i(\theta_i^*)^2
-
v_i(\theta_i^*)^2(1-r_i(\lambda))^2
=
v_i(\theta_i^*)^2
\left(
2r_i(\lambda)-r_i(\lambda)^2
\right).
\]
Since
\[
2r_i(\lambda)-r_i(\lambda)^2
=
\frac{v_i(v_i+2\lambda)}{(v_i+\lambda)^2},
\]
this gain is exactly $b_i(\lambda)$.

The strategic term is
\[
\left(
\sum_{i\in\mathsf S}
\omega_i r_i(\lambda)^2(\theta_i^*)^2
\right)^2
=
\left(
\sum_{i\in\mathsf S}c_i(\lambda)
\right)^2.
\]
Thus
\[
\Phi_\lambda(\mathsf S)
=
\Phi_\lambda(\varnothing)
-
\sum_{i\in\mathsf S}b_i(\lambda)
+
\left(
\sum_{i\in\mathsf S}c_i(\lambda)
\right)^2,
\]
which is equivalent to the stated identity.
\end{proof}

\begin{proposition}[Submodularity of the diagonal gain function]
\label{prop:diag-gain-submodular}
For fixed $\lambda\ge0$, the set function $G_\lambda$ is submodular.
\end{proposition}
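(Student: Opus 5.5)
We prove submodularity straight from the definition, using the marginal-gain characterization: $G_\lambda$ is submodular iff for all $\mathsf A\subseteq\mathsf B\subseteq[d]$ and $j\notin\mathsf B$,
\[
G_\lambda(\mathsf A\cup\{j\})-G_\lambda(\mathsf A)\ \ge\ G_\lambda(\mathsf B\cup\{j\})-G_\lambda(\mathsf B).
\]
Write $C(\mathsf S):=\sum_{i\in\mathsf S}c_i(\lambda)$. Then $G_\lambda$ splits into a modular part $\sum_{i\in\mathsf S}b_i(\lambda)$ and the term $-C(\mathsf S)^2$. Modular functions are both submodular and supermodular. A sum of submodular functions is submodular. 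So it suffices to show that $\mathsf S\mapsto -C(\mathsf S)^2$ is submodular, equivalently that $\mathsf S\mapsto C(\mathsf S)^2$ is supermodular.

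For this we compute the marginal gain of adding $j\notin\mathsf S$:
\[
C(\mathsf S\cup\{j\})^2-C(\mathsf S)^2=2c_j(\lambda)\,C(\mathsf S)+c_j(\lambda)^2 .
\]
This increment depends on $\mathsf S$ only through $C(\mathsf S)$, with coefficient $2c_j(\lambda)$. Each weight $c_i(\lambda)=\omega_i\bigl(v_i/(v_i+\lambda)\bigr)^2(\theta_i^*)^2$ is nonnegative, because $\omega_i>0$, $v_i>0$ and $\lambda\ge0$. Hence $C$ is nondecreasing under inclusion, so $\mathsf A\subseteq\mathsf B$ gives $C(\mathsf A)\le C(\mathsf B)$. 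Since also $c_j(\lambda)\ge0$, the increment of $C^2$ at $\mathsf A$ is at most its increment at $\mathsf B$. This is supermodularity of $C^2$. Negating it and adding the modular part gives the displayed marginal inequality for $G_\lambda$, including the $\lambda=0$ case.

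The only point that needs care is the sign of the weights $c_i(\lambda)$. The argument is simply that a convex nondecreasing function of a nonnegative modular function is supermodular. It would fail if some $c_i$ could be negative, and this is exactly where diagonality of $H^{-1}$ with positive entries is used. We will also note that $G_\lambda$ need not be monotone, since the quadratic penalty can dominate the gains $b_i(\lambda)$. This is consistent with the remark in the text and needs no separate proof.
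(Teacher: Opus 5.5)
Your proposal is correct and follows essentially the same route as the paper: both compute the marginal gain of adding $j$, which depends on $\mathsf S$ only through $\sum_{i\in\mathsf S}c_i(\lambda)$ with coefficient $-2c_j(\lambda)\le 0$, and then use $c_i(\lambda)\ge 0$ to get diminishing returns. Your explicit split into a modular part plus the supermodular term $C(\mathsf S)^2$ is only a repackaging of that same computation.
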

\begin{proof}
For $j\notin\mathsf S$, the marginal gain from adding $j$ is
\begin{align*}
\Delta_j^\lambda(\mathsf S)
&:=
G_\lambda(\mathsf S\cup\{j\})-G_\lambda(\mathsf S)\\
&=
b_j(\lambda)
-
\left(
\sum_{i\in\mathsf S}c_i(\lambda)+c_j(\lambda)
\right)^2
+
\left(
\sum_{i\in\mathsf S}c_i(\lambda)
\right)^2\\
&=
b_j(\lambda)
-
2c_j(\lambda)\sum_{i\in\mathsf S}c_i(\lambda)
-
c_j(\lambda)^2.
\end{align*}
Now let $\mathsf S\subseteq\mathsf T$ and $j\notin\mathsf T$.
Since $c_i(\lambda)\ge0$ for all $i$,
\[
\sum_{i\in\mathsf S}c_i(\lambda)
\le
\sum_{i\in\mathsf T}c_i(\lambda).
\]
Therefore
$
\Delta_j^\lambda(\mathsf S)
\ge
\Delta_j^\lambda(\mathsf T)
$.
This is exactly the diminishing-returns characterization of submodularity.
\end{proof}

\subsection{Exact bilinear formulation and McCormick formulation}

The weighted problem can also be written as an exact nonlinear constrained program.
This formulation is useful for deriving convex lower bounds.

Fix $\lambda>0$ and define $r:=\Sigma(\theta^*-\theta)$.
For interior weights, the weighted ridge normal equation is $(\Sigma+\lambda W^{-1})\theta=\Sigma\theta^*$.
Equivalently,
\[
r_i=\lambda\frac{\theta_i}{w_i},
\qquad i=1,\dots,d,
\]
or
\[
\lambda\theta_i=w_i r_i,
\qquad i=1,\dots,d.
\]

\begin{proposition}[Exact bilinear formulation]
\label{prop:exact-bilinear-weighted-formulation}
The weighted relaxation is equivalent to
\begin{equation}
\label{eq:exact-weighted-bilinear-app}
\begin{aligned}
\min_{\theta,r,w}\quad
&
(\theta-\theta^*)^\top\Sigma(\theta-\theta^*)
+
(\theta^\top H^{-1}\theta)^2
+
\sigma^2\\
\textnormal{s.t.}\quad
&
r=\Sigma(\theta^*-\theta),\\
&
\lambda\theta_i=w_i r_i,
\qquad i=1,\dots,d,\\
&
0\le w_i\le1,
\qquad i=1,\dots,d,\\
&
\mathbf 1^\top w\le s.
\end{aligned}
\end{equation}
\end{proposition}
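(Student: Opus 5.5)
The plan is to prove equivalence of feasible sets: the map $w \mapsto (\theta_\lambda(w), \Sigma(\theta^*-\theta_\lambda(w)), w)$ sends each feasible $w$ of the weighted relaxation to a feasible triple of \eqref{eq:exact-weighted-bilinear-app} with the same objective value. Conversely, every feasible triple $(\theta,r,w)$ has $\theta=\theta_\lambda(w)$. Since the objective of \eqref{eq:exact-weighted-bilinear-app} depends only on $\theta$ and equals $\MSEstrat(\theta_\lambda(w))$, the two optimal values coincide and minimizers correspond. The constraints $0\le w_i\le 1$ and $\mathbf 1^\top w\le s$ appear verbatim in both problems, so only the link between $\theta$ and $w$ needs checking.

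\emph{Step 1 (forward direction).} Fix $w\in\mathcal W_s$ and let $\mathsf S=\{i:w_i>0\}$. By the perspective definition of $\varphi$, $\theta_\lambda(w)$ vanishes off $\mathsf S$. On $\mathsf S$ it minimizes the strictly convex objective
\[
(\eta-\theta^*)^\top\Sigma(\eta-\theta^*)+\lambda\sum_{i\in\mathsf S}\eta_i^2/w_i ,
\]
as in \Cref{lem:weighted-ridge-closed-form}, restricted to $\Theta(\mathsf S)$. The first-order conditions in the coordinates $i\in\mathsf S$ read $[\Sigma(\theta^*-\theta)]_i=\lambda\theta_i/w_i$, i.e.\ $\lambda\theta_i=w_ir_i$ with $r=\Sigma(\theta^*-\theta)$. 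For $i\notin\mathsf S$ we have $\theta_i=0$ and $w_i=0$, so $\lambda\theta_i=0=w_ir_i$ holds trivially whatever $r_i$ is. The triple is therefore feasible.

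\emph{Step 2 (converse).} Take any feasible $(\theta,r,w)$. For $i$ with $w_i=0$, the constraint $\lambda\theta_i=0$ together with $\lambda>0$ forces $\theta_i=0$, so $\supp(\theta)\subseteq\mathsf S$. For $i\in\mathsf S$, the constraint gives $[\Sigma(\theta^*-\theta)]_i=\lambda\theta_i/w_i$, which are exactly the stationarity conditions of the strictly convex restricted problem from Step 1. Strict convexity holds because $\Sigma_{\mathsf{SS}}\succ0$, so the stationary point is unique and $\theta=\theta_\lambda(w)$. Note that $r$ is determined by $\theta$, and its values on coordinates outside $\mathsf S$ impose no constraint.

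\emph{Main obstacle.} The main care point is the boundary $w_i=0$, where the closed form of \Cref{lem:weighted-ridge-closed-form} does not apply. This is why I work with the restricted problem on $\mathsf S$ rather than the inverse $W^{-1}$, and why the positivity $\lambda>0$ is essential in Step 2. A second point to check is that the unrestricted residual coordinates $r_i$ for $i\notin\mathsf S$ do not accidentally constrain $\theta$. They do not, since $w_i=0$ makes the bilinear constraint independent of $r_i$. With both directions established, the values and minimizers agree.
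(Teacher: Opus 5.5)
Your proposal is correct and follows the same route as the paper: you identify the constraints $r=\Sigma(\theta^*-\theta)$ and $\lambda\theta_i=w_ir_i$ as the weighted-ridge normal equations, and you use $\lambda>0$ to force $\theta_i=0$ whenever $w_i=0$, which matches the perspective convention. Your version is more careful than the paper's at the boundary: you work with the strictly convex problem restricted to $\{i:w_i>0\}$ and prove both directions, whereas the paper only observes that $\theta_i=0$ there.
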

\begin{proof}
For any $w$ with strictly positive coordinates, the constraints $r=\Sigma(\theta^*-\theta)$ and $\lambda\theta_i=w_i r_i$ for $i=1,\dots,d$, are exactly the weighted ridge normal equations. Therefore the feasible $\theta$ associated with $w$ is precisely $\theta_\lambda(w)$.
The objective is exactly $\mathrm{MSE}_{\mathrm{strat}}(\theta)$.
At boundary points with some $w_i=0$, the equality $\lambda\theta_i=w_i r_i$
forces $\theta_i=0$, which matches the perspective convention in the weighted ridge definition.
Thus the formulation remains exact on the boundary as well.
\end{proof}

We next describe a convex lower-bound relaxation.
Let $B_0:=\theta^{*\top}\Sigma\theta^*$.
Since $w=0$ is feasible and gives $\theta=0$, any minimizer of the weighted relaxation has prediction error at most $B_0$:
\[
(\theta-\theta^*)^\top\Sigma(\theta-\theta^*)
\le
B_0.
\]
Therefore, for each coordinate,
\[
|r_i|
=
|e_i^\top\Sigma(\theta^*-\theta)|
\le
\sqrt{\Sigma_{ii}B_0}
=:
R_i.
\]
The bound follows from Cauchy's inequality:
\[
|e_i^\top\Sigma(\theta^*-\theta)|
=
|(\Sigma^{1/2}e_i)^\top\Sigma^{1/2}(\theta^*-\theta)|
\le
\sqrt{\Sigma_{ii}}
\sqrt{(\theta-\theta^*)^\top\Sigma(\theta-\theta^*)}.
\]

For $0\le w_i\le1$ and $-R_i\le r_i\le R_i$, the convex hull relaxation of $\lambda\theta_i=w_i r_i$
is given by the McCormick inequalities:
\begin{align}
\label{eq:mcc1-app}
\lambda\theta_i &\ge -R_i w_i,\\
\label{eq:mcc2-app}
\lambda\theta_i &\le R_i w_i,\\
\label{eq:mcc3-app}
\lambda\theta_i &\ge r_i-R_i(1-w_i),\\
\label{eq:mcc4-app}
\lambda\theta_i &\le r_i+R_i(1-w_i).
\end{align}

\section{Algorithmic Details and Illustrations for \Cref{sec:algorithm}}
\label{sec:app-algorithm-illustrations}

The synthetic benchmarks in this section validate the algorithmic pipeline from \Cref{sec:algorithm} in settings where a support-constrained oracle can be computed exactly.
The main question behind these benchmarks is whether,
given population objects $(\Sigma,\theta^*,H)$ and a support budget $s$, weighted relaxation followed by rounding and exact local refinement recovers the best support-restricted ridge rule.
Because the benchmark is low-dimensional, we can enumerate all supports and directly compare every method to the exact grid oracle.

\paragraph{Problem size and oracle.}
All instances use
\[
    d=18,
    \qquad
    s=5,
    \qquad
    \Lambda=\{0.05,\ 0.20,\ 0.50,\ 1.00,\ 2.00\}.
\]
For each instance, the exact grid oracle is
\[
(\mathsf S^\star,\lambda^\star)
\in
\arg\min_{\substack{\mathsf S\subseteq[d]\\ |\mathsf S|\le s}}
\ \min_{\lambda\in\Lambda}\,
\Phi(\mathsf S,\lambda),
\qquad
\Phi(\mathsf S,\lambda)
=
\mathrm{MSE}_{\mathrm{strat}}
\bigl(\theta^{\mathrm{OR}}(\mathsf S,\lambda)\bigr).
\]
The oracle is computed by exhaustive search over all supports of size at most $s$ and all ridge levels over the grid $\Lambda$. All reported values are population strategic $\MSE$ under the synthetic $(\Sigma,\theta^*,H)$.
We suppress the additive noise variance $\sigma^2$, since it is constant across methods for a fixed instance.

\paragraph{Synthetic regimes.}
We consider three covariance regimes and two signal--cost regimes.
The covariance regimes are:
identity covariance where $\Sigma=I_d$;
AR(1) covariance where $\Sigma_{ij}=\rho^{|i-j|}$ with $\rho=0.65$;
and block-correlated covariance where the $18$ coordinates are split into three blocks of size $6$, with pairwise correlation $0.80$ within each block and $0.05$ across blocks, and diagonal entries normalized to one.
Throughout, the cost matrix $H$ is diagonal, and we generate inverse costs $H^{-1}_{jj}$ directly. 

We consider two regimes for the joint distribution of signal and manipulation costs.
In the independent regime, signal and inverse costs are drawn independently as:
$\theta_j^* = |\mathcal N(0.9,0.45^2)|+0.05$ with $H^{-1}_{jj}=\exp(\mathcal N(0,0.8^2))$.
In the tradeoff regime, higher-signal coordinates also tend to be more manipulable. 
Specifically, we draw $z_j,\varepsilon_j,\eta_j\stackrel{\mathrm{iid}}{\sim}\mathcal N(0,1)$ and set $\theta_j^*=|0.95+0.60z_j+0.20\varepsilon_j|+0.05$, with  $H^{-1}_{jj}=\exp(0.95z_j+0.35\eta_j)$.
For each of the $3\times2$ regime combinations, we generate $5$ independent replications, giving $30$ total benchmark instances.

\begin{figure}[!t]
    \centering
    \includegraphics[width=0.99\linewidth]{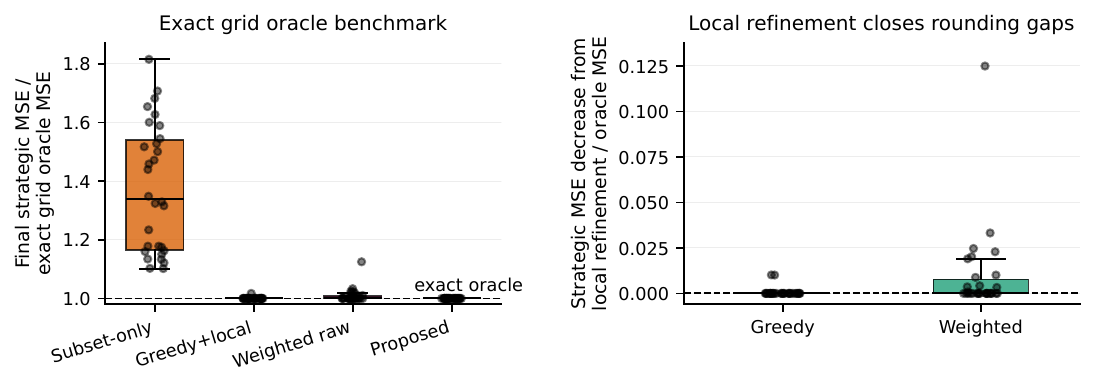}
    \caption{
    \textbf{Low-dimensional synthetic benchmarks validate our proposed weighted screen-and-refit method.}
    (Left) Final strategic $\MSE$ normalized by the exact grid oracle $\MSE$ on the same instance, $\mathrm{MSE}_{\mathrm{strat}}(\widehat\theta)/\mathrm{MSE}_{\mathrm{strat}}(\theta^{\mathrm{OR}}(\mathsf S^\star,\lambda^\star))$. A value of $1$ means that the method attains the exact grid oracle.
    The proposed weighted screen-and-refit method attains the oracle on all $30$ instances; greedy with local refinement attains it on $29$ of $30$; weighted rounding alone attains it on $18$ of $30$ and can have a small rounding gap; subset selection without regularization remains consistently farther from the oracle.
    (Right) Decrease in strategic $\MSE$ produced by local refinement, normalized by the exact grid oracle $\MSE$. Local refinement rarely changes the greedy solution, but it repairs several rounded weighted supports.
    This motivates its inclusion in our default algorithm.
    }
    \label{fig:lowdim-algorithm-design}
\end{figure}

\begin{figure}[!t]
    \centering
    \includegraphics[width=0.8\linewidth]{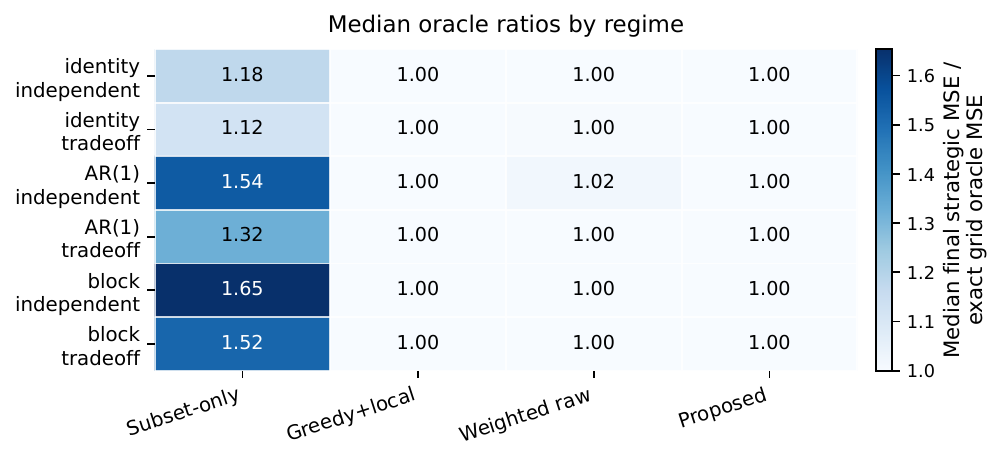}
    \caption{
    \textbf{Median oracle ratios by synthetic regime.}
    Each entry reports the median final strategic $\MSE$ normalized by the exact grid oracle $\MSE$, $\mathrm{MSE}_{\mathrm{strat}}(\widehat\theta)/\mathrm{MSE}_{\mathrm{strat}}(\theta^{\mathrm{OR}}(\mathsf S^\star,\lambda^\star))$,
    across the five replications in the corresponding regime.
    The proposed weighted screen-and-refit method attains median ratio $1.00$ in every regime.
    The subset-only baseline is farthest from the oracle, especially in the correlated regimes.
    This illustrates the value of combining support selection with regularization stressed throughout the paper.
    }
    \label{fig:lowdim-algorithm-heatmap}
\end{figure}

\paragraph{Methods.}
We compare four support-budget-feasible methods: 
(1) Best subset without regularization, which exhaustively searches over supports of size at most $s$ at $\lambda=0$.
This baseline isolates the value of support selection alone, without ridge shrinkage.
(2) Greedy plus local refinement, in which for each fixed $\lambda \in \Lambda$, we start from the empty support and repeatedly add the coordinate whose inclusion gives the largest decrease in the exact fixed-$\lambda$ objective $\Phi(\mathsf S,\lambda)$, until the support budget is reached.
The resulting greedy support is then refined by the same add/drop/swap local search used in our method (see \Cref{alg:weighted-relaxation-local-refinement} for more details), and the best refined candidate is chosen over $\lambda \in \Lambda$.
(3) Weighted rounding only, in which we solve the weighted relaxation over $w \in[0,1]^d$ with $\mathbf 1^\top w\le s$, round to the top-$s$ coordinates of the resulting weight vector, refit exact support-restricted ridge on that rounded support, and then choose the best $\lambda$.
(4) Weighted screen-and-refit, which starts from the same weighted rounded support but then applies exact add/drop/swap local refinement before the final support-restricted ridge refit.

The local-refinement neighborhood contains all one-step additions, deletions, and swaps that respect the support budget.  
The refinement procedure repeatedly accepts the best improving move under the exact fixed-$\lambda$ objective $\Phi(\mathsf S,\lambda)$ until no improving move remains.
This step can only improve the rounded support and terminates after finitely many moves by \Cref{prop:local-refinement-termination}.
We do not include full-support ridge in the oracle-ratio figures because it violates the support budget and therefore is not a feasible method for this support-constrained benchmark.

\paragraph{Results.}
\Cref{fig:lowdim-algorithm-design} shows that support selection and regularization must be tuned jointly: the exact subset-only baseline at $\lambda=0$ has median normalized $\MSE$ $1.34$ and maximum normalized $\MSE$ $1.82$, relative to the exact grid oracle. \Cref{fig:lowdim-algorithm-heatmap} further shows that the subset-only baseline remains farthest from the oracle across covariance and signal–cost regimes.
Weighted relaxation is useful as a screening device, but the rounded support should be refined.
Weighted rounding alone attains the exact grid oracle on $18$ out of $30$ instances and has maximum normalized $\MSE$ $1.125$, whereas the weighted screen-and-refit pipeline after exact local refinement attains the exact grid oracle on all $30$ instances.
Local refinement improves the rounded weighted solution on $12$ of $30$ instances, with maximum normalized $\MSE$ decrease $0.125$.

These synthetic benchmarks motivate the final algorithmic steps used in the experiments: weighted screening to obtain a strong support initializer, exact local refinement to remove rounding artifacts, and exact support-restricted ridge refitting on the refined support.
Greedy local search is a strong discrete baseline in these low-dimensional problems, but the weighted relaxation provides the more general methodology: it directly implements the continuous relaxation introduced in \Cref{sec:algorithm} and admits the convex reformulation for the diagonal-covariance case in \Cref{prop:diag-cov-convex}.

\section{Experimental Details and Illustrations for \Cref{sec:experiments}}
\label{sec:app-expts}

\subsection{Data description}
\label{sec:data-description}

The \texttt{upcoding} package\footnote{\url{https://github.com/StanfordHPDS/upcoding}} simulates realistic co-occurring HCCs for American adults aged 65 years and older (i.e., Medicare eligible) based on self-reported health conditions from All of Us survey data~\citep{all2019all}. Because these self-reported conditions are unaffected by coding incentives, the resulting baseline data provide an unmanipulated representation of patients’ underlying health conditions.
The package allows users to modify the baseline data by specifying either upcoding or undercoding of existing baseline diagnoses.
We use the package to construct realistic baseline and manipulated CMS-HCC Version 28 (V28) feature vectors. 
The resulting baseline data contain $n=5000$ beneficiaries and 115 binary V28 HCC indicators. 
The HCC features are denoted \texttt{HCC1}, \texttt{HCC2}, \dots and correspond to CMS-HCC condition categories.
For example, HCC238 corresponds to specified heart arrhythmias, while HCCs 125--127 correspond to severe, moderate, and mild or unspecified dementia, respectively.

The upcoding targets are the ten V28 diagnosis groups identified in recent work~\citep{kronick2025are} as the groups that contributed the most to differential coding between traditional Medicare (TM) and Medicare Advantage (MA): chronic obstructive pulmonary disease (COPD), acute or chronic heart failure, diabetes, morbid obesity, drug use disorder, chronic kidney disease, major depression, rheumatoid arthritis, hemiplegia/hemiparesis, and alcohol use disorder.
These groups are reported to account for almost all of the V28 MA--TM measured-risk difference in 2021, while the remaining 100 HCCs contribute little aggregate difference.
Related work~\citep{kronick2025insurer} also shows that differential persistence and new incidence in the top ten diagnosis groups explain most of the MA--TM risk-score gap, with little contribution from the remaining HCCs.

The ten diagnosis groups identified by~\citep{kronick2025are} as most intensely coded in MA (which we refer to as the top-ten diagnosis groups throughout) are used to calibrate the direction and magnitude of the simulated
coding shifts.
For each targeted group $g$, the desired post-upcoding prevalence is
constructed as
\[
    p^{\mathrm{target}}_{g,m}
    =
    p^{(0)}_g + m\,(p^{\mathrm{MA}}_g-p^{\mathrm{TM}}_g),
\]
where $p^{(0)}_g$ is the baseline simulator prevalence, $p^{\mathrm{MA}}_g$ and $p^{\mathrm{TM}}_g$ are the Medicare Advantage and traditional Medicare prevalences reported in Exhibit 2 of \citet{kronick2025are}, and $m\in\{0.5,0.75,1.0\}$ indexes the low, main, and high upcoding scenarios.
The simulator converts these target prevalences into upcoding proportions, clips them to $[0,1]$, and uses one of two upcoding approaches, any-available or severity-based, depending on whether the targeted group has lower-severity conditions within the same clinical family. 
Under any-available upcoding, any beneficiary not already coded for a target HCC is eligible to be upcoded into that HCC; under severity-based upcoding, only beneficiaries already coded for a lower-severity related HCC are eligible. 
Applying the CMS-HCC V28 hierarchy then sets lower-severity HCC indicators to zero whenever a higher-severity HCC is present for a beneficiary.
The three scenarios are generated independently.

A target-mapping table is constructed as part of this upcoding calibration step.
Each row corresponds to a diagnosis group from the top-ten diagnosis groups and records: (i) the target HCC whose prevalence is targeted in the simulator, (ii) the V28 HCCs belonging to the same diagnosis group, and (iii) the group-level calibration score $r_g$, which is the group-level contribution to the MA--TM risk-score gap.
The simulator then uses this table, together with the baseline HCC matrix, to generate the low, main, and high upcoding scenarios.

We treat the baseline covariates as the pre-deployment feature distribution.
The manipulated covariates are used to calibrate and validate the coding shifts.
We use the full baseline cohort of $n=5000$ beneficiaries to compute the empirical mean and covariance of the HCCs with nonzero baseline prevalence, construct the semi-synthetic signal vector and manipulation-ease matrix, and evaluate each method using the closed-form post-manipulation MSE described in \Cref{sec:supp-semisynth-metrics}.

\subsection{Exploratory data analysis and calibration diagnostics}

The baseline is sparse.
In the baseline data, the mean number of HCCs per beneficiary is $1.52$, the median is $1$, and the maximum is $5$.
In the high-upcoding scenario, the corresponding mean is $1.79$. 
\Cref{fig:row_sum_distributions} reports the beneficiary-level distribution of these HCC counts across the baseline and calibrated upcoding scenarios.

\begin{figure}[!t]
    \centering
    \includegraphics[width=0.8\linewidth]{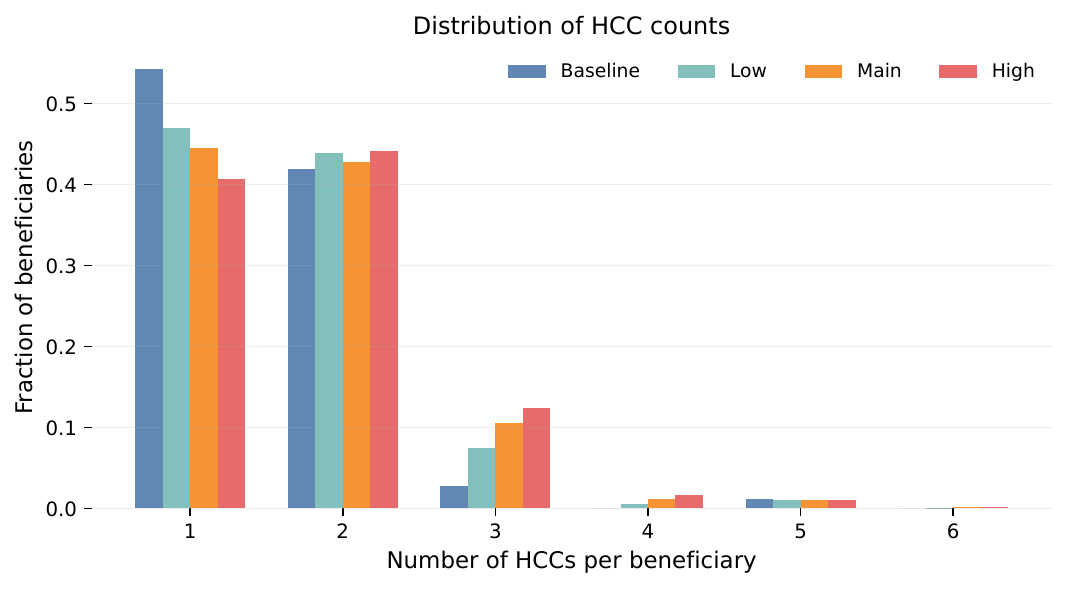}
    \caption{
    \textbf{Distribution of HCC counts per beneficiary.}
    The figure shows the number of V28 HCCs per beneficiary in the baseline data and in the low, main, and high upcoding scenarios.
    Only HCCs with nonzero baseline prevalence are included. 
    The baseline is sparse, with mean $1.52$ HCCs, median $1$, and maximum $5$.
    The upcoding scenarios increase coding intensity modestly.
    }
    \label{fig:row_sum_distributions}
\end{figure}

For each target HCC, the achieved prevalence after applying the V28 hierarchy closely matches the
intended prevalence.  
Across the low, main, and high scenarios, the mean absolute target-prevalence errors are $0.0002$, $0.0001$, and $0.0003$, respectively, and the largest absolute error over all targets and scenarios is $0.0010$.
The left panel of \Cref{fig:target_prevalence_combined} compares baseline, target, and achieved prevalences, while the right panel reports the realized prevalence shifts across scenarios.

\begin{figure}[t]
    \centering
    \begin{minipage}[c]{0.48\textwidth}
        \centering
        \includegraphics[width=\linewidth]{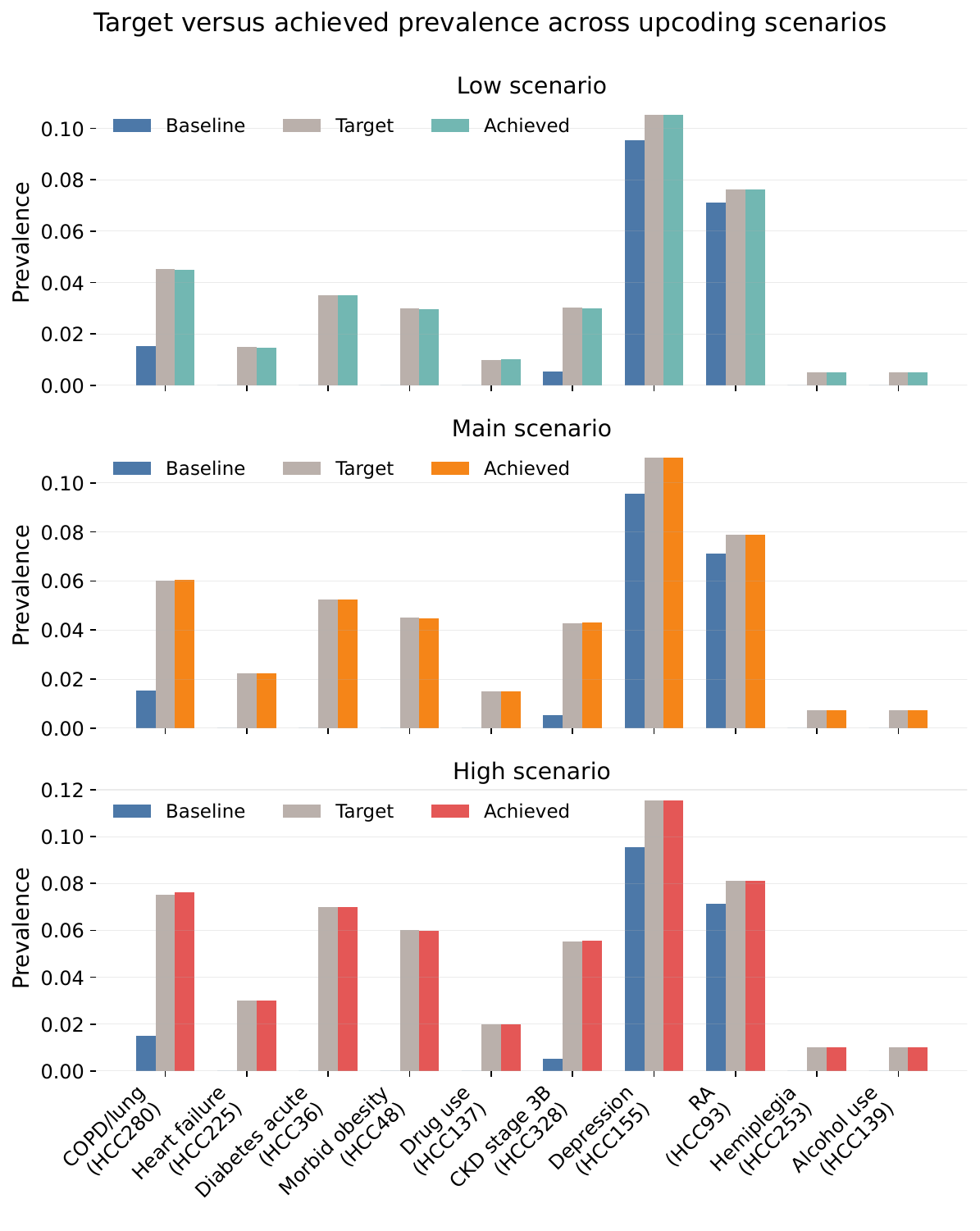}
    \end{minipage}\hfill
    \begin{minipage}[c]{0.52\textwidth}
        \centering
        \includegraphics[width=\linewidth]{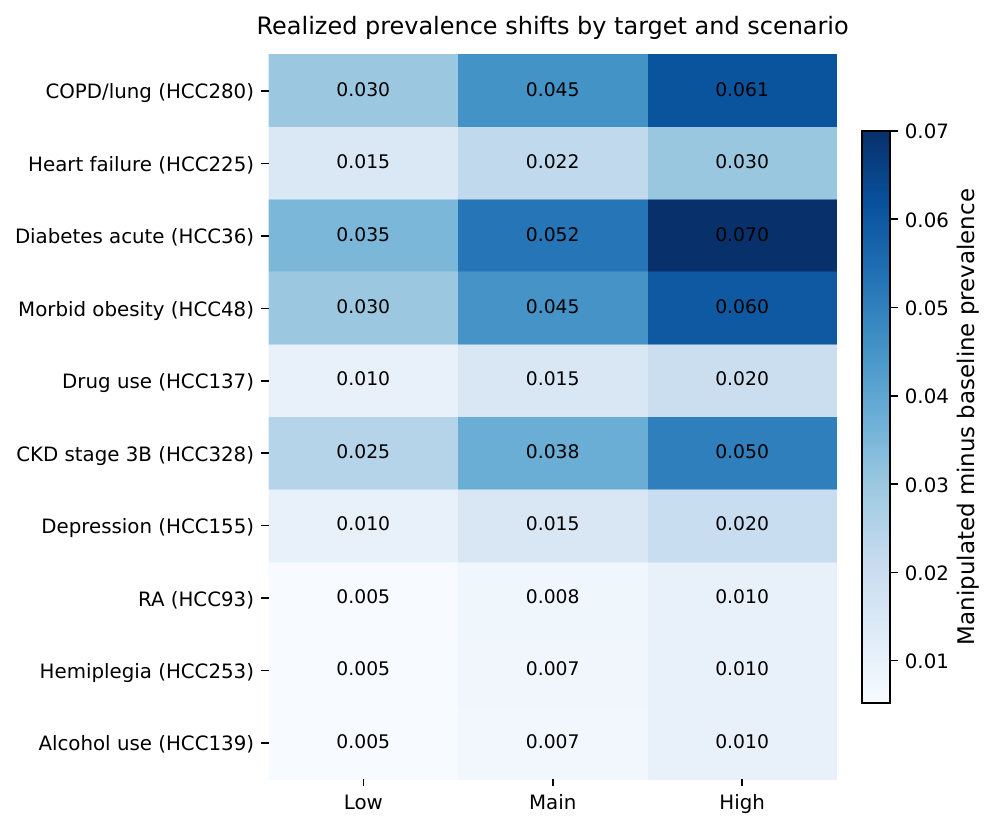}
    \end{minipage}
    \caption{
    \textbf{Calibration of simulated upcoding across target HCCs and scenarios.}
    (Left) Baseline, target, and achieved prevalence after applying the V28 hierarchy  for each target HCC under the low, main, and high scenarios.
    The achieved prevalences closely track the calibration targets.
    (Right) Realized manipulated-minus-baseline prevalence shifts after applying the V28 hierarchy.
    The low, main, and high scenarios scale the same diagnosis-level calibration targets, and the realized shifts are concentrated in the targeted top-ten diagnosis groups.
    }
    \label{fig:target_prevalence_combined}
\end{figure}

\subsection{Semi-synthetic outcome, strategic response, and metrics}
\label{sec:supp-semisynth-metrics}

Let $X_i^{(0)}\in\{0,1\}^{115}$ denote the baseline HCC vector for beneficiary
$i$.  We restrict the benchmark optimization to the $30$ HCCs with nonzero baseline prevalence and center
these features using the empirical baseline mean.  Writing the resulting centered
feature vector as $\widetilde X_i^{(0)}$, we construct the semi-synthetic outcome
as
\[
    Y_i = \widetilde X_i^{(0)\top}\theta^* + \varepsilon_i,
    \qquad
    \varepsilon_i\sim (0,\sigma^2),
\]
where $\theta^*$ is the published CMS-HCC V28 Community, NonDual, Aged
coefficient vector restricted to these 30 HCCs.  We set
$\sigma^2=0.25\,\widehat{\mathrm{Var}}(\widetilde X^{(0)\top}\theta^*)$.
The same $\theta^*$ is used both to generate outcomes and to compute the plug-in strategic $\MSE$.

For a fitted coefficient vector $\widehat\theta$, evaluation is based on the
endogenous strategic response model studied in the paper:
\[
    \widetilde X_i^{(\alpha,\widehat\theta)}
    =
    \widetilde X_i^{(0)} + \alpha H^{-1}\widehat\theta,
\]
where $\alpha\ge0$ is the manipulation-intensity parameter. The main figures
report the plug-in post-manipulation MSE
\begin{equation}
\label{eq:post-manip-mse}\mathrm{MSE}_{\alpha}(\widehat\theta)
    =
    (\widehat\theta-\theta^*)^\top\widehat\Sigma
    (\widehat\theta-\theta^*)
    +
    \alpha^2(\widehat\theta^\top H^{-1}\widehat\theta)^2
    +
    \sigma^2.
\end{equation}
This is the exact empirical analogue of the theoretical strategic MSE studied in the paper. 
We normalize most reported quantities by the full ridge value at the
same manipulation intensity, so values below one indicate an improvement over
full-support ridge.

\subsection{Cost matrix construction}
\label{sec:supp-semisynth-cost}

The manipulation cost matrix $H$ is not directly observed.
We therefore construct a proxy using available evidence from health policy on differential coding.
Since the strategic best response $a^*(\theta)=H^{-1}\theta$ depends on $H^{-1}$, it is convenient to build the inverse-cost matrix first.
We refer to $H^{-1}$ as the manipulation-ease matrix.
Larger entries of $H^{-1}$ mean that a fixed coefficient vector induces a larger strategic response, or equivalently, that manipulation is less costly in the corresponding direction.

The cost construction is fixed before fitting any model or choosing any support.
It uses three inputs: (i) the CMS-HCC V28 feature list; (ii) the baseline HCC matrix; and (iii) the target-mapping table, which links each calibrated top-ten diagnosis group to its V28 HCC members and records a group-level calibration score.
(Recall from \Cref{sec:data-description} that the realized upcoded matrices are used to check that these calibration targets are achieved.)

The final manipulation-ease matrix has the form:
\begin{equation}
\label{eq:Hinv-model}
    H^{-1}
    =
    \tau\{D+\rho B\}+\eta I.
\end{equation}
Here $D$ is diagonal and controls the marginal ease of manipulating each HCC. 
The matrix $B$ is a block matrix that introduces correlated manipulation within pre-specified HCC blocks based on the CMS-HCC V28 hierarchy. We refer to $D$ as the diagonal component and $B$ as the block component of the inverse-cost matrix.
The scalar $\tau$ controls the overall manipulation-intensity scale, $\rho$ controls the strength of the block component, and $\eta I$ is a small ridge term that ensures strict positive definiteness.
These scalar parameters are fixed across all experiments and summarized in \Cref{tab:case-study-cost-params}.
We next describe the construction of the matrices $D$ and $B$.

\begin{table}[!t]
\centering
\caption{Cost-construction parameters for the experiments in \Cref{sec:experiments}.}
\label{tab:case-study-cost-params}
\begin{tabular}{ccc}
\toprule
Parameter & Description & Value \\
\midrule
$\tau$ & global manipulation-intensity scale & $0.5$ \\
$\rho$ & correlated-cost scale & $0.5$ \\
$\eta$ & positive-definiteness ridge & $10^{-6}$ \\
$d_{\mathrm{floor}}$ & baseline diagonal manipulation ease for every HCC & $0.03$ \\
$\xi$ & moderation factor for HCCs in the top-ten diagnosis groups & 0.25 \\
\bottomrule
\end{tabular}
\end{table}

\paragraph{Construction of matrix $D$.}

We first construct diagonal scores over all V28 HCC columns and then restrict to the HCC columns with nonzero prevalence. We initialize every HCC at the floor: $d_j^{\mathrm{raw}} = d_{\mathrm{floor}}$.
For HCCs belonging to a diagnosis group in the target-mapping table, we increase its diagonal manipulation-ease score according to the normalized group-level calibration score.
Specifically, if HCC $j$ belongs to one of the top-ten diagnosis groups $g$, we update $d_j^{\mathrm{raw}} \leftarrow \max\{d_j^{\mathrm{raw}},\, d_{\mathrm{floor}}+s_g\}$, where $s_g := r_g/{\max_{g'} r_{g'}} \in [0,1]$ is the normalized (group-level) calibration score for that group $g$.
Thus the diagonal manipulation-ease scores are tied to the same diagnosis groups used to generate the upcoded scenarios.

Finally, we apply a moderation factor to the HCCs belonging to the top-ten diagnosis groups (i.e., $\texttt{HCC155},\texttt{HCC280},\texttt{HCC38},\texttt{HCC93},\texttt{HCC328},\texttt{HCC226}$, which are also highlighted in \Cref{fig:anyfix-selection-mechanism}).
Specifically, for HCC $j$ in this group, we set $d_j = d_{\mathrm{floor}} + \xi (d_j^{\mathrm{raw}}-d_{\mathrm{floor}})$, where $\xi$ is the moderation factor applied to the excess manipulation score.
The purpose of this moderation factor is to avoid making the conclusions degenerate.
The top-ten group membership is coarse group-level evidence of coding sensitivity, but not a direct estimate of an infinite coordinate-level 
manipulation ease.
If every HCC belonging to the top-ten diagnosis groups were assigned an extreme unmoderated ease score, the cost matrix would mechanically favor exclusion of these HCCs.
The moderation factor instead creates a non-degenerate selection problem: these HCCs remain easier to manipulate than baseline HCCs, but their predictive value and covariance structure can still justify retention under ridge shrinkage.
For all other HCCs, we set $d_j = d_j^{\mathrm{raw}}$.

\paragraph{Construction of matrix $B$.}
Upcoding practices for a single diagnosis are likely to affect closely related diagnoses as well. This intuition is captured in the block matrix $B$, which captures correlated manipulation within pre-specified HCC blocks.
These blocks are based on CMS-HCC V28 hierarchy branches and represent groups of HCCs that correspond to the same underlying condition or severity levels within the same condition.
\Cref{tab:hcc-family-blocks} lists the blocks used in our construction.
For each block, we add a positive-semidefinite
outer-product term and then normalize the resulting block matrix over the HCCs with nonzero prevalence. More precisely, for each listed HCC block $\mathcal G_\ell$ for $\ell = 1, \dots, L$, define a vector $v_\ell$ over all HCC columns by
\[
    (v_\ell)_j
    =
    \begin{cases}
    \sqrt{\max\{d_j,d_{\mathrm{floor}}\}},
    & j\in\mathcal G_\ell,\\
    0,
    & j\notin\mathcal G_\ell.
    \end{cases}
\]
We then set $B_0 = \sum_{\ell=1}^L v_\ell v_\ell^\top$.
This construction makes $B_0$ positive semidefinite.
Whenever $B_0$ is nonzero, we normalize it by its largest diagonal entry $B = \frac{B_0}{\max_j (B_0)_{jj}}$. 
Finally, we restrict to the HCC columns with nonzero baseline prevalence to get the manipulation-ease matrix.
Since $D, B \succeq 0$ and $\eta>0$, the final matrix is positive definite.
\Cref{fig:supp-cost-diagnostics} shows the resulting diagonal scores in the left panel and the full manipulation-ease matrix in the right panel.

\begin{table}[!t]
\centering
\caption{Pre-specified diagnosis group blocks used in the block component $B$.}
\label{tab:hcc-family-blocks}
\begin{tabular}{ll}
\toprule
Diagnosis group blocks & HCCs \\
\midrule
Diabetes & HCC36, HCC37, HCC38 \\
Acute or chronic heart failure, except end stage & HCC224, HCC225, HCC226 \\
Chronic kidney disease, moderate, stage 3 & HCC328, HCC329 \\
Substance-use disorder & HCC137, HCC138, HCC139 \\
Mental health & HCC151, HCC152, HCC153, HCC154, HCC155 \\
Vascular disease & HCC263, HCC264, HCC267 \\
Cancer & HCC20, HCC21, HCC22, HCC23 \\
\bottomrule
\end{tabular}
\end{table}

\begin{figure}[!t]
    \centering
    \includegraphics[width=0.99\linewidth]{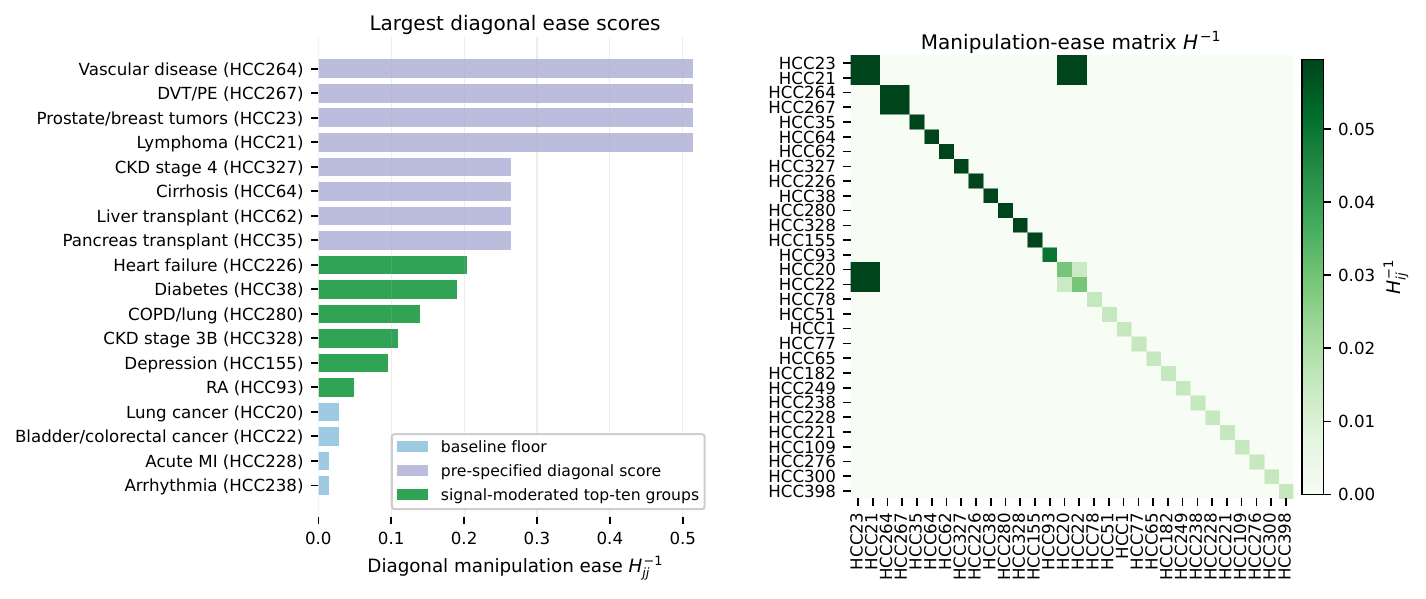}    
    \caption{
    \textbf{Visualization of the manipulation-ease matrix.}
    (Left) Largest HCC diagonal entries of the final $H^{-1}$, grouped by the source of the diagonal score.
    (Right) The final inverse-cost matrix over HCCs with nonzero baseline prevalence, including the block component.
    }
    \label{fig:supp-cost-diagnostics}
\end{figure}

It is worth remarking that the construction of the manipulation-ease matrix in \eqref{eq:Hinv-model} should be interpreted as a proxy, not as an estimated structural inverse cost matrix.
Its purpose in this paper is to encode three qualitative features of the Medicare Advantage coding setting: coding sensitivity is concentrated in a small number of diagnosis groups; coding opportunities can be correlated within certain diagnosis group blocks; and coding-sensitive diagnoses can still carry crucial predictive signal.
The various choices above are made with respect to these considerations.

\subsection{Baselines and tuning}
\label{sec:supp-semisynth-baselines}

All methods are evaluated using the same plug-in post-manipulation $\MSE_\alpha$ objective \eqref{eq:post-manip-mse}.
The main comparisons are summarized below.

\begin{enumerate}[itemsep=2pt,leftmargin=2em]
    \item Full ridge: ordinary ridge on all HCCs with nonzero baseline prevalence, with the ridge level chosen to minimize $\mathrm{MSE}_{\alpha}$.
    \item Top-ten exclusion: ridge after removing all HCCs with nonzero baseline prevalence that belong to the calibrated top-ten differential-coding groups.
    \item Prediction-only: retain the $k$ HCCs with nonzero baseline prevalence that have the largest absolute covariance with the outcome, $|\widehat\Sigma\theta^*|$, and then tune ridge on that support.
    \item Cost-only: retain the $k$ HCCs with nonzero baseline prevalence  that have the smallest diagonal manipulation-ease scores $H^{-1}_{jj}$, and then tune ridge on that support.
    \item Subset-only: use the same support as the proposed support-restricted ridge design but remove ridge shrinkage by setting $\lambda\approx0$.
    \item Ours: jointly choose a support of size $k$ and a ridge level using the $\MSE_\alpha$ objective, and then refit support-restricted ridge on the chosen support.
    \item Oracle $\widetilde{\OPT}$: the full-support zero-intercept oracle over all HCCs with nonzero baseline prevalence. It is not a deployable  method, since it lies outside the restricted estimator class, but serves as a benchmark for that class (see discussion in \Cref{sec:lower-bound}).
\end{enumerate}

For the oracle at manipulation intensity $\alpha$, the first-order condition has the generalized-ridge form \eqref{eq:gr-fixed-point},
which we solve by a scalar iterative fixed-point procedure.
All ridge-based methods are tuned over the common grid $\{10^{-6},10^{-3},0.003,0.01,0.03,0.1,0.3,1,3,10\}$.
Because our case study is a deterministic plug-in benchmark, we tune ridge levels by minimizing the plug-in strategic objective over this grid.
In finite-sample deployments, this tuning could instead be implemented by cross-validation.
In particular, for ridge regression, variants of cross-validation have consistency guarantees in non-strategic settings, even when these settings are high-dimensional and model-free; see, e.g., \citet{patil2021uniform,patil2022estimating}.
Extending these guarantees to strategic settings, along with joint support selection, is an interesting direction for future work.

\begin{figure}[!t]
    \centering
    \includegraphics[width=0.65\linewidth]{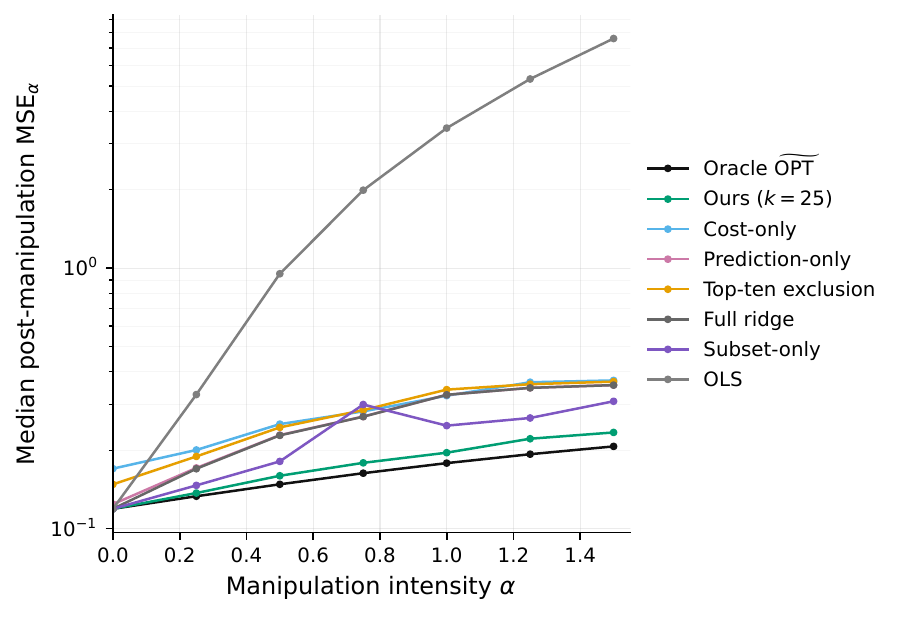}
    \caption{ 
    \textbf{Post-manipulation $\MSE$ under varying manipulation intensity $\alpha$ for all baselines.}
    Each curve reports median post-manipulation $\MSE$ as the manipulation intensity $\alpha$ varies, shown on a log scale.
    The proposed support-restricted ridge estimator with $k=25$ remains close to the zero-intercept oracle $\widetilde{\OPT}$ and improves substantially over full ridge, top-ten exclusion, prediction-only selection, cost-only selection, and subset-only selection.
    OLS is highly sensitive to manipulation and becomes unstable as $\alpha$ increases. 
    }
    \label{fig:supp-alpha-curves}
\end{figure}

\subsection{Supporting illustrations}
\label{sec:additional-results-expts}

\Cref{fig:supp-alpha-curves} extends the comparison under varying manipulation intensity in \Cref{fig:top10-anyfix-main} to all baselines.
The plot illustrates that 
unregularized subset selection and OLS-style estimators become unstable once manipulation is nontrivial, while prediction-only and cost-only rules fail to achieve the same tradeoff as joint support selection and ridge tuning.

\Cref{fig:anyfix-selection-mechanism} shows that, while the top-ten exclusion policy removes all intensely coded HCCs by construction, the proposed method retains some of these HCCs at the unconstrained optimum $k=20$ and all of them at the broader $k=25$ operating point.
This does not mean that all intensely coded variables should always be retained. Rather, it demonstrates that the optimal decision is joint:
an HCC with higher coding intensity can be retained when its predictive value and covariance structure make it sufficiently valuable, and when ridge shrinkage sufficiently controls the induced strategic exposure. \Cref{fig:supp-anyfix-policy-retention-path} shows the retention path as the support budget is varied.
As can be observed, the retention pattern is not a discontinuous artifact of a particular budget choice: several intensely coded HCCs enter before $k=25$, and all six are retained by the broader design.
\Cref{fig:supp-anyfix-signal-manipulability}
shows the same in the signal--manipulability plane.

\begin{figure}[!t]
    \centering
    \includegraphics[width=0.72\linewidth]{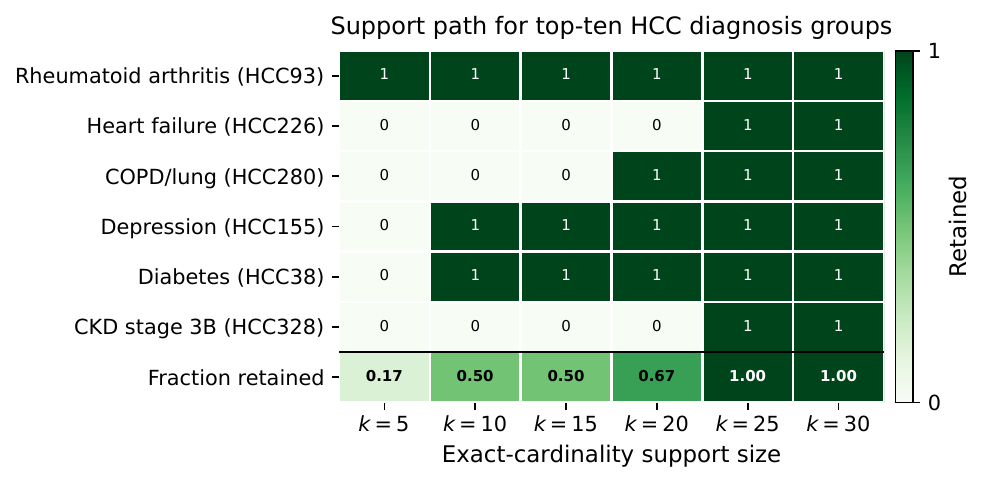}
    \caption{
    \textbf{Support path for top-ten diagnosis groups with nonzero prevalence.}
    Rows show the HCC groups that belong to the top-ten diagnosis groups identified by \citet{kronick2025are}, and columns show exact-cardinality support-restricted ridge solutions at different support sizes.
    The final row reports the fraction of these groups retained.
    The unconstrained optimum at $k=20$ already retains several of these groups, while the broader $k=25$ design retains all six groups with nonzero prevalence.
    }
    \label{fig:supp-anyfix-policy-retention-path}
\end{figure}

\begin{figure}[!t]
    \centering
    \includegraphics[width=0.65\linewidth]{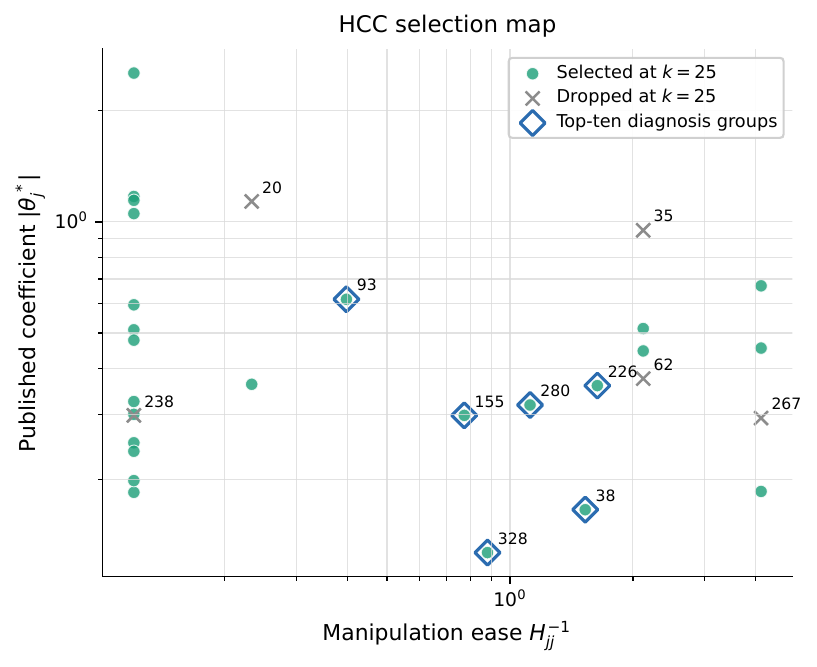}
    \caption{
    \textbf{Signal--manipulability selection map.}
    Each point is an HCC with nonzero prevalence in the data.
    Diamonds mark HCCs among the top-ten diagnosis groups identified by \citet{kronick2025are}, and labels are shown for the diagnosis groups that are dropped or belong to the top-ten set.
    The selected support is not determined by manipulation ease alone: the $k=25$ design retains several of these top-ten diagnosis groups with nontrivial signal while dropping other HCCs.
    }
    \label{fig:supp-anyfix-signal-manipulability}
\end{figure}

\section{Theoretical Details and Proofs for \Cref{sec:cost-uncertainty}}
\label{sec:app-cost-uncertainty}

Throughout this section, recall that we write $K:=H^{-1}$ and make the dependence of the strategic MSE on $K$ explicit:
\[
\MSE_K(\theta,b)
=
(\theta-\theta^*)^\top\Sigma(\theta-\theta^*)
+
(b+\theta^\top K\theta)^2
+
\sigma^2.
\]

\subsection{Proof of \Cref{prop:fixed-rule-worst-case}}

Fix $(\theta,b)$. 
The prediction term $(\theta-\theta^*)^\top\Sigma(\theta-\theta^*)$
does not depend on $K$. 
Thus
\[
\sup_{K\in\mathcal K}
\left\{
\MSE_K(\theta,b)-\sigma^2
\right\}
=
(\theta-\theta^*)^\top\Sigma(\theta-\theta^*)
+
\sup_{K\in\mathcal K}
(b+\theta^\top K\theta)^2 .
\]
Let $\underline q(\theta):=\inf_{K\in\mathcal K}\theta^\top K\theta$ and $\overline q(\theta):=\sup_{K\in\mathcal K}\theta^\top K\theta$.
Since $K\mapsto\theta^\top K\theta$ is continuous and $\mathcal K$ is compact, the extrema are attained. 
Moreover, for any $q\in[\underline q(\theta),\overline q(\theta)]$, the function $q\mapsto(b+q)^2$ is convex, so its maximum over the range is attained at an endpoint. 
Therefore
\[
\sup_{K\in\mathcal K}
(b+\theta^\top K\theta)^2
=
\max\{
(b+\underline q(\theta))^2,\,
(b+\overline q(\theta))^2
\}.
\]
This proves the first display. 
If $b=-\theta^\top\widehat K\theta,$ then $b+\theta^\top K\theta=\theta^\top(K-\widehat K)\theta$,
which gives the plug-in expression.

\subsection{Details for \Cref{ex:robustness-regularization-alone}}

Let $d=1$, $\Sigma=1$, $\theta^*=1$, and $\mathcal K=[1/2,5/2]$. 
The nominal estimate is $\widehat K=3/2$. 
The plug-in intercept correction deploys $(\theta,b)=(1,-3/2)$. 
Its worst-case excess strategic MSE is
\[
\sup_{K\in[1/2,5/2]}(K-3/2)^2
=
1.
\]

For ridge with $\lambda=1$, the coefficient is
\[
\theta^{\OR}(\lambda)=\frac{1}{1+\lambda}=\frac12 .
\]
Its worst-case excess strategic MSE therefore is
\[
\begin{aligned}
\left(\frac12-1\right)^2
+
\sup_{K\in[1/2,5/2]}
\left(K\left(\frac12\right)^2\right)^2
=
\frac14
+
\left(\frac{5}{2}\cdot\frac14\right)^2
=
\frac14+\frac{25}{64}
=
\frac{41}{64}
<1.
\end{aligned}
\]

\subsection{Details for \Cref{ex:robustness-feature-selection-regularization}}

Consider
\[
    \theta^*=(1,0),
    \qquad
    \Sigma=
    \begin{pmatrix}
    1 & 49/50\\
    49/50 & 1
    \end{pmatrix},
\]
and
\[
    K(k)=
    \begin{pmatrix}
    k & 0\\
    0 & 1/2
    \end{pmatrix},
    \qquad
    k\in[1/2,5/2].
\]
The nominal estimate is $\widehat K=K(3/2)$.

For the plug-in intercept correction with slope $\theta^*$, the deployed rule is $(\theta,b)=\left((1,0),-3/2\right)$.
Since $\theta^{*\top}K(k)\theta^* = k$, the worst-case excess strategic MSE is
\[
\sup_{k\in[1/2,5/2]}(k-3/2)^2=1.
\]

Now consider full-support ridge with $\lambda=1$. 
Since $\Sigma\theta^*=(1, 49/50)$, we have
\[
\theta^{\OR}([2],1)
=
(\Sigma+I)^{-1}\Sigma\theta^*
=
\left(
\frac{2599}{7599},
\frac{2450}{7599}
\right).
\]
The worst case occurs at $k=5/2$, and direct substitution gives the worst-case excess strategic MSE of approximately $0.240$.

Next consider support-restricted ridge on $\mathsf S=\{2\}$ with $\lambda=1/4$. 
The retained coefficient is
\[
\beta
=
\frac{\Sigma_{2y}}{\Sigma_{22}+\lambda}
=
\frac{49/50}{1+1/4}
=
\frac{98}{125}.
\]
Thus
\[
\theta^{\OR}(\{2\},1/4)
=
\left(0,\frac{98}{125}\right).
\]
The prediction term is
\[
1
-
2\left(\frac{49}{50}\right)\left(\frac{98}{125}\right)
+
\left(\frac{98}{125}\right)^2 .
\]
The strategic term is independent of $k$, because the first coordinate is dropped:
\[
\left[
\left(0,\frac{98}{125}\right)^\top
K(k)
\left(0,\frac{98}{125}\right)
\right]^2
=
\left[
\frac12
\left(\frac{98}{125}\right)^2
\right]^2 .
\]
Combining the two terms yields the worst-case excess strategic MSE of approximately $0.172 < 0.240$.

Finally, support-restricted least squares on $\mathsf S=\{2\}$ gives $\theta^{\LS}(\{2\}) = (0, 49/50)$.
Its worst-case excess strategic MSE is
\[
1-\left(\frac{49}{50}\right)^2
+
\left[
\frac12
\left(\frac{49}{50}\right)^2
\right]^2
\approx 0.270.
\]
Thus, support restriction alone is not as effective as combining support restriction with ridge shrinkage in this example.

\end{document}